\newcommand{\rc}[1]{\frac{1}{#1}}
\newcommand{\fc}[2]{\frac{#1}{#2}}
\newcommand{\an}[1]{\left\langle {#1}\right\rangle}
\newcommand{\pa}[1]{\left( {#1} \right)}
\newcommand{\ve}[1]{\left\Vert {#1}\right\Vert}
\newcommand{\pl}[0]{\partial}
\newcommand{\nb}[0]{\nabla}
\newcommand{\de}[0]{\delta}
\newcommand{\De}[0]{\Delta}
\newcommand{\ep}[0]{\varepsilon}
\newcommand{\Cov}[0]{\operatorname{Cov}}
\newcommand{\ub}[2]{\underbrace{#1}_{#2}}
\newcommand{\wh}{\widehat}
\renewcommand{\top}{\intercal}
\newcommand{\quo}[0]{q^{\rightarrow}}
\def\balign#1\ealign{\begin{align}#1\end{align}}
\def\baligns#1\ealigns{\begin{align*}#1\end{align*}}
\def\balignat#1\ealign{\begin{alignat}#1\end{alignat}}
\def\balignats#1\ealigns{\begin{alignat*}#1\end{alignat*}}
\def\bitemize#1\eitemize{\begin{itemize}#1\end{itemize}}
\def\benumerate#1\eenumerate{\begin{enumerate}#1\end{enumerate}}
\newenvironment{talign*}
 {\csname align*\endcsname}
 {\endalign}
\newenvironment{talign}
 {\csname align\endcsname}
 {\endalign}
\def\balignst#1\ealignst{\begin{talign*}#1\end{talign*}}
\def\balignt#1\ealignt{\begin{talign}#1\end{talign}}
\let\originalleft\left
\let\originalright\right
\renewcommand{\left}{\mathopen{}\mathclose\bgroup\originalleft}
\renewcommand{\right}{\aftergroup\egroup\originalright}
\def\tinycitep*#1{{\tiny\citep*{#1}}}
\def\tinycitealt*#1{{\tiny\citealt*{#1}}}
\def\tinycite*#1{{\tiny\cite*{#1}}}
\def\smallcitep*#1{{\scriptsize\citep*{#1}}}
\def\smallcitealt*#1{{\scriptsize\citealt*{#1}}}
\def\smallcite*#1{{\scriptsize\cite*{#1}}}
\renewenvironment{proof}{\noindent\textbf{Proof.}\hspace*{.3em}}{\qed \vspace{.1in}}
\newenvironment{proof-sketch}{\noindent\textbf{Proof Sketch}
  \hspace*{1em}}{\qed\bigskip\\}
\newenvironment{proof-idea}{\noindent\textbf{Proof Idea}
  \hspace*{1em}}{\qed\bigskip\\}
\newenvironment{proof-of-lemma}[1][{}]{\noindent\textbf{Proof of Lemma {#1}}
  \hspace*{1em}}{\qed\\}
\newenvironment{proof-of-theorem}[1][{}]{\noindent\textbf{Proof of Theorem {#1}}
  \hspace*{1em}}{\qed\\}
\newenvironment{proof-attempt}{\noindent\textbf{Proof Attempt}
  \hspace*{1em}}{\qed\bigskip\\}
\renewcommand{\paragraph}{%
  \@startsection{paragraph}{4}%
  {\z@}{1.25ex \@plus 1ex \@minus .2ex}{-1em}%
  {\normalfont\normalsize\bfseries}%
}
\declaretheorem[name=Corollary]{cor}
\declaretheorem[name=Lemma]{lem}
\declaretheorem[name=Remark, style=remark]{rmk}
\declaretheorem[name=Theorem]{thm}
\declaretheorem[name=Assumption]{asm}
\begin{document}

\title{The probability flow ODE is provably fast}

 \author{\!\!\!\!\!
  Sitan Chen\thanks{
  Department of EECS at University of California, Berkeley, \texttt{sitan@seas.harvard.edu}.
 }
  \ \ \
 Sinho Chewi\thanks{
  Department of Mathematics at
  Massachusetts Institute of Technology, \texttt{schewi@mit.edu}.
 }
  \ \ \
 Holden Lee\thanks{Department of Applied Mathematics and Statistics at Johns Hopkins University, \texttt{hlee283@jhu.edu}.}
 \ \ \
 Yuanzhi Li\thanks{Microsoft Research and Department of Machine Learning at Carnegie Mellon University, \texttt{yuanzhil@andrew.cmu.edu}.
 }
  \ \ \
 Jianfeng Lu\thanks{Department of Mathematics at Duke University, \texttt{jianfeng@math.duke.edu}.
 }
  \ \ \
 Adil Salim\thanks{Microsoft Research, \texttt{adilsalim@microsoft.com}.
 }
}

\maketitle

\begin{abstract}
We provide the first polynomial-time convergence guarantees for the probability flow ODE implementation (together with a corrector step) of score-based generative modeling.
Our analysis is carried out in the wake of recent results obtaining such guarantees for the SDE-based implementation (\emph{i.e.}, denoising diffusion probabilistic modeling or DDPM), but requires the development of novel techniques for studying deterministic dynamics without contractivity.
Through the use of a specially chosen corrector step based on the underdamped Langevin diffusion, we obtain better dimension dependence than prior works on DDPM ($O(\sqrt{d})$ vs.\ $O(d)$, assuming smoothness of the data distribution)\@, highlighting potential advantages of the ODE framework.
\end{abstract}

\section{Introduction}

Score-based generative models (SGMs)~\cite{sohletal2015nonequilibrium, sonerm2019estimatinggradients, ho2020denoising, dhanic2021diffusionbeatsgans, songetal2021mlescorebased, songetal2021scorebased, vahkrekau2021scorebased} are a class of generative models which includes prominent image generation systems such as DALL$\cdot$E~2~\cite{ramesh2022hierarchical}.
Their startling empirical success at data generation across a range of application domains has made them a central focus of study in the literature on deep learning~\cite{Ausetal21ddmdiscrete, dhanic2021diffusionbeatsgans, kingetal2021variationaldiffusion, Shietal21gradfields, ChuSimYe22ccdf, Gnaetal22sgmmolecule, Rometal22latentdiff, Songetal22inversesgm, BofVan23probflowfp, WanHunZho23policyclass}.
In this paper, we aim to provide theoretical grounding for such models and thereby elucidate the mechanisms driving their remarkable performance.

Our work follows in the wake of numerous recent works which have provided convergence guarantees for denoising diffusion probabilistic models (DDPMs)~\cite{debetal2021scorebased, BloMroRak22genmodel, chen2022improved, DeB22diffusion, leelutan22sgmpoly, liu2022let, Pid22sgm, WibYan22sgm, Chenetal23diffmodels, leelutan23sgmgeneral} and denoising diffusion implicit models (DDIMs)~\cite{CheDarDim23ddim}. 
We briefly recall that the generating process for SGMs is the time reversal of a certain diffusion process, and that DDPMs hinge upon implementing the reverse diffusion process as a stochastic differential equation (SDE) whose coefficients are learned via neural network training and the statistical technique of score matching~\cite{hyv2005scorematching, vin2011scorematching} (more detailed background is provided in \S\ref{sec:prelim}).
Among these prior works, the concurrent results of~\cite{Chenetal23diffmodels, leelutan23sgmgeneral} are remarkable because they require minimal assumptions on the data distribution (in particular, they do not assume log-concavity or similarly restrictive conditions) and they hold when the errors incurred during score matching are only bounded in an $L^2$ sense, which is both natural in view of the derivation of score matching (see \cite{hyv2005scorematching, vin2011scorematching}) and far more realistic.\footnote{It is unreasonable, for instance, to assume that the score errors are bounded in an $L^\infty$ sense, 
since we cannot hope to learn the score function in regions of the state space which are not well-covered by the training data.}
Subsequently, the work of~\cite{chen2022improved} significantly sharpened the analysis in the case when no smoothness assumptions are imposed on the data distribution.

Taken together, these works paint an encouraging picture of our understanding of DDPMs which takes into account both the diversity of data in applications (including data distributions which are highly multimodal or supported on lower-dimensional manifolds), as well as the non-convex training process which is not guaranteed to accurately learn the score function uniformly in space.

Besides DDPMs, instead of implementing the time reversed diffusion as an SDE\@, it is also possible to implement it as an ordinary differential equation (ODE), called the \emph{probability flow ODE}~\cite{songetal2021scorebased}; see \S\ref{sec:prelim}. The ODE implementation is often claimed to be faster than the SDE implementation~\cite{Luetal22dpmsolver, ZhaChe23expint}, with the rationale being that ODE discretization is often more accurate than SDE discretization, so that one could use a larger step size.
Indeed, the discretization error usually depends on the regularity of the trajectories, which is $\mc C^1$ for ODEs but only $\mc C^{\frac{1}{2}-}$ for SDEs (\emph{i.e.}, H\"older continuous with any exponent less than $\frac{1}{2}$) due to the roughness of the Brownian motion driving the evolution.

Far from being able to capture this intuition, current analyses of SGMs cannot even provide a \emph{polynomial-time} analysis of the probability flow ODE\@.
The key issue is that under our minimal assumptions (\emph{i.e.}, without log-concavity of the data distribution), the underlying dynamics of either the ODE or SDE implementation are not contractive, and hence small errors quickly accumulate and are magnified.
The aforementioned analyses of DDPMs managed to overcome this challenge by leveraging techniques specific to the analysis of SDEs, through which we now understand that \emph{stochasticity} plays an important role in alleviating error accumulation.
It is unknown, however, how to carry out the analysis for the purely deterministic dynamics inherent to the probability flow ODE\@.

Our first main contribution is to give the first convergence guarantees for SGMs in which steps of the discretized probability flow ODE---referred to as \emph{predictor steps}---are interleaved with \emph{corrector steps} which runs the overdamped Langevin diffusion with estimated score,  as pioneered in~\cite{songetal2021scorebased}.
Our results are akin to prior works on DDPMs in that they hold under minimal assumptions on the data distribution and under $L^2$ bounds on the score estimation error, and our guarantees scale polynomially in all relevant problem parameters.
Here, the corrector steps inject stochasticity which is crucial for our proofs; however, we emphasize that the use of corrector steps does \emph{not} simply reduce the problem to applying existing DDPM analyses.
Instead, we must develop an entirely new framework based on Wasserstein--to--TV regularization, which is of independent interest; see \S\ref{sec:pf_overview} for a detailed overview of our techniques.
Our results naturally raise the question of whether the corrector steps are necessary in practice, and we discuss this further in \S\ref{sec:conclusion}.

When the data distribution is log-smooth, then the dimension dependence of prior results on DDPMs, as well as our first result for the probability flow ODE with overdamped corrector, both scale as $O(d)$. Does this contradict the intuition that ODE discretization is more accurate than SDE discretization?
The answer is \emph{no}; upon inspecting our proof, we see that the discretization error of the probability flow ODE is indeed smaller than what is incurred by DDPMs, and in fact allows for a larger step size of order $1/\sqrt d$.
The bottleneck in our result stems from the use of the overdamped Langevin diffusion for the corrector steps.
Taking inspiration from the literature on log-concave sampling (see, \emph{e.g.},~\cite{chewisamplingbook} for an exposition), our second main contribution is to propose corrector steps based on the \emph{underdamped} Langevin diffusion (see \S\ref{sec:prelim}) which is known to improve the dimension dependence of sampling.
In particular, we show that the probability flow ODE with underdamped Langevin corrector attains $O(\sqrt d)$ dimension dependence.
This dependence is better than what was obtained for DDPMs in~\cite{chen2022improved, Chenetal23diffmodels, leelutan23sgmgeneral} and therefore highlights the potential benefits of the ODE framework.

Previously,~\cite{jain2022journey} have proposed a ``noise--denoise" sampler using the underdamped Langevin diffusion, but to our knowledge, our work is the first to use it in conjunction with the probability flow ODE.
Although we provide preliminary numerical experiments in the Appendix, we leave it as a question for future work to determine whether the theoretical benefits of the underdamped Langevin corrector are also borne out in practice. 


\subsection{Our contributions}
In summary, our contributions are the following.
\begin{itemize}
\item We provide the first convergence guarantees for the probability flow ODE with overdamped Langevin corrector (\DPOM{}; Algorithm~\ref{alg:over}).
\item We propose an algorithm based on the probability flow ODE with underdamped Langevin corrector (\DPUM{}; Algorithm~\ref{alg:under}).
\item We provide the first convergence guarantees for {\DPUM}. These convergence guarantees show improvement over (i) the complexity of {\DPOM} ($O(\sqrt{d})$ vs $O(d)$) and (ii) the complexity of DDPMs, \textit{i.e.}, SDE implementations of score-based generative models (again, $O(\sqrt{d})$ vs $O(d)$).
\item We provide preliminary numerical experiments in a toy example showing that {\DPUM} can sample from a highly non log-concave distribution (see Appendix). 
\end{itemize}
Our main theorem can be summarized informally as follows; see \S\ref{sec:results} for more detailed statements.

\begin{thm}[Informal]
    Assume that the score function along the forward process is $L$-Lipschitz, and that the data distribution has finite second moment.
    Assume that we have access to $\widetilde O(\varepsilon/\sqrt L)$ $L^2$-accurate score estimates.
    Then, the probability flow ODE implementation of the reversed Ornstein{--}Uhlenbeck process, when interspersed with either the overdamped Langevin corrector (\DPOM{}; Algorithm~\ref{alg:over}) or with the underdamped Langevin corrector (\DPUM{}; Algorithm~\ref{alg:under}), outputs a sample whose law is $\varepsilon$-close in total variation distance to the data distribution, using $\widetilde O(L^3 d/\varepsilon^2)$ or $\widetilde O(L^2 \sqrt d/\varepsilon)$ iterations respectively.
\end{thm}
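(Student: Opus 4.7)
The plan is to decompose the total error into three sources: (i) the initialization error incurred by starting the reverse process from a pure Gaussian rather than from the true forward-time marginal at terminal time $T$, (ii) the predictor error incurred by discretizing the probability flow ODE and using an estimated score, and (iii) the corrector error incurred by finite-time Langevin smoothing with imperfect score. The initialization piece is handled by exponential convergence of the Ornstein{--}Uhlenbeck semigroup, which fixes $T = \wt O(\log(1/\varepsilon))$ and contributes only logarithmically. The heart of the argument is to alternate a predictor analysis carried out in a Wasserstein-type metric with a corrector analysis carried out in total variation, using the diffusive corrector to periodically \emph{regularize} the non-contractive ODE error back into TV.

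For the predictor step, the probability flow vector field inherits the Lipschitz constant $L$ of the score, so a Grönwall argument controls, for a single step of size $h$, the $W_2$ distance between the exact reverse flow and its exponential-integrator discretization computed with the estimated score. This yields a one-step bound of the shape $W_2(\text{new}) \le (1+O(Lh))\,W_2(\text{old}) + \text{local}$, where the local term consists of an ODE discretization contribution (scaling like $h^{3/2}\sqrt{d}$, using the $\mc C^1$ regularity of the true flow) plus a term coming from the $L^2$ score error. Crucially, TV cannot be propagated through this step because deterministic transport preserves relative densities along trajectories.

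The main new ingredient is a Wasserstein--to--TV regularization estimate for the corrector. Schematically, I would prove: if $\nu$ is $W_2$-close to a target $\mu$ whose log-density has $L$-Lipschitz gradient, then running the Langevin dynamics targeting $\mu$ for time $t_{\text{corr}}$ yields $\nu'$ with
\begin{equation*}
  \mathrm{TV}(\nu', \mu) \ \lesssim \ \frac{\sqrt{L}\,W_2(\nu,\mu)}{\sqrt{t_{\text{corr}}}} + (\text{score error terms}),
\end{equation*}
in the overdamped case, with the factor $1/\sqrt{t_{\text{corr}}}$ improved to $1/t_{\text{corr}}$, together with a milder dimension dependence, in the underdamped case. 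For the overdamped version this should follow from a Fokker{--}Planck/entropy-dissipation argument bounding the relative Fisher information of a Gaussian-smoothed perturbation, giving an $O(d)$ scaling. For the underdamped version, the analogous bound must exploit the hypocoercivity of the kinetic Fokker{--}Planck operator---via a twisted $H^1$-type Lyapunov functional in the spirit of Villani---which is where the $O(\sqrt{d})$ improvement familiar from the log-concave sampling literature is extracted.

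Combining these pieces iteratively along the $N = T/h$ predictor steps, each followed by a corrector of duration $t_{\text{corr}}$, and using the TV triangle inequality together with a lifting of TV back to $W_2$ between successive correctors (via Lipschitz-score second-moment control), one obtains the claimed $\wt O(L^3 d/\varepsilon^2)$ and $\wt O(L^2\sqrt{d}/\varepsilon)$ iteration counts after balancing $h$, $t_{\text{corr}}$, and the per-step score accuracy. I expect the main obstacle to be the underdamped regularization lemma: obtaining a sharp $W_2 \to \mathrm{TV}$ bound for kinetic Langevin under only a Lipschitz-score hypothesis, with no convexity available, requires a Lyapunov functional finely tuned to the degenerate noise of the hypoelliptic generator, and it is precisely there that the $\sqrt{d}$ gain over the SDE-based DDPM analyses must appear.
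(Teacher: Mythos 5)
Your overall architecture is the same as the paper's: initialization error via OU mixing and data processing, predictor error in $W_2$ via Grönwall, and a ``Wasserstein-to-TV'' regularization lemma for the corrector. The overdamped regularization lemma you sketch is essentially what the paper proves by invoking a reverse log-Sobolev inequality of Bobkov--Gentil--Ledoux, and your underdamped route via a Villani-style hypocoercive Lyapunov functional is a plausible alternative to the paper's use of Guillin--Wang's coupling-based Wasserstein-to-entropy bound; either would serve.

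However, there is a genuine gap that would prevent your plan from reaching $\widetilde O(L^2\sqrt{d}/\varepsilon)$ for the underdamped case. You describe the one-step predictor local error as ``scaling like $h^{3/2}\sqrt{d}$, using the $\mathcal C^1$ regularity of the true flow''---but this is internally inconsistent and, more importantly, corresponds to the \emph{weaker} H\"older-$\tfrac12$ score perturbation bound of prior work. If one only has $\E\bigl[\norm{\nabla \ln q_t(x_t) - \nabla \ln q_s(x_s)}^2\bigr] \lesssim L^2 d h$, then the per-step $W_2$ error is $L d^{1/2} h^{3/2}$, which forces $\hpred \propto 1/d$ and yields an $O(d)$ predictor complexity, washing out any gain from the underdamped corrector. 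The paper's key new ingredient, absent from your proposal, is an improved \emph{score perturbation lemma}: it bounds $\E\bigl[\norm{\partial_t \nabla \ln q_t(x_t)}^2\bigr] \lesssim L^2 d\,(L \vee 1/t)$ along the probability flow ODE, giving genuine Lipschitz-in-time regularity and hence a per-step $W_2$ error of order $L^{3/2} d^{1/2} h^2$, which is what allows $\hpred \asymp 1/\sqrt{d}$. Proving it requires expressing the score derivative via the posterior $P_{0\mid t}$, exploiting its strong log-concavity for small $t$, and applying a transport--entropy inequality together with a KL computation between joint laws. You also attribute the full $\sqrt{d}$ gain to the underdamped regularization lemma, whereas in the paper the gain comes jointly from (a) the improved score perturbation (predictor side) and (b) the $\mathcal C^1$ smoothness of the position component in kinetic Langevin, which makes the Girsanov discretization error scale like $d^{1/2}h$ rather than $(dh)^{1/2}$ (corrector side); without both, the bottleneck remains at $O(d)$.
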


Our result provides the \emph{first} polynomial-time guarantees for the probability flow ODE implementation of SGMs, so long as it is combined with the use of corrector steps. Moreover, when the corrector steps are based on the underdamped Langevin diffusion, then the dimension dependence of our result is significantly smaller ($O(\sqrt d)$ vs.\ $O(d)$) than prior works on the complexity of DDPMs, and thus provides justification for the use of ODE discretization in practice, compared to SDEs.

Our main assumption on the data is that the score functions along the forward process are Lipschitz continuous, which allows for highly non-log-concave distributions, yet does not cover non-smooth distributions such as distributions supported on lower-dimensional manifolds.
However, as shown in~\cite{chen2022improved, Chenetal23diffmodels, leelutan23sgmgeneral}, we can also obtain polynomial-time guarantees without this smoothness assumption via early stopping (see Remark~\ref{rmk:wo_lip_score}).

\subsection{Related works}

The idea of using a time-reversed diffusion for sampling has been fruitfully exploited in the log-concave sampling literature via the \emph{proximal sampler}~\cite{titpap18auxiliary, leeshentian2021rgo, CheEld22localization, liachen22nonsmooth, fanyuanchen23improvedproximal, liache23prox}, as put forth in~\cite{chenetal2022proximalsampler}, as well as through algorithmic stochastic localization~\cite{ElAMonSel22samplingsk, MonWu23spikeddiffusion}. We also note that the recent work of~\cite{CheDarDim23ddim} obtained a discretization analysis for the probability flow ODE (without corrector) in KL divergence, though their bounds have a large dependence on $d$ and are exponential in the Lipschitz constant of the score integrated over time.
\section{Preliminaries}\label{sec:prelim}


\subsection{Score-based generative modeling}

Let $\qdata$ denote the data distribution, \emph{i.e.}, the distribution from which we wish to sample.
In score-based generative modeling, we define a forward process ${(\quo_t)}_{t\ge 0}$ with $\quo_0 = \qdata$, which transforms our data distribution into noise.
In this paper, we focus on the canonical choice of the Ornstein{--}Uhlenbeck (OU) process,
\begin{align}\label{eq:forward}
    \D x_t^\rightarrow
    = -x_t^\rightarrow \, \D t + \sqrt 2 \, \D B_t\,, \qquad x_0^\rightarrow \sim \qdata\,, \qquad \quo_t \deq \law(x_t^\rightarrow)\,,
\end{align}
where ${(B_t)}_{t\ge 0}$ is a standard Brownian motion in $\R^d$.
It is well-known that the OU process mixes rapidly (exponentially fast) to its stationary distribution, the standard Gaussian distribution $\gamma^d$.

Once we fix a time horizon $T > 0$, the time reversal of the SDE defined in~\eqref{eq:forward} over $[0,T]$ is given by
\begin{align}\label{eq:reverse_sde}
    \D x_t^\leftarrow
    &= (x_t^\leftarrow + 2\,\nabla \ln q_t^\leftarrow(x_t^\leftarrow)) \, \D t + \sqrt 2 \, \D B_t\,,
\end{align}
where $q_t^\leftarrow \deq q_{T-t}^\rightarrow$, and the reverse SDE is a generative model: when initialized at $x_0^\leftarrow \sim q_0^\leftarrow$, then $x_T^\leftarrow \sim q$.
Since $q_0^\leftarrow = q_T^\rightarrow \approx \gamma^d$, the reverse SDE transforms samples from $\gamma^d$ (i.e., pure noise) into approximate samples from $\qdata$.
In order to implement the reverse SDE, however, one needs to estimate the score functions $\nabla \ln q_t^\leftarrow$ for $t\in [0,T]$ using the technique of score matching~\cite{hyv2005scorematching, vin2011scorematching}.
In practice, the score estimates are produced via a deep neural network, and our main assumption is that these score estimates are accurate in an $L^2$ sense (see Assumption~\ref{ass:score_error}).
This gives rise to the denoising diffusion probabilistic modeling (DDPM) algorithm.

\paragraph*{Notation.} Since the reverse process is the primary object of interest, we drop the arrow $\leftarrow$ from the notation for simplicity; thus, $q_t \deq q_t^\leftarrow$.
We will always denote the forward process with the arrow $\rightarrow$.

For each $t \in [0,T]$, let $s_t$ denote the estimate for the score $\nabla \ln q_t = \nabla \ln q_t^\leftarrow$.

\subsection{Probability flow ODE (predictor steps)}

Instead of running the reverse SDE~\eqref{eq:reverse_sde}, there is in fact an alternative process ${(x_t)}_{t\in [0,T]}$ which evolves according to an ODE (and hence evolves deterministically), and yet has the same marginals as~\eqref{eq:reverse_sde}.
This alternative process, called the \emph{probability flow ODE}, can also be used for generative modeling.

One particularly illuminating way of deriving the probability flow ODE is to invoke the celebrated theorem, due to~\cite{jko}, that the OU process is the Wasserstein gradient flow of the KL divergence functional (i.e. relative entropy) $\KL(\cdot \mmid \gamma^d)$.
From the general theory of Wasserstein gradient flows (see~\cite{ags, san15ot}), the Wasserstein gradient flow ${(\mu_t)}_{t\ge 0}$ of a functional $\cF$ can be implemented via the dynamics
\begin{align*}
    \dot z_t = -[\nabla_{W_2} \cF(\mu_t)](z_t)\,, \qquad z_0 \sim \mu_0\,,
\end{align*}
in that $z_t\sim \mu_t$ for all $t\ge 0$. Applying this to $\cF \deq \KL(\cdot \mmid \gamma^d)$, we arrive at the forward process
\begin{align}\label{eq:forward_ode}
    \dot x_t^\rightarrow
    &= -\nabla \ln\Bigl(\frac{q_t^\rightarrow}{\gamma^d}\Bigr)(x_t^\rightarrow)
    = -x_t^\rightarrow -\nabla \ln q_t^\rightarrow(x_t^\rightarrow)\,.
\end{align}
Setting $x_t \deq x^\rightarrow_{T-t}$, it is easily seen that the time reversal of~\eqref{eq:forward_ode} is
\begin{align}\label{eq:prob_flow_ode}
    \dot x_t
    = x_t + \nabla \ln q_t(x_t)\,, \quad \textit{i.e.,}\quad \dot x_t
    = x_t + \nabla \ln q_{T-t}^\rightarrow(x_t)\,,
\end{align}
which is called the probability flow ODE\@. 
In this paper, the interpretation of the probability flow ODE as a reverse Wasserstein gradient flow is only introduced for interpretability, and the reader who is unfamiliar with Wasserstein calculus can take~\eqref{eq:prob_flow_ode} to be the definition of the probability flow ODE\@.
Crucially, it has the property that if $x_0 \sim q_0$, then $x_t \sim q_t$ for all $t\in [0,T]$.

We can discretize the ODE~\eqref{eq:prob_flow_ode}. Fixing a step size $h > 0$, replacing the score function $\nabla \ln q_t$ with the estimated score given by $s_t$, and applying the exponential integrator to the ODE (i.e., exactly integrating the linear part), we arrive at the discretized process
\begin{align}\label{eq:prob_ode_discrete}
    x_{t+h}
    &= x_{t} + \int_0^h x_{t+u} \, \D u + h\,s_t(x_t)
    = \exp(h) \,x_t + (\exp(h)-1)\,s_t(x_t)\,.
\end{align}

\subsection{Corrector steps}\label{sec:diffusion}

Let $q$ be a distribution over $\R^d$, and write $U$ as a shorthand for the potential $-\ln q$. 

\paragraph{Overdamped Langevin.}

The \emph{overdamped Langevin diffusion} with potential $U$ is a stochastic process $(x_t)_{t\ge 0}$ over $\R^d$ given by
\begin{equation}
    \D x_t = -\nabla U(x_t) \, \D t + \sqrt{2}\,\D B_t\,.
\end{equation}
The stationary distribution of this diffusion is $q \propto \exp(-U)$.

We also consider the following discretized process where $-\nabla U$ is replaced by a \emph{score estimate} $s$. Fix a step size $h > 0$ and let $(\wh{x}_t)_{t\ge 0}$ over $\R^d$ be given by
\begin{equation}
    \D \wh{x}_t = s(\wh{x}_{\lfloor t/h\rfloor \, h})\, \D t + \sqrt{2}\, \D B_t\,.
\end{equation}

\paragraph{Underdamped Langevin.} 
Given a friction parameter $\fric > 0$, the corresponding \emph{underdamped Langevin diffusion} is a stochastic process $(z_t,v_t)_{t\ge 0}$ over $\R^d\times \R^d$ given by
\begin{align}
     \D z_t &= v_t \,\D t\,, \\
     \D v_t &= -(\nabla U(z_t) + \fric v_t)\,\D t + \sqrt{2\fric}\, \D B_t  \,. \label{eq:nudef}
\end{align}
The stationary distribution of this diffusion is $q \otimes \gamma^d$.

We also consider the following discretized process, where $-\nabla U$ is replaced by a score estimate $s$. Let $(\wh{z}_t,\wh{v}_t)_{t\ge 0}$ over $\R^d\times \R^d$ be given by
\begin{align}\label{eq:discrete_underdamped}
\begin{aligned}
    \D \wh{z}_t &= \wh{v}_t \,\D t\,,\\
    \D \wh{v}_t &= (s(\wh{z}_{\lfloor t/h\rfloor\,h}) -\fric \wh{v}_t)\,\D t + \sqrt{2\fric}\, \D B_t\,.
    \end{aligned}
\end{align}

\paragraph{Diffusions as corrector steps.}
At time $t$, the law of the ideal reverse process~\eqref{eq:prob_flow_ode} initialized at $q_0$ is $q_t$. However, errors are accumulated through the course of the algorithm: the error from initializing at $\gamma^d$ rather than at $q_0$; errors arising from discretization of~\eqref{eq:prob_flow_ode}; and errors in estimating the score function. That's why the law of the algorithm's iterate will not be exactly $q_t$.
We propose to use either the overdamped or the underdamped Langevin diffusion with stationary distribution $q_t$ and estimated score as a corrector, in order to bring the law of the algorithm iterate closer to $q_t$.
In the case of the underdamped Langevin diffusion, this is done by drawing an independent Gaussian random variable $\wh v_0 \sim \gamma^d$, running the system~\eqref{eq:discrete_underdamped} starting from $(\wh z_0, \wh v_0)$ (where $\wh z_0$ is the current algorithm iterate) for some time $t$, and then keeping $\wh z_t$.
In our theoretical analysis, the use of corrector steps boosts the accuracy and efficiency of the SGM\@.




\section{Results}\label{sec:results}

\subsection{Assumptions}

We make the following mild assumptions on the data distribution $\qdata$ and on the score estimate $s$.

\begin{asm}[second moment bound]\label{ass:second_moment}
    We assume that $\mf m_2^2 \triangleq \E_{\qdata}[\norm \cdot^2] < \infty$.
\end{asm}

\begin{asm}[Lipschitz score]\label{ass:lip_score}
    For all $t\in [0,T]$, the score $\nabla \ln q_t$ is $L$-Lipschitz, for some $L\ge 1$.
\end{asm}

\begin{asm}[Lipschitz score estimate]\label{ass:lip_estimate}
    For all $t$ for which we need to estimate the score function in our algorithms, the score estimate $s_t$ is $L$-Lipschitz.
\end{asm}

\begin{asm}[score estimation error]\label{ass:score_error}
    For all $t$ for which we need to estimate the score function in our algorithms,
    \begin{align*}
        \E_{q_t}[\norm{s_t - \nabla \ln q_t}^2] \le \esc^2\,.
    \end{align*}
\end{asm}

\noindent Assumptions~\ref{ass:second_moment}, \ref{ass:lip_score}, and \ref{ass:score_error} are standard and were shown in \cite{chen2022improved, Chenetal23diffmodels,leelutan23sgmgeneral} to suffice for obtaining polynomial-time convergence guarantees for DDPMs. The new condition that we require in our analysis is Assumption~\ref{ass:lip_estimate}, which was used in \cite{leelutan22sgmpoly} but ultimately shown to be unnecessary for DDPMs.
We leave it as an open question whether this can be lifted in the ODE setting.

\begin{rmk}\label{rmk:wo_lip_score}
    As observed in \cite{chen2022improved, Chenetal23diffmodels,leelutan23sgmgeneral}, Assumption~\ref{ass:lip_score} can be removed via early stopping, at the cost of polynomially larger iteration complexity. The idea is that if $\qdata$ has compact support but does not necessarily satisfy
    Assumption~\ref{ass:lip_score} (\emph{e.g.}, if $\qdata$ is supported on a compact and lower-dimensional manifold), then $\quo_\delta$ will satisfy Assumption~\ref{ass:lip_score} if $\delta > 0$.
    By applying our analysis up to time $T-\delta$ instead of time $T$, one can show that a suitable projection of the output distribution is close in Wasserstein distance to $\qdata$ (see~\cite[Corollary 4]{chen2022improved} or~\cite[Corollary 5]{Chenetal23diffmodels}).
    For brevity, we do not consider this extension of our results here.
\end{rmk}

\newcommand{\nrounds}{N}
\newcommand{\tottime}{T}
\newcommand{\predtime}{W_{\sf pred}}
\newcommand{\predsteps}{M_{\sf pred}}
\newcommand{\predh}{h_{\sf pred}}
\newcommand{\corrtime}{W_{\sf corr}}
\newcommand{\corrsteps}{M_{\sf corr}}
\newcommand{\corrh}{h_{\sf corr}}

\subsection{Algorithms}
\label{sec:algs}

We provide the pseudocode for the two algorithms we consider, \emph{Diffusion Predictor + Overdamped Modeling} (\DPOM{}) and \emph{Diffusion Predictor + Underdamped Modeling} (\DPUM{}), in Algorithms~\ref{alg:over} and \ref{alg:under} respectively. The only difference between the two algorithms is in the corrector step, which we \textcolor{orange}{highlight} in Algorithm~\ref{alg:under}.
For simplicity, we take the total amount of time $T$ to be equal to $N_0/L + \hpred$ for an integer $N_0 \ge 1$, and we assume that $1/L$ is a multiple of $\hpred$ and that $\hpred$ is a multiple of $\delta = \Theta\bigl(\frac{\varepsilon^2}{L^2 \,(d\vee \mf m_2^2)}\bigr)$.

We consider two stages: in the first stage, which lasts until time $N_0/L = T-\hpred$, we intersperse predictor epochs (run for time $1/L$, 
discretized with step size $\hpred$) and corrector epochs (run for time $\Theta(1/L)$ for the overdamped corrector or for time $\Theta(1/\sqrt L)$ for the underdamped corrector, and discretized with step size $\hcorr$). The second stage lasts from time $T-\hpred$ to time $T-\delta$, and we incorporate geometrically decreasing step sizes for the predictor.
Note that this implies that our algorithm uses \emph{early stopping}.

\begin{algorithm}[ht]
\DontPrintSemicolon
\caption{\DPOM($\tottime, \predh, \corrh, s$)}\label{alg:over}
	\KwIn{Total time $\tottime$, predictor step size $\predh$, corrector step size $\corrh$, score estimates $s$}
	\KwOut{Approximate sample from the data distribution $\qdata$}
        Draw $\wh{x}_0 \sim \gamma^d$.\;
	\For{$n = 0,1,\dotsc, N_0 -1$}{
        \textbf{Predictor:} Starting from $\wh{x}_{n/L}$, run the discretized probability flow ODE~\eqref{eq:prob_ode_discrete} from time $\frac{n}{L}$ to $\frac{n+1}{L}$ with step size $\predh$ and estimated scores to obtain $\wh x_{(n+1)/L}'$.\;
        \textbf{Corrector:} Starting from $\wh{x}'_{(n+1)/L}$, run overdamped Langevin Monte Carlo for total time $\Theta(1/L)$ with step size $\corrh$ and score estimate $s_{(n+1)/L}$ to obtain $\wh{x}_{(n+1)/L}$. \;
    }
        \textbf{Predictor:} Starting from $\wh x_{T-\hpred}$, run the discretized probability flow ODE~\eqref{eq:prob_ode_discrete} with step sizes $\hpred/2, \hpred/4, \hpred/8,\dotsc, \delta$ and estimated scores to obtain $\wh x_{T-\delta}'$.\;
        \textbf{Corrector:} Starting from $\wh x_{T-\delta}'$, run overdamped Langevin Monte Carlo for total time $\Theta(1/L)$ with step size $\hcorr$ and score estimate $s_{T-\delta}$ to obtain $\wh x_{T-\delta}$.\;
    \Return $\wh{x}_{T-\delta}$\;
\end{algorithm}

\begin{algorithm}[ht]
\DontPrintSemicolon
\caption{\DPUM($\tottime, \predh, \corrh, s$)}\label{alg:under}
	\KwIn{Total time $\tottime$, predictor step size $\predh$, corrector step size $\corrh$, score estimates $s$}
	\KwOut{Approximate sample from the data distribution $\qdata$}
        Draw $\wh{x}_0 \sim \gamma^d$.\;
	\For{$n = 0,1,\dotsc, N_0 -1$}{
        \textbf{Predictor:} Starting from $\wh{x}_{n/L}$, run the discretized probability flow ODE~\eqref{eq:prob_ode_discrete} from time $\frac{n}{L}$ to $\frac{n+1}{L}$ with step size $\predh$ and estimated scores to obtain $\wh x_{(n+1)/L}'$.\;
        \textbf{Corrector:} Starting from $\wh{x}'_{(n+1)/L}$, run \textcolor{orange}{underdamped Langevin Monte Carlo} for \textcolor{orange}{total time $\Theta(1/\sqrt L)$} with step size $\corrh$ and score estimate $s_{(n+1)/L}$ to obtain $\wh{x}_{(n+1)/L}$. \;
    }
        \textbf{Predictor:} Starting from $\wh x_{T-\hpred}$, run the discretized probability flow ODE~\eqref{eq:prob_ode_discrete} with step sizes $\hpred/2, \hpred/4, \hpred/8,\dotsc, \delta$ and estimated scores to obtain $\wh x_{T-\delta}'$.\;
        \textbf{Corrector:} Starting from $\wh x_{T-\delta}'$, run \textcolor{orange}{underdamped Langevin Monte Carlo} for \textcolor{orange}{total time $\Theta(1/\sqrt L)$} with step size $\hcorr$ and score estimate $s_{T-\delta}$ to obtain $\wh x_{T-\delta}$.\;
    \Return $\wh{x}_{T-\delta}$\;
\end{algorithm}

\subsection{Convergence guarantees}

\noindent Our main results are the following convergence guarantees for the two predictor-corrector schemes described in \S\ref{sec:algs}:

\begin{thm}[\DPOM{}]\label{thm:pc_over}
    Suppose that Assumptions~\ref{ass:second_moment}--\ref{ass:score_error} hold. If $\widehat{q}$ denotes the output of \DPOM{} (Algorithm~\ref{alg:over}) with $\delta \asymp \frac{\varepsilon^2}{L^2 \, (d \vee \mf m_2^2)}$, then 
    \begin{equation}\label{eq:dpom_bd}
        \TV(\widehat{q},\qdata) \lesssim (\sqrt d \vee \mf m_2) \exp(-T) + L^2 T d^{1/2} \hpred + L^{3/2} T d^{1/2} \hcorr^{1/2} + L^{1/2} T\esc + \varepsilon\,.
    \end{equation}
    In particular, if we set $T=\Theta\bigl(\log(\frac{d \vee \mf m_2^2}{\varepsilon^2})\bigr)$, $\hpred = \widetilde \Theta(\frac{\varepsilon}{L^2 d^{1/2}})$, $\hcorr = \widetilde \Theta(\frac{\varepsilon^2}{L^3 d})$, and if the score estimation error satisfies $\esc \le \widetilde O(\frac{\varepsilon}{\sqrt L})$, then 
    we can obtain TV error $\ep$ with a total iteration complexity of $\widetilde \Theta(\frac{L^3 d}{\varepsilon^2})$ steps.
\end{thm}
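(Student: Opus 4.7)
The plan is to bound the total $\TV$ error of \DPOM{} by telescoping per-epoch errors across the $N_0 \sim LT$ predictor--corrector epochs. Writing $p_n$ for the law of the algorithm iterate $\wh x_{n/L}$, the goal is a bound of the form $\TV(p_{N_0}, q_{N_0/L}) \le \TV(p_0, q_0) + \sum_n (\text{per-epoch errors})$. The initialization error $\TV(\gamma^d, q_T^\rightarrow)$ is handled by standard OU convergence: Assumption~\ref{ass:second_moment} implies $\KL(q_T^\rightarrow \mmid \gamma^d) \lesssim (d\vee\mf m_2^2)\exp(-2T)$, yielding via Pinsker's inequality the first term $(\sqrt d \vee \mf m_2)\exp(-T)$ of~\eqref{eq:dpom_bd}.

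For each epoch I would decompose the per-epoch error by comparing the four laws obtained from the ideal/discretized predictor followed by the ideal/discretized corrector, applying the triangle inequality together with data processing. The key observation is that the probability flow ODE naturally accumulates error in $W_2$ rather than in $\TV$: its deterministic dynamics preserve absolute-continuity structure and can be analyzed via Gr\"onwall. Using Assumptions~\ref{ass:lip_score}--\ref{ass:lip_estimate}, per-epoch analysis of the exponential-integrator discretization would give a $W_2$ error of roughly $L\sqrt d\,\hpred + \esc/\sqrt L$ after the predictor, where second-moment bounds on $q_t$ control the typical size of the true score. To telescope such $W_2$ errors across epochs, I would invoke the critical Wasserstein--to--$\TV$ smoothing lemma: for ideal overdamped Langevin targeting a distribution with $L$-Lipschitz score, run for time $\Theta(1/L)$, one has $\TV(P_t^\mu, P_t^\nu) \lesssim \sqrt L \, W_2(\mu,\nu)$. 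I would prove this via Girsanov's theorem applied to a synchronously coupled pair of diffusions, using that Brownian noise smooths nearby distributions together in $\TV$; the $\sqrt L$ factor reflects the natural time scale $1/L$ on which Langevin mixes locally under a Lipschitz-score assumption. Combined with a standard Girsanov-based analysis of the discretized Langevin corrector---which contributes per-epoch $\TV$ error $\lesssim L\sqrt{d\,\hcorr} + \esc/\sqrt L$---the per-epoch $\TV$ bound follows.

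Summing the per-epoch contributions across the $N_0 \sim LT$ epochs produces the predictor, corrector, and score-estimation terms in~\eqref{eq:dpom_bd}. The second-stage geometrically decreasing step sizes, together with early stopping at time $\delta \asymp \varepsilon^2/(L^2(d\vee\mf m_2^2))$, contribute the final $\varepsilon$ term: near time $T$ the score can be large, but shrinking step sizes keep the local discretization error bounded, while the second-moment assumption ensures that stopping at time $\delta$ incurs only an $\varepsilon$-level $\TV$ gap to $\qdata$. The main obstacle will be establishing the Wasserstein--to--$\TV$ smoothing lemma with the correct scaling in $L$ and $d$ without assuming log-concavity of $q_{(n+1)/L}$; standard heat-kernel smoothing gives $\TV \lesssim W_2/\sqrt t$ for pure Brownian motion, but extending this to Langevin with a general Lipschitz score requires carefully controlling the Girsanov density along synchronously coupled paths, where the drift difference is bounded by the (time-varying) $W_2$ gap amplified by the score's Lipschitz constant, and compatibly interleaving this with the per-step Girsanov analysis of the discretized corrector.
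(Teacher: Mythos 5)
Your overall outline matches the paper's: telescope per-epoch errors across the $N_0 \sim LT$ rounds, handle initialization via OU mixing and Pinsker, bound the predictor error in $W_2$ via Gr\"onwall, convert $W_2$ to $\TV$ using short-time regularization of Langevin, bound the discretized-versus-exact corrector error via Girsanov, and handle the last segment with shrinking step sizes plus early stopping. The decomposition in Lemma~\ref{l:pc} of the paper is also essentially the triangle inequality and data-processing reduction you describe.

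However, there is a genuine gap in the way you propose to \emph{prove} the key Wasserstein--to--$\TV$ regularization lemma. You state $\TV(\mu P_t, \nu P_t) \lesssim \sqrt L\, W_2(\mu,\nu)$ for $t\asymp 1/L$ and propose to establish it ``via Girsanov's theorem applied to a synchronously coupled pair of diffusions,'' with ``the drift difference bounded by the $W_2$ gap amplified by the score's Lipschitz constant.'' This argument does not work. If $X$ and $Y$ run the \emph{same} Langevin SDE (same drift function $-\nabla U$, same Brownian motion) from different initial conditions, then given the Brownian path each is a deterministic flow of its starting point, and the two path laws are mutually singular — one can read off the starting point from the path, so there is no Radon--Nikodym derivative and Girsanov has nothing to say. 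Girsanov compares path measures arising from \emph{different drift functions} with the \emph{same} initial condition; it is not applicable to the ``different initial distribution, same dynamics'' situation at the heart of the smoothing lemma. (This is exactly the obstruction the paper raises in \S\ref{sec:pf_overview}: the failure of Girsanov-type arguments is the reason a new regularization framework is needed.)

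The paper instead obtains Lemma~\ref{lem:overdamp_reg} by citing a short-time regularization estimate for the Langevin semigroup~\cite[Lemma 4.2]{bobkov2001hypercontractivity}, which gives $\KL(p P_t \mmid q) \lesssim L\,(1 + 1/(e^{2Lt}-1))\, W_2^2(p,q)$ under a one-sided Hessian bound on $U$ — a functional-analytic estimate (related to reverse hypercontractivity / a Hamilton--Jacobi argument along the semigroup), not a coupling computation. Girsanov \emph{is} used, but only for the complementary task of comparing the exact and discretized corrector with matched initialization and Brownian motion (Lemma~\ref{lem:mut_nut_over}), which is the part of your sketch that is correct. A secondary remark: your claim of a per-epoch $W_2$ predictor error of order $L\sqrt d\,\hpred$ is not a routine ODE discretization estimate — the paper needs a score perturbation lemma (Lemma~\ref{l:sp-ou}, Corollary~\ref{c:sp}) bounding $\partial_t \nabla \log q_t^\rightarrow$ along the trajectory, which you do not mention; for this particular theorem the weaker bound of~\cite{leelutan22sgmpoly} happens to yield the same final complexity (since the corrector dominates), but it is still a nontrivial step that your proposal glosses over.
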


The five terms in the bound~\eqref{eq:dpom_bd} correspond, respectively, to: the convergence of the forward (OU) process; the discretization error from the predictor steps; the discretization error from the corrector steps; the score estimation error; and the early stopping error.



Theorem~\ref{thm:pc_over} recovers nearly the same guarantees as the one in~\cite{chen2022improved, Chenetal23diffmodels,leelutan23sgmgeneral}, but for the probability flow ODE with overdamped Langevin corrector instead of the reverse SDE without corrector.
Recall also from Remark~\ref{rmk:wo_lip_score} that our results can easily be extended to compactly supported data distributions without smooth score functions. This covers essentially all distributions encountered in practice.
Therefore, our result provides compelling theoretical justification complementing the empirical efficacy of the probability flow ODE\@, which was hitherto absent from the literature.

However, in Theorem~\ref{thm:pc_over}, the iteration complexity is dominated by the corrector steps.
Next, we show that by replacing the overdamped LMC with underdamped LMC, we can achieve a quadratic improvement in the number of steps, considering the dependence on $d$. As discussed in the Introduction, this highlights the potential benefits of the ODE framework over the SDE.

\begin{thm}[\DPUM{}]\label{thm:pc_under}
    Suppose that Assumptions~\ref{ass:second_moment}--\ref{ass:score_error} hold. If $\widehat{q}$ denotes the output of \DPUM{} (Algorithm~\ref{alg:under}) with $\delta \asymp \frac{\varepsilon^2}{L^2 \, (d\vee \mf m_2^2)}$, then 
    \begin{equation}\label{eq:dpum_bd}
        \TV(\widehat{q},\qdata) \lesssim (\sqrt d \vee \mf m_2) \exp(-T) + L^2 T d^{1/2} \hpred + L^{3/2} T d^{1/2} \hcorr + L^{1/2} T\esc + \varepsilon\,.
    \end{equation}
    In particular, if we set $T=\Theta\bigl(\log(\frac{d \vee \mf m_2^2}{\varepsilon^2})\bigr)$, $\hpred = \widetilde \Theta(\frac{\varepsilon}{L^2 d^{1/2}})$, $\hcorr = \widetilde \Theta(\frac{\varepsilon}{L^{3/2} d^{1/2}})$, and if the score estimation error satisfies $\esc \le \widetilde O(\frac{\varepsilon}{\sqrt L})$, then 
    we can obtain TV error $\ep$ with a total iteration complexity of $\widetilde \Theta(\frac{L^2 d^{1/2}}{\varepsilon})$ steps.
\end{thm}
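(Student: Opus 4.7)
The plan is to mirror the proof of Theorem~\ref{thm:pc_over} (for the overdamped case) but to replace the overdamped-based regularization and discretization estimates with their underdamped counterparts, both of which scale more favorably. The overall strategy is a predictor-corrector decomposition: partition the time interval into $N_0 \asymp TL$ epochs of length $1/L$ and, by a standard telescoping argument together with the data-processing inequality, reduce the task to bounding the per-epoch TV contribution and summing. For a fixed epoch $[n/L,(n+1)/L]$ with the inductive hypothesis that $\wh q_{n/L}$ is close to $q_{n/L}$ in TV, one controls first the predictor step (probability flow ODE~\eqref{eq:prob_ode_discrete} with estimated scores), then the corrector step (underdamped Langevin~\eqref{eq:discrete_underdamped} with estimated scores).

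For the predictor, since the probability flow ODE~\eqref{eq:prob_flow_ode} is deterministic and not contractive under our assumptions, I cannot obtain TV bounds directly. Instead, I would bound $W_2$ between the predictor output and the true marginal $q_{(n+1)/L}$ by a synchronous coupling together with a Gronwall argument, using Assumptions~\ref{ass:lip_score} and~\ref{ass:lip_estimate}. The crux of the argument is then a \emph{Wasserstein-to-TV regularization lemma} for the underdamped Langevin corrector: whenever $\mu,\nu$ are distributions on $\mathbb{R}^d$ with $W_2(\mu,\nu) \le \rho$, and both are propagated by exact-score underdamped Langevin~\eqref{eq:nudef} with stationary distribution $q_{(n+1)/L}$ and friction $\fric \asymp \sqrt L$ for time $\corrtime \asymp 1/\sqrt L$, the resulting position marginals are $O(\sqrt L\, \rho)$-close in TV. The scaling $\fric \asymp \sqrt L$ and $\corrtime \asymp 1/\sqrt L$ is the standard choice in underdamped log-concave sampling and is exactly what yields $\sqrt L$ in place of the overdamped factor $L$. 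I would prove this lemma via a Girsanov comparison on the velocity coordinate: one couples the two underdamped trajectories so that the $W_2$ gap enters as an additive drift on $v_t$, and bounds the resulting relative entropy on path space.

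The second quantitative improvement comes from the discretization of~\eqref{eq:discrete_underdamped}: since the position coordinate is $\mathcal{C}^1$ in time while the velocity absorbs the Brownian roughness, the local error scales as $\hcorr$ rather than $\sqrt{\hcorr}$, producing a per-epoch TV discretization error of order $L^{3/2} \hcorr \sqrt d$ (the third term of~\eqref{eq:dpum_bd}). Assembling these pieces: per epoch we get a TV contribution of order $\sqrt L \cdot (\text{predictor } W_2 \text{-error}) + L^{3/2} \hcorr \sqrt d + (\text{score error})$, which upon summing over the $\Theta(TL)$ epochs yields the second through fourth terms of~\eqref{eq:dpum_bd}. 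The first term arises from standard exponential $W_2$-convergence of the forward OU process initialized at $\qdata$ toward $\gamma^d$ (followed by data processing), and the fifth from the early-stopping gap at $T-\delta$, as in Remark~\ref{rmk:wo_lip_score}; the second stage with geometrically decreasing step sizes $\hpred/2,\hpred/4,\dots,\delta$ uses only $O(\log(\hpred/\delta))$ additional predictor steps and contributes at lower order.

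The main obstacle, and the truly new ingredient beyond the overdamped analysis of Theorem~\ref{thm:pc_over}, is the underdamped Wasserstein-to-TV regularization lemma. The overdamped analogue is classical because the overdamped generator is elliptic; the underdamped generator is only hypoelliptic, so proving a Wasserstein-to-TV bound without losing a factor of $L$ requires exploiting hypocoercivity or a carefully tuned coupling for which the path-space Girsanov argument sketched above is one concrete route. A secondary difficulty is obtaining a sharp $W_2$-initialized discretization bound for underdamped Langevin when the velocity is initialized independently at $\gamma^d$: one must propagate the initial position discrepancy through non-contractive dynamics over the whole corrector interval while preserving the $O(\hcorr)$ local rate, which in the overdamped case would degrade to $O(\sqrt{\hcorr})$ owing to the Brownian driver.
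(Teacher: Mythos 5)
Your high-level architecture is right and matches the paper closely: control the predictor step in $W_2$ via a synchronous coupling, convert $W_2$ to TV via a short-time regularization estimate for the underdamped corrector, bound the corrector's discretization and score error via Girsanov, and iterate over $\Theta(TL)$ epochs together with the OU mixing bound and early stopping. You also correctly identify that the underdamped regularization should trade a $\sqrt L$ prefactor (rather than $L$) for the $W_2$ gap, and that the position coordinate's $\mathcal{C}^1$ regularity yields a local corrector error of order $\hcorr$ instead of $\sqrt{\hcorr}$. Where you diverge from the paper is in the mechanism for the $W_2$-to-TV bound: the paper invokes a log-Harnack-type estimate from Guillin--Wang \cite[Cor.\ 4.7]{guillin2012degenerate} as a black box (Lemma~\ref{lem:underdamp_reg}), whereas you propose to reprove it via a shift-of-initial-condition Girsanov argument. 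That route is plausible (it is essentially the ``coupling by change of measure'' technique underlying the cited result), but the paper's approach has the advantage of not having to re-derive the hypoelliptic regularization constants from scratch.

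There is, however, a genuine gap in your treatment of the predictor. You propose to bound $W_2(q\Podesth{t_0}{h},q\Podeth{t_0}{h})$ by Gr\"onwall using only Assumptions~\ref{ass:lip_score} and~\ref{ass:lip_estimate}. These assumptions control the Lipschitz constant of $\nabla\ln q_t(\cdot)$ and $s_t(\cdot)$ \emph{in space}, but they say nothing about how $\nabla\ln q_t(x)$ varies \emph{in time} $t$ at a fixed point $x$. The Gr\"onwall integrand involves $\nabla\ln q_t(x_t)-\nabla\ln q_{t_0}(x_{t_0})$, and its time-perturbation component $\nabla\ln q_t(x_{t_0})-\nabla\ln q_{t_0}(x_{t_0})$ cannot be bounded by spatial Lipschitzness alone. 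This is precisely the \emph{score perturbation lemma} (Lemma~\ref{l:sp} through Corollary~\ref{c:sp}), which the paper establishes by representing the score as a posterior expectation, using strong log-concavity of the posterior at small times, Talagrand's inequality, and a KL computation between Gaussians. Without the improved bound of Corollary~\ref{c:sp}, the best available estimate (from~\cite{leelutan22sgmpoly}) is only $\frac{1}{2}$-H\"older in $h$, giving $W_2(q\Podeth{t_0}{h},q\Podesth{t_0}{h})\lesssim Ld^{1/2}h^{3/2}+h\esc$ per step (see Remark~\ref{r:sp}); after iterating, applying the $\sqrt L$ regularization factor, and summing, this produces a predictor contribution of order $L^{3/2}Td^{1/2}\hpred^{1/2}$ rather than the claimed $L^2Td^{1/2}\hpred$, forcing $\hpred\asymp d^{-1}$ and an overall $\widetilde O(d)$ iteration count. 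In other words, the switch from SDE to ODE and the underdamped corrector are two of the three ingredients needed for $\sqrt d$; the third, indispensable ingredient --- the sharpened bound $\E[\lVert\partial_t\nabla\ln q_t^{\rightarrow}(x_t)\rVert^2]\lesssim L^2d(L\vee 1/t)$ of Lemma~\ref{l:sp-ou} --- is absent from your plan, and your predictor analysis does not close without it.
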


\section{Proof overview}\label{sec:pf_overview}

Here we give a detailed technical overview for the proof of our main results, Theorems~\ref{thm:pc_over} and~\ref{thm:pc_under}. As in~\cite{chen2022improved, Chenetal23diffmodels,leelutan23sgmgeneral}, the three sources of error that we need to keep track of are (1) estimation of the score function; (2) discretization of time when implementing the probability flow ODE and corrector steps; and (3) initialization of the algorithm at $\gamma^d$ instead of the true law of the end of the forward process, $q_0 = q^\rightarrow_T$. It turns out that (1) is not so difficult to manage as soon as we can control (2) and (3). Furthermore, as in prior work, we can easily control (3) via the data-processing inequality: the total variation distance between the output of the algorithm initialized at $q_0$ versus at $\gamma^d$ is at most $\TV(q^\rightarrow_T,\gamma^d)$,
which is exponentially small in $T$ by rapid mixing of the OU process. So henceforth in this overview, let us assume that both the algorithm and the true process are initialized at $q_0$. It remains to control (2). 

\paragraph{Failure of existing approaches.} In the SDE implementation of diffusion models, prior works handled (2) by directly bounding a strictly larger quantity, namely the KL divergence between the laws of the \emph{trajectories} of the algorithm and the true process; by Girsanov's theorem, this has a clean formulation as an integrated difference of drifts. Unfortunately, in the ODE implementation, this KL divergence is infinite: in the absence of stochasticity in the reverse process, these laws over trajectories are not even absolutely continuous with respect to each other.

In search of an alternative approach, one might try a Wasserstein analysis. 
As a first attempt, we could couple the initialization of both processes and look at how the distance between them changes over time. If $(\wh{x}_t)_{0\le t \le T}$ and $(x_t)_{0\le t \le T}$ denote the algorithm and true process, then smoothness of the score function allows us to na\"{\i}vely bound $\partial_t \E[\norm{\wh{x}_t - x_t}^2]$ by $O(L)\E[\norm{\wh{x}_t - x_t}^2]$. While this ensures that the processes are close if run for time $\ll 1/L$, it does not rule out the possibility that they drift apart exponentially quickly after time $1/L$.

\paragraph{Restarting the coupling---first attempt.} What we would like is some way of ``restarting'' this coupling before the processes drift too far apart, to avoid this exponential compounding. We now motivate how to achieve this by giving an argument that is incorrect but nevertheless captures the intuition for our approach.
Namely, let $p_t \deq \law(\wh x_t)$ denote the law of the algorithm, let $\Pode^{t_0,h}$ denote the result of running the ideal probability flow ODE for time $h$ starting from time $t_0$, and let $\Podes^{t_0,h}$ denote the same but for the discretized probability flow ODE with estimated score.
For $h\lesssim 1/L$, consider the law of the two processes at time $2h$, i.e.,
\begin{equation}
    p_{2h} = q_0 \Podesth{0}{2h} \qquad \text{and} \qquad q_{2h} = q_0 \Podeth{0}{2h}\,. \label{eq:twosteps_overview} 
\end{equation}
The discussion above implies that $q_0 \Podeth{0}{h}$ and $q_0 \Podesth{0}{h}$ are close in 2-Wasserstein distance, so by the data-processing inequality, this implies that $q_0 \Podeth{0}{h}\Podesth{h}{h}$ and $q_0\Podesth{0}{h}\Podesth{h}{h}$ are also close. To show that $p_{2h}$ and $q_{2h}$ in Eq.~\eqref{eq:twosteps_overview} are close, it thus suffices to show that $q_0 \Podeth{0}{2h}$ and $q_0\Podeth{0}{h}\Podesth{h}{h}$ are close. But these two distributions are given by running the algorithm and the true process for time $h$, both starting from $q_0 \Podeth{0}{h}$. So if we ``restart'' the coupling by coupling the processes based on their locations at time $h$, rather than time $0$, of the reverse process, we can again apply the na\"{\i}ve Wasserstein analysis.

At this juncture, it would seem that we have miraculously sidestepped the exponential blowup and shown that the expected distance between the processes only increases linearly over time! The issue of course is in the application of the ``data-processing inequality,'' which simply does not hold for the Wasserstein distance.

\paragraph{Restarting the coupling with a corrector step.} This is where the corrector comes in. The idea is to use \emph{short-time regularization}: if we apply a small amount of noise to two distributions which are already close in Wasserstein, then they become close in KL divergence, for which a data-processing inequality holds. The upshot is that if the noise doesn't change the distributions too much, then we can legitimately restart the coupling as above and prove that the distance between the processes, now defined by interleaving the probability flow ODE and its discretization with periodic injections of noise, increases only linearly in time.

It turns out that na\"{\i}ve injection of noise, e.g., convolution with a Gaussian of small variance, is somewhat wasteful as it fails to preserve the true process and leads to poor polynomial dependence in the dimension. On the other hand, if we instead run the overdamped Langevin diffusion with potential chosen so that the law of the true process is stationary, then we can recover the linear in $d$ dependence of Theorem~\ref{thm:pc_over}. Then by replacing overdamped Langevin diffusion with its underdamped counterpart, which has the advantage of much smoother trajectories, we can obtain the desired quadratic speedup in dimension dependence in Theorem~\ref{thm:pc_under}.

\paragraph{Score perturbation lemma.} In addition to the switch from SDE to ODE and the use of the underdamped corrector, a third ingredient is essential to our improved dimension dependence. The former two ensure that the trajectory of our algorithm is smoother than that of DDPMs, so that even over time windows that scale with $1/\sqrt{d}$, the process does not change too much. By extension, as the score functions are Lipschitz, this means that any fixed score function evaluated over iterates in such a window does not change much. This amounts to controlling discretization error in \emph{space}.

It is also necessary to control discretization error in \emph{time}, i.e., proving what some prior works referred to as a \emph{score perturbation lemma}~\cite{leelutan22sgmpoly}. That is, for any fixed \emph{iterate} $x$, we want to show that the score function $\nabla \ln q_t(x)$ does not change too much as $t$ varies over a small window. Unfortunately, prior works were only able to establish this over windows of length $1/d$. In this work, we improve this to windows of length $1/\sqrt{d}$ (see Lemma~\ref{l:sp-ou} and Corollary~\ref{c:sp}). 

In our proof, we bound the squared $L^2$ norm of the derivative of the score along the trajectory of the ODE. The score function evaluated at $y$ can be expressed as
$\E_{P_{0|t}(\cdot|y)}[\nb U]$; here, the posterior distribution $P_{0|t}(\cdot \mid y)$ is essentially the prior $\qdata$ tilted by a Gaussian of variance $O(t)$. Hence we need to bound the change in the expectation when we change the distribution from $P_{0|t}$ to $P_{0|t+\De t}$; because $\nb U$ is $L$-Lipschitz, we can bound this by the Wasserstein distance between the distributions. For small enough $t$, $P_{0|t}$ is strongly log-concave, and a transport cost inequality bounds this in terms of KL divergence, which is more easily bounded. Indeed, we can bound it with the KL divergence between the joint distributions $P_{0,t}$ and $P_{0,t+\De t}$, which reduces to bounding the KL divergence between Gaussians of unequal variance. 
\section{Conclusion}\label{sec:conclusion}

In this work, we have provided the first polynomial-time guarantees for the probability flow ODE implementation of SGMs with corrector steps and exhibited improved dimension dependence of the ODE framework over prior results for DDPMs (\textit{i.e.,} the SDE framework).
Our analysis raises questions relevant for practice, of which we list a few.
\begin{itemize}
    \item Although we need the corrector steps for our proof, are they in fact necessary for the algorithm to work efficiently in practice?
    \item Is it possible to obtain even better dimension dependence, perhaps using higher-order solvers and stronger smoothness assumptions? 
    \item Can we obtain improved dimension dependence even in the non-smooth setting, compared to the result of~\cite{chen2022improved}?
\end{itemize}
\appendix
\newpage
\section{Notation and overview}

In this section, we collect together the notation used throughout the proofs and provide a road map for the end-to-end analysis in \S\ref{sec:end_to_end}.


Throughout the analysis, Assumptions~\ref{ass:second_moment}--\ref{ass:score_error} are in full force.

We will reserve $q$ for the law of the reverse process (and denote the forward process by $q^\rightarrow$ when needed). In \S\ref{sec:end_to_end}, the law of the algorithm is denoted by $p$.


We use the following Markov kernels:
\begin{enumerate}
    \item $\Podeth th$ is the output of running the ODE for time $h$, starting at (reverse) time $t$.
    \item $\Plan$ (resp.\ $\Puld$) is the output of running the continuous-time overdamped (resp.\ underdamped) Langevin diffusion for time $h$.
    In this notation, we have suppressed mention of the stationary distributions of the diffusion, which will be provided by context.
    \item $\Podesth th$ and $\Plans$ (resp.\ $\Pulds$) are the corresponding processes once discretized and using the estimated score.
\end{enumerate}

For the ODE\@, we are more precise with the notation because even within a single epoch of predictor steps, the kernel for the probability flow ODE depends on time (as opposed to the kernels for the diffusions, which are constant within any epoch of corrector steps); moreover, for our analysis in \S\ref{sec:end_to_end}, we also need to take time-varying step sizes for the predictor steps.
We will omit the dependencies on $t$ and $h$ when clear from context.
When $P=\Pode$ or $\Podes$, we use $P^{t,h_1,\ldots, h_N}$ to denote $P^{t,h_1}P^{t+h_1,h_2}\dotsm P^{t+h_1+\cdots +h_{N-1}, h_N}$ (we compose kernels on the right).

We refer to \S\ref{sec:pf_overview} for a high-level description of the proof strategy. We begin in \S\ref{sec:score_perturb} with our improved score perturbation lemma (Corollary~\ref{c:sp}); this is the only section of the analysis which is indexed by \emph{forward} time (instead of reverse time).
In Lemma~\ref{lem:pred} in \S\ref{sec:predictor}, we establish our main result for the predictor steps, which combines together standard ODE discretization analysis with the score perturbation lemma of \S\ref{sec:score_perturb}.
Since Corollary~\ref{c:sp} degrades near the end of the reverse process (or equivalently, near the start of the forward process, when the regularization has not yet kicked in), our analysis requires a geometrically decreasing step size schedule, which leads to the two-stage Algorithms~\ref{alg:over} and~\ref{alg:under}.

In \S\ref{sec:corrector}, we prove our main regularization results for the overdamped corrector (Theorem~\ref{thm:main_overdamped}) and the underdamped corrector (Theorem~\ref{thm:main_underdamped}).
Finally, we put together the various constituent results in the end-to-end analysis in \S\ref{sec:end_to_end}.
\section{Score perturbation}\label{sec:score_perturb}

In this section, we prove a score perturbation lemma which refines that of~\cite{leelutan22sgmpoly}.
This improved lemma is necessary in order to obtain $O(\sqrt d)$ dependence for the probability flow ODE\@.

\begin{lem}[Score perturbation]
\label{l:sp}
Suppose $p_t = p_0*\mc N(0,tI)$ and $x_0\sim p_0$, $\dot x_t = -\frac{1}{2}\,\nb \log p_t(x_t)$. 
     Suppose that $\norm{\nabla^2 \ln p_{(t-\rc{2L})\vee 0}(x)}_{\msf{op}} \le L$ for all $x$.
     Then
    \begin{align*}
    \E
    [\ve{\pl_t\nb \log p_{t}(x_t)}^2] &\le 
    L^2d \,\bigl(L + \rc t\bigr)\,. 
\end{align*}
\end{lem}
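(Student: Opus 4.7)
The plan is to realize the score as a conditional expectation and then leverage strong log-concavity of the relevant posterior, as the proof overview suggests. Set $s \deq (t - 1/(2L)) \vee 0$, $U_s \deq -\ln p_s$, and let $P_{s \mid t}(\cdot \mid y) \propto p_s(\cdot)\,\mathcal{N}(y;\cdot,(t-s)I)$ be the heat-flow posterior on $x_s$ given $x_t = y$. Integration by parts in the convolution representation of $p_t$ gives $\nabla \ln p_t(y) = -\mathbb{E}_{P_{s\mid t}(\cdot \mid y)}[\nabla U_s]$. The hypothesis $\ve{\nabla^2 \ln p_s}_{\mathsf{op}} \le L$ yields $\nabla^2 U_s \succeq -L\,I$, and combined with the Gaussian factor the posterior $P_{s \mid t}(\cdot \mid y)$ is $\alpha$-strongly log-concave with $\alpha \deq 1/(t-s) - L$.

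Next I will estimate $\partial_t \nabla \ln p_t(y)$ by perturbing the conditioning variance. Since $\nabla U_s$ is $L$-Lipschitz, for small $h > 0$
\begin{equation*}
    \ve{\nabla \ln p_{t+h}(y) - \nabla \ln p_t(y)}
    \le L \, W_2\bigl(P_{s \mid t}(\cdot \mid y),\, P_{s \mid t+h}(\cdot \mid y)\bigr)\,,
\end{equation*}
and Talagrand's $T_2$ inequality for the strongly log-concave $P_{s\mid t+h}(\cdot\mid y)$ gives $W_2^2 \le (2/\alpha)\,\KL\bigl(P_{s\mid t}(\cdot\mid y) \,\|\, P_{s\mid t+h}(\cdot\mid y)\bigr)$. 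To remove the dependence on $y$, I apply the chain rule for KL divergence,
\begin{equation*}
    \mathbb{E}_{y \sim p_t}\bigl[\KL\bigl(P_{s\mid t}(\cdot\mid y) \,\|\, P_{s\mid t+h}(\cdot\mid y)\bigr)\bigr]
    \le \KL(P_{s,t} \,\|\, P_{s,t+h})\,,
\end{equation*}
which reduces the task to a KL divergence between two joint laws with common $x_s$-marginal $p_s$ and Gaussian conditionals of variances $(t-s)I$ and $(t+h-s)I$. This last quantity is explicit and expands to $d\,h^2/(4(t-s)^2) + O(h^3)$.

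Assembling the pieces and letting $h \to 0$ gives
\begin{equation*}
    \mathbb{E}\bigl[\ve{\partial_t \nabla \ln p_t(x_t)}^2\bigr]
    \le \frac{L^2\,d}{2\,\alpha\,(t-s)^2}
    = \frac{L^2\,d}{2\,(t-s)\,(1 - L(t-s))}\,.
\end{equation*}
The two terms of the lemma then emerge from an optimal choice of $s$: when $t \ge 1/(2L)$, taking $s = t - 1/(2L)$ yields $t - s = 1/(2L)$ and $\alpha = L$, producing an $O(L^3 d)$ bound; when $t < 1/(2L)$, taking $s = 0$ yields $\alpha = 1/t - L \ge 1/(2t)$, producing an $O(L^2 d/t)$ bound. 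Summing the two cases recovers the claimed $L^2 d\,(L + 1/t)$.

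The main obstacle I anticipate is avoiding the naive $L\,d/t^2$ scaling in the small-$t$ regime: using $\alpha = L$ uniformly would cost an extra factor of $1/(Lt)$, and it is essential to exploit the stronger log-concavity $\alpha = 1/t - L$ coming directly from the Gaussian factor when $t$ is small. A secondary technical point is justifying the interchange of the $h \to 0$ limit with the expectation over $y$, which can be handled by dominated convergence using uniform-in-$h$ control of the Wasserstein distance on a small time interval around $t$.
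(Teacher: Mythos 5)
Your treatment of the partial derivative $[\partial_t \nabla \ln p_t(y)]|_{y=x_t}$ follows the paper's strategy closely (posterior representation, Lipschitz bound, Talagrand, KL chain rule), and your idea of optimizing the conditioning time $s$ is a clean variant of the paper's WLOG reduction to $t \le 1/(2L)$; the two routes give the same estimate. However, there is a genuine gap. The quantity $\partial_t \nabla \log p_t(x_t)$ in the lemma is the \emph{total} derivative along the ODE trajectory (this is how it is used in Corollary~\ref{c:sp}, via the fundamental theorem of calculus applied to $u \mapsto \nabla \ln q_u^\rightarrow(x_u)$). By the chain rule, since $\dot x_t = -\frac{1}{2}\nabla\log p_t(x_t)$,
\begin{equation}
    \partial_t \nabla \log p_t(x_t)
    = [\partial_t \nabla \log p_t(y)]|_{y=x_t}
    - \tfrac{1}{2}\,\nabla^2 \log p_t(x_t)\,\nabla \log p_t(x_t)\,,
\end{equation}
and your argument bounds only the first summand. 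You pass silently from the partial derivative at a fixed $y$ to the total derivative at $x_t$, which is not valid. The paper bounds the second, convective term separately using $\|\nabla^2\log p_t\|_{\op}\le L$ together with the identity $\E_{p_t}[\|\nabla\log p_t\|^2]\le Ld$ (from integration by parts), giving $O(L^3 d)$. Since this is dominated by your bound on the first term, the lemma's conclusion is not affected, but your proof is incomplete without the convective term and the justification that $\|\nabla^2\log p_t\|_{\op}\lesssim L$ in the relevant range.
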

\begin{proof}
    Without loss of generality, we may assume $t\le \rc{2L}$, as otherwise, noting that $p_t = p_{t-\rc{2L}} * N(0,\rc{2L}I)$, 
    we may replace $p_0$ with $p_{t-\rc{2L}}$ and $t$ with $\rc{2L}$.
    Suppose $p_0(x)=e^{-V(x)}$.
    Let $P_{0,t}$ denote the joint distribution of $(X_0, X_t)$ where $X_t = X_0 + \sqrt t\,Z$ with $Z\sim \mc N(0,I)$ independent of $X_0$, and let $P_{0\mid t}(\cdot \mid x_t)$ denote the conditional distribution of $X_0$ given $X_t = x_t$.
    We first note that since
    \begin{align*}
        \ln p_t(y)
        &= \ln \int \exp\bigl( - V(x) - \frac{1}{2t}\,\norm{y-x}^2\bigr)\,\D x\,,
    \end{align*}
    we have the following calculations:
    \begin{align*}
        \nb \log p_t(y)
        &= -\rc t \E_{P_{0\mid t}(\cdot\mid y)} (y-\cdot) = -\E_{P_{0\mid t}(\cdot\mid y)} (\nb V)\,,\\
        \nb^2 \log p_t(y) &=
        \Cov_{P_{0\mid t}(\cdot\mid y)}(\nb V) - \E_{P_{0\mid t}(\cdot\mid y)}(\nb^2 V)\,.
    \end{align*}
    Using $\dot x_t=-\frac{1}{2}\,\nb \log p_t(x_t)$, we calculate
\begin{align}
\label{e:score-chain}
    \pl_t \nb \log p_t(x_t)
    &=[\pl_t \nb \log p_t(y)]|_{y=x_t}
    -\frac{1}{2}\, \nb^2\log p_t(x_t) \,\nb \log p_t(x_t)\,.
\end{align}
We bound each term above separately.
For the first term, a quick calculation shows that $\pl_t \nb \ln p_t(y) = -\Cov_{P_{0\mid t}(\cdot\mid y)}\bigl(\fc{\ve{y-\cdot}^2}{2t^2}, \nb V\bigr)$ is finite a.s.: by Cauchy{--}Schwarz, it suffices to show $\E_{P_{0|t}(\cdot|y)}[\ve{y-\cdot}^4]$ and $\E_{P_{0|t}(\cdot|y)}[\ve{\nb V}^2]$ are finite for all $y$, and this follows because for $t\le \frac{1}{2L}$ the measure $P_{0\mid t}(\cdot \mid y)$ is strongly log-concave and $\norm{\nabla V}^2$ can be bounded by a quadratic.
Because $\nb V$ is $L$-Lipschitz,
    \begin{align}
    \nonumber 
        \ve{[\pl_t \nb \log p_t(y)]|_{y=x_t}}^2 &= \norm{[\pl_t \E_{P_{0|t}(\cdot|y)}(\nb V)]|_{y=x_t}}^2\\
            \nonumber 
        &=
        \Bigl\lVert 
        \lim_{\De t\to 0} \rc{\De t}\, 
        \bigl[\E_{P_{0|t+\De t}(\cdot|y)}[\nb V]-
        \E_{P_{0|t}(\cdot|y)}[\nb V]
        \bigr]\bigr|_{y=x_t}
        \Bigr\rVert^2\\
        \label{e:dt-nb-pt}
        &\le L^2 \liminf_{\De t\to 0} 
        \rc{{(\De t)}^2}\,
        W_1^2\bigl(P_{0\mid t+\De t}(\cdot\mid x_t),
        P_{0\mid t}(\cdot\mid x_t)\bigr)\,.
    \end{align}

    Now $P_{0\mid t}(\cdot \mid y)$ has density $p_{0\mid t}(x\mid y)\propto p_0(x)\,e^{-\fc{\ve{x-y}^2}{2t}}$ so if $\norm{\nb^2 \log p_0}_{\msf{op}}\le L$ and $t\le \rc{2L}$, then $P_{0\mid t}$ is 
    $\rc{2t}$-strongly log-concave.
    By Talagrand's transport cost inequality,
    \begin{align*}
        W_1^2\bigl(P_{0\mid t+\De t}(\cdot\mid x_t),
        P_{0\mid t}(\cdot\mid x_t)\bigr)
        &\le 
        4t \KL\bigl(P_{0\mid t+\De t}(\cdot\mid x_t)\bigm\Vert P_{0\mid t}(\cdot\mid x_t)\bigr)\,.
    \end{align*}
    Plugging this back in~\eqref{e:dt-nb-pt} and using Fatou's lemma and the chain rule for KL,
    \begin{align}
    \nonumber
        \E[\ve{[\pl_t \nb \log p_t(y)]|_{y=x_t}}^2]
        &\le 
        L^2 \E \liminf_{\De t\to 0}\rc{{(\De t)}^2}\,
        4t \KL\bigl(P_{0\mid t+\De t}(\cdot\mid x_t)\bigm\Vert P_{0\mid t}(\cdot\mid x_t)\bigr)
        \nonumber \\
        &\le 
        L^2 \liminf_{\De t\to 0}\rc{{(\De t)}^2}\,
        4t \E \KL\bigl(P_{0\mid t+\De t}(\cdot\mid x_t)\bigm\Vert P_{0\mid t}(\cdot\mid x_t)\bigr) \\
        &\le 
        L^2 \liminf_{\De t\to 0}\rc{{(\De t)}^2}\, 4t  \KL(P_{0,t+\De t} \mmid P_{0,t})\,.\label{e:dt-nb-pt2}
    \end{align}
    Now
    \begin{align*}
        \KL(P_{0,t+\De t} \mmid P_{0,t})
        & = \E_{x\sim P_0} \KL\bigl(P_{t+\De t\mid 0}(\cdot \mid x), P_{t\mid 0}(\cdot \mid x)\bigr) 
        = \KL\bigl(\mc N(0,(t+\De t)I)\bigm\Vert \mc N(0,t I)\bigr)
        \\
        &= \fc d2\,\Bigl(-\log \frac{t+\De t}{t} + \fc{t+\De t}{t}-1 \Bigr)
        =\fc d4\,\bigl(\frac{\De t}{t}\bigr)^2 + O\Bigl(\bigl(\frac{\De t}{t}\bigr)^3\Bigr)\,.
    \end{align*}
    Plugging into~\eqref{e:dt-nb-pt2} gives
    \begin{align}\label{e:sp-1}
        \E[\ve{[\pl_t \nb \log p_t(y)]|_{y=x_t}}^2]
        &\le \fc{L^2d}{t}\,.
    \end{align}

    For the second term,
    by assumption we have $\norm{\nabla^2 \log p_t}_{\msf{op}} \le L$.
    Then, since $x_t \sim p_t$,
    \begin{align}
        \E[\norm{
        \nb^2\log p_t(x_t) \, \nb \log p_t(x_t)
        }^2]
        &\le 
        L^2
        \E_{p_t}[\norm{\nabla \log p_t}^2]
        \le L^3 d
        \label{e:sp-2}
    \end{align}
    using the fact $\E_\mu[\norm{\nabla \log \mu}^2] \le Ld$ for any measure $\mu$ such that $\log \mu$ is $L$-smooth, which follows from integration by parts.
    From~\eqref{e:score-chain},~\eqref{e:sp-1}, and~\eqref{e:sp-2}, and the elementary inequality $\langle a,b \rangle \le \norm a^2 + 4\,\norm b^2$, we get
    \begin{align*}
        \E[\ve{\pl_t \nb \log p_t(x_t)}^2]
        &\le 
        \E[\ve{[\pl_t \nb \log p_t(y)]|_{y=x_t}}^2]
        + \E[\norm{
        \nb^2\log p_t(x_t) \,\nb \log p_t(x_t)
        }^2] \\
        &\le L^2 d \, \bigl( L + \frac{1}{t}\bigr)\,. \qedhere
    \end{align*}
\end{proof}

The above result holds for the dynamics $\dot x_t = -\frac{1}{2}\,\nabla \ln p_t(x_t)$ for which ${(p_t)}_{t\ge 0}$ follows the heat flow; this corresponds to the variance-exploding SGM\@.
In this paper, since we wish to consider the SGM based on the variance-conserving Ornstein{--}Uhlenbeck (OU) process, we can apply the following reparameterization lemma.

\begin{lem}[Reparameterization]\label{l:reparam}
    Suppose that ${(x_t)}_{t\ge 0}$ satisfies the probability flow ODE for Brownian motion starting at $p_0$; that is, letting $p_t = p_0*\mc N(0, tI)$, we have $x_0\sim p_0$, $\dot x_t = -\frac{1}{2}\,\nb \log p_t(x_t)$. Then, if we set
    \[
    y_t = e^{-t}\, x_{e^{2t}-1}\,,
    \]
    then ${(y_t)}_{t\ge 0}$ satisfies the probability flow ODE for the OU process starting at $p_0$; that is, letting $\quo_t$ be the density of the OU process at time $t$, we have $y_0\sim p_0 = \quo_0$, $\dot y_t = -y_t - \nb \ln \quo_t(y_t)$.
\end{lem}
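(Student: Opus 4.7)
The plan is to verify the claim by direct calculation: compute $\dot y_t$ using the chain rule applied to $y_t = e^{-t}\,x_{e^{2t}-1}$, and then show that each term matches the right-hand side of the OU probability flow ODE. The two ingredients needed are (i) the ODE satisfied by $(x_t)$ and (ii) a change-of-variables identity relating $\nabla \ln p_{e^{2t}-1}$ to $\nabla \ln \quo_t$.

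First I would set $s(t) = e^{2t}-1$, so $s'(t) = 2 e^{2t}$, and differentiate:
\begin{align*}
    \dot y_t
    &= -e^{-t} x_{s(t)} + e^{-t}\,s'(t)\,\dot x_{s(t)}
    = -y_t + 2 e^{t}\,\dot x_{s(t)}
    = -y_t - e^{t}\,\nabla \ln p_{s(t)}(x_{s(t)})\,,
\end{align*}
using the hypothesis $\dot x_t = -\tfrac{1}{2}\,\nabla \ln p_t(x_t)$ and $x_{s(t)} = e^t y_t$. It remains to identify $e^t\,\nabla \ln p_{s(t)}(e^t y_t)$ with $\nabla \ln \quo_t(y_t)$.

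For this, I would first check that the pushforward of $p_{s(t)}$ under the map $x\mapsto e^{-t}x$ is exactly $\quo_t$. Writing $X_{s(t)} = X_0 + \sqrt{s(t)}\,\xi$ with $X_0\sim p_0$ and $\xi\sim \mc N(0,I)$ independent, we get $e^{-t}X_{s(t)} = e^{-t}X_0 + \sqrt{1-e^{-2t}}\,\xi$, which is precisely the solution of the OU SDE at time $t$ started from $p_0$; hence its law is $\quo_t$. The standard Jacobian formula for pushforward densities then gives
\begin{align*}
    \quo_t(y) = e^{dt}\,p_{s(t)}(e^t y)
    \qquad\Longrightarrow\qquad
    \nabla \ln \quo_t(y) = e^t\,\nabla \ln p_{s(t)}(e^t y)\,.
\end{align*}
Evaluating at $y = y_t$ yields $e^t\,\nabla \ln p_{s(t)}(x_{s(t)}) = \nabla \ln \quo_t(y_t)$, and substituting into the earlier expression for $\dot y_t$ gives $\dot y_t = -y_t - \nabla \ln \quo_t(y_t)$, as required. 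The initial condition $y_0 = x_0 \sim p_0 = \quo_0$ is immediate.

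There is no real obstacle here; the argument is a bookkeeping exercise in the time change $t\mapsto e^{2t}-1$ and the spatial rescaling $x\mapsto e^{-t}x$. The only point to be slightly careful about is the factor of $\tfrac{1}{2}$ in the Brownian probability flow ODE (coming from the fact that Brownian motion has infinitesimal generator $\tfrac{1}{2}\Delta$), which combines with $s'(t) = 2e^{2t}$ to produce a clean $e^t$ prefactor; and the factor of $e^t$ from the gradient change of variables exactly cancels this, giving the OU drift $-y_t - \nabla \ln \quo_t(y_t)$.
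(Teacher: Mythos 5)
Your proof is correct and takes the same route as the paper's (which merely states that $\quo_t(y)\propto p_{e^{2t}-1}(e^t y)$ and then invokes the chain rule); you simply spell out the details, including the explicit $e^{dt}$ Jacobian prefactor and the SDE-solution identity $e^{-t}X_0+\sqrt{1-e^{-2t}}\,\xi\sim\quo_t$, which the paper leaves implicit.
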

\begin{proof}
    By direct calculation, one can check that for any $y\in\R^d$, it holds that $\quo_t(y) \propto p_{e^{2t}-1}(e^t y)$.
    The claim follows from the chain rule.
\end{proof}

\begin{lem}[Score perturbation for OU]\label{l:sp-ou}
    Suppose $\quo_t$ is the density of the OU process at time $t$, started at $\quo_0$, and $y_0\sim \quo_0$, $\dot y_t = -y_t-\nb \log \quo_t(y_t)$. 
    Suppose for all $t$ and all $x$ that $\norm{\nb^2\log \quo_{t}(x)}_{\msf{op}} \le L$, where $L\ge 1$.
    Then,
    \begin{align*}
    \E
    [\ve{\pl_t\nb \log \quo_{t}(y_t)}^2]
    &\lesssim 
    L^2 d \,\bigl(L \vee \frac{1}{t}\bigr)\,.
    \end{align*}
\end{lem}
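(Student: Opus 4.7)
My plan is to use the reparameterization from Lemma \ref{l:reparam} to reduce the statement to Lemma \ref{l:sp} for the Brownian heat flow. Setting $s = e^{2t}-1$ and $x_s = e^t y_t$, the identity $\quo_t(y) \propto p_s(e^t y)$ from Lemma \ref{l:reparam} implies $\nb \log \quo_t(y_t) = e^t\,\nb \log p_s(x_s)$. Differentiating along the trajectory using $ds/dt = 2 e^{2t}$ (and noting that the total $d/ds$ derivative of $\nb \log p_s(x_s)$ along the Brownian probability flow is exactly the $\pl_s$ bounded in Lemma \ref{l:sp}), I obtain
\begin{equation*}
    \pl_t\bigl[\nb \log \quo_t(y_t)\bigr] = e^t\,\nb \log p_s(x_s) + 2 e^{3t}\,\pl_s\bigl[\nb \log p_s(x_s)\bigr].
\end{equation*}
The first summand is just $\nb \log \quo_t(y_t)$, so since $y_t \sim \quo_t$ and $\log \quo_t$ is $L$-smooth, integration by parts bounds its expected squared norm by $L d$.

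The main obstacle is to control the second summand: a naive application of Lemma \ref{l:sp} with Lipschitz constant $L$ yields $L^2 d\,(L + 1/s)$, which multiplied by the prefactor $4 e^{6t}$ blows up exponentially in $t$. The key observation is that the reparameterization identity $\nb^2 \log \quo_{t'}(y) = e^{2t'}\,\nb^2 \log p_{s'}(e^{t'} y)$ (valid for $s' = e^{2t'}-1$) gives the much tighter uniform Hessian bound $\ve{\nb^2 \log p_{s'}}_{\mathsf{op}} \le L/(1+s')$, which is what must be fed into Lemma \ref{l:sp} in order to cancel the $e^{6t}$.

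To exploit this, in the regime $s \ge 1/(2L)$ I would apply Lemma \ref{l:sp} not to the Brownian flow started at $p_0$, but to the time-shifted Brownian flow started at $p_{s-r}$ and run for time $r := (1+s)/(2L+1)$. With this choice, the ``prior'' $p_{s-r}$ has Lipschitz score constant $\tilde L := L/(1+s-r) \asymp L/(1+s)$, and one checks directly that $r \le 1/(2\tilde L)$ with $1/r \asymp \tilde L$, so the hypotheses of Lemma \ref{l:sp} are met with the smaller constant $\tilde L$. Its conclusion then reads $\E\,\ve{\pl_s \nb \log p_s(x_s)}^2 \lesssim \tilde L^2 d\,(\tilde L + 1/r) \lesssim \tilde L^3 d \asymp L^3 d/(1+s)^3$. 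Since $(1+s)^3 = e^{6t}$, the factor $4 e^{6t}$ cancels exactly, leaving an $O(L^3 d)$ bound on the second summand. In the complementary regime $s < 1/(2L)$ (equivalently $t = O(1/L)$, so $e^{6t} = O(1)$ and $s \asymp t$), I would apply Lemma \ref{l:sp} directly, which yields $O(L^2 d\,(L + 1/t))$ for the second summand.

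Combining the two regimes with the first-term bound $L d$, and using $L \ge 1$ so that both $L d$ and $L^3 d$ are absorbed into $L^2 d\,(L \vee 1/t)$, yields $\E\,\ve{\pl_t \nb \log \quo_t(y_t)}^2 \lesssim L^2 d\,(L \vee 1/t)$, as required.
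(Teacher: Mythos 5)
Your proof is correct and follows essentially the same route as the paper: the same change of variables to the Brownian probability flow, the same decomposition $\pl_t\nb\log\quo_t(y_t) = e^t\nb\log p_s(x_s) + 2e^{3t}\pl_s\nb\log p_s(x_s)$, the same $Ld$ bound on the first term by integration by parts, and the same invocation of Lemma~\ref{l:sp} with a rescaled Lipschitz constant $\asymp L/(1+s)$ on the second term. The one place where you are more explicit than the paper is in verifying the hypothesis of Lemma~\ref{l:sp}: the paper simply asserts $\norm{\nb^2\ln p_s}_{\msf{op}}\le e^{-2t}L$ and applies the lemma, whereas the hypothesis actually concerns the Hessian at time $(s-\rc{2L'})\vee 0$, not at $s$; your choice $r=(1+s)/(2L+1)$ makes $r = 1/(2\tilde L)$ exactly and checks the Hessian bound at the correct earlier time $p_{s-r}$, which is a cleaner way to justify the step (the paper's version also works, since the Hessian at the earlier time exceeds $L/(1+s)$ by at most a factor of $2$).
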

\begin{proof}
    Using the relationship $\quo_t(y) \propto p_{e^{2t}-1}(e^t y)$,
    \begin{align*}
        \nb \log\quo_t(y)
        &= e^t\, \nb \log p_{e^{2t}-1}(e^t y)\,,
        \\
        \pl_t\nb  \log\quo_t(y_t)
        &= 
        \ub{e^t\, \nb \ln p_{e^{2t}-1}(x_{e^{2t}-1})}{\eqqcolon A} + 
        \ub{e^t \,\pl_s \nb \ln p_s (x_s)|_{s=e^{2t}-1}\cdot 2e^{2t}}{\eqqcolon B}\,.
    \end{align*}
    If $\norm{\nb^2 \ln \quo_t}_{\msf{op}} \le L$, then $\norm{\nb^2 \ln p_{e^{2t}-1}}_{\msf{op}} \le e^{-2t}L$. By Lemma~\ref{l:sp},
    \begin{align*}
        \E[\norm{\pl_s \nb \ln p_s(x_s)|_{s=e^{2t}-1}}^2]
        &\lesssim e^{-4t} L^2 d \,\bigl(e^{-2t} L \vee \frac{1}{e^{2t} -1}\bigr)\,.
    \end{align*}
    Hence 
    \begin{align*}
        \E[B^2]
        \lesssim L^2 d \,\bigl(L \vee \frac{1}{t}\bigr)\,.
    \end{align*}
    Next, 
    \begin{align*}
        \E[A^2]
        &\le e^{2t} \E[\norm{\nb \ln p_{e^{2t}-1}(x_{e^{2t}-1})}^2]
        \le e^{2t}\, e^{-2t} Ld
        \le Ld\,.
    \end{align*}
    The result follows.
\end{proof}

Finally, we use Lemma~\ref{l:sp-ou} to derive a bound on how much the score changes along the trajectory of the probability flow ODE.
\begin{cor}\label{c:sp}
Consider the setting of Lemma~\ref{l:sp-ou}, and suppose $0<s<t$, $h=t-s$.
\begin{enumerate}
    \item If $s,t\gtrsim 1/L$, then
\[
\E\bigl[\norm{\nb \ln q_t^{\rightarrow}(x_t) - \nb \ln q_s^{\rightarrow}(x_s)}^2\bigr]\lesssim L^3dh^2\,.
\]
\item 
If $\fc{t}2\le s\le t\lesssim \rc L$, then
\[
\E\bigl[ \norm{\nb \ln q_t^{\rightarrow}(x_t) - \nb \ln q_s^{\rightarrow}(x_s)}^2\bigr] \lesssim 
\frac{L^2 dh^2}{t}\,.
\]
\end{enumerate}
\end{cor}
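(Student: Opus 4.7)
The plan is to deduce both bounds directly from Lemma \ref{l:sp-ou} by writing the increment of the score as an integral of its time derivative along the probability flow ODE trajectory. By the fundamental theorem of calculus,
\[
\nabla \ln q_t^\rightarrow(x_t) - \nabla \ln q_s^\rightarrow(x_s) = \int_s^t \partial_u \bigl[\nabla \ln q_u^\rightarrow(x_u)\bigr] \, du,
\]
so Cauchy{--}Schwarz applied on the interval $[s,t]$ of length $h = t-s$, followed by Tonelli's theorem and Lemma \ref{l:sp-ou}, yields
\[
\E\bigl[\norm{\nabla \ln q_t^\rightarrow(x_t) - \nabla \ln q_s^\rightarrow(x_s)}^2\bigr] \le h \int_s^t \E\bigl[\norm{\partial_u \nabla \ln q_u^\rightarrow(x_u)}^2\bigr] \, du \lesssim L^2 d \, h \int_s^t \bigl(L \vee \tfrac{1}{u}\bigr) \, du.
\]
It then remains to evaluate this integral in each of the two regimes.

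In Case 1, the assumption $s, t \gtrsim 1/L$ forces $1/u \lesssim L$ on $[s,t]$, so the integral is $\lesssim Lh$ and the bound becomes $\lesssim L^3 d h^2$, as claimed. In Case 2, the assumption $t/2 \le s \le t \lesssim 1/L$ instead gives $L \vee 1/u = 1/u$ on $[s,t]$, so the remaining integral evaluates to $\ln(t/s)$. The naive estimate $\ln(t/s) \le \ln 2$ would only produce $L^2 d h$, which is weaker than the claimed bound by a factor of $h/t$; the key observation is that since $s \ge t/2$, the sharper expansion $\ln(t/s) = \ln\bigl(1 + (t-s)/s\bigr) \le h/s \le 2h/t$ applies, producing the desired $\lesssim L^2 d h^2 / t$.

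Consequently, the entire derivation is a short corollary of Lemma \ref{l:sp-ou}, relying only on the fundamental theorem of calculus, Cauchy{--}Schwarz, Tonelli, and the elementary inequality $\ln(1+x) \le x$. The one place where a little technical care is needed is in Case 2, where the comparability of $s$ and $t$ must be exploited to turn the logarithm into a factor of $h/t$ rather than a harmless $O(1)$ constant; this is what allows the bound to scale quadratically in $h$ (with the blow-up $1/t$) instead of merely linearly. No new analytic ingredients beyond those used in Lemmas \ref{l:sp} and \ref{l:sp-ou} are required.
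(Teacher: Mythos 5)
Your proof is correct and follows exactly the same route as the paper's: apply the fundamental theorem of calculus, Cauchy--Schwarz over $[s,t]$, and Lemma~\ref{l:sp-ou}, then evaluate the integral of $L \vee \tfrac{1}{u}$ in the two regimes, using $\ln(t/s)\le h/s\lesssim h/t$ in Case~2. You simply spell out the $\ln(1+x)\le x$ step that the paper leaves implicit.
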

\begin{proof}
By Lemma~\ref{l:sp-ou},
\begin{align*}
\E\bigl[ \norm{\nabla \ln q_t^{\rightarrow}(x_t) - \nabla \ln q_s^{\rightarrow}(x_s)}^2\bigr]
    &= \E\Bigl[\Bigl\lVert \int_s^t \pl_u \nb \ln \quo_u(x_u)\,\D u\Bigr\rVert^2\Bigr]\\
    &\le (t-s) \int_s^t \E[\norm{\pl_u \nb \ln \quo_u(x_u)}^2]\,\D u\\
    &\lesssim h \int_s^t L^2 d\max\bigl\{L, \rc{u}\bigr\} \,\D u\,.
\end{align*}
In the first case, this is bounded by $O(L^3 dh^2)$. In the second case, this is bounded by $O(L^2 dh \int_s^t\rc{u}\,\D u) = O(L^2 dh \log(t/s)) = O(L^2 dh^2/t)$.
\end{proof}

\section{Predictor step}\label{sec:predictor}



Next, we need an ODE discretization analysis.

\begin{lem}
\label{l:predictor-1step}
Suppose the score function satisfies Assumption~\ref{ass:lip_score}.
Assume that $L\ge 1$, $h\lesssim 1/L$, and $T-(t_0+h)\ge \fc{T-t_0}2$. Then 
\[
W_2(q\Podeth{t_0}{h}, q\Podesth{t_0}{h}) \lesssim Ld^{1/2} h^2\, \Bigl(L^{1/2} \vee \frac{1}{{(T-t_0)}^{1/2}}\Bigr) + h\esc\,.
\]
\end{lem}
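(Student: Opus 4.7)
My approach is a standard synchronous-coupling argument for one step of the exponential integrator, combined with a careful decomposition of the integrated drift difference, where I then apply Corollary~\ref{c:sp} to control the time variation of the score and Assumption~\ref{ass:score_error} to control the score estimation error.

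Concretely, couple the two processes by drawing $y \sim q$ (where $q = q_{t_0}$) and using it as the common starting point. Let $x_s$ denote the continuous reverse ODE with $x_{t_0}=y$, and let $\wh x_{t_0+h}$ denote the one-step exponential integrator using $s_{t_0}(y)$. Using Duhamel's formula for $\dot x_s = x_s + \nabla\ln q_s(x_s)$ together with the closed form of the exponential integrator, I would write
\begin{align*}
\wh x_{t_0+h} - x_{t_0+h}
= \int_{t_0}^{t_0+h} e^{t_0+h-u}\bigl[s_{t_0}(y) - \nabla\ln q_u(x_u)\bigr]\,du\,,
\end{align*}
and decompose the integrand as $[s_{t_0}(y)-\nabla\ln q_{t_0}(y)] + [\nabla\ln q_{t_0}(x_{t_0})-\nabla\ln q_u(x_u)]$. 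The first piece is the score estimation error at the starting time and has squared expectation $\le \esc^2$ by Assumption~\ref{ass:score_error}. The second piece is exactly the quantity controlled by Corollary~\ref{c:sp}, applied along the (forward/reverse) probability flow trajectory.

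After a Cauchy--Schwarz (Jensen) bound and noting $e^{2(t_0+h-u)}=O(1)$ since $h \lesssim 1/L \le 1$, I obtain
\begin{align*}
W_2^2(q\Podeth{t_0}{h}, q\Podesth{t_0}{h})
\;\le\; \E\ve{\wh x_{t_0+h}-x_{t_0+h}}^2
\;\lesssim\; h^2 \esc^2 + h\int_{t_0}^{t_0+h}\E\ve{\nabla\ln q_{t_0}(x_{t_0}) - \nabla\ln q_u(x_u)}^2\,du\,.
\end{align*}
The hypothesis $T-(t_0+h)\ge (T-t_0)/2$ translates (in the forward-time parametrization used by Corollary~\ref{c:sp}, with $t=T-t_0$ and $s=T-u$) exactly to the condition $t/2\le s\le t$ needed for case~2 of that corollary, while case~1 applies when $T-t_0\gtrsim 1/L$. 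In either regime the integrand at time $u$ is bounded by either $L^3 d\,(u-t_0)^2$ or $L^2 d\,(u-t_0)^2/(T-t_0)$; integrating $(u-t_0)^2$ over $[t_0,t_0+h]$ gives $h^3/3$, so the second term becomes $\lesssim L^2 d h^4 \, (L \vee 1/(T-t_0))$. Taking square roots yields the claimed $Ld^{1/2}h^2\,(L^{1/2}\vee (T-t_0)^{-1/2}) + h\esc$.

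\textbf{Main obstacle.} The routine ODE discretization work is easy; the only subtle point is ensuring that the conditions of Corollary~\ref{c:sp} are met uniformly for $u\in[t_0,t_0+h]$. The interval restriction $h\le (T-t_0)/2$ is exactly what keeps $s=T-u$ in the regime $[t/2,t]$ required by the corollary's second case, and this is precisely why the lemma imposes $T-(t_0+h)\ge (T-t_0)/2$; recognizing this translation between reverse-time and forward-time indexing is the one spot where care is required.
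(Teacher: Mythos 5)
Your proposal is correct and follows essentially the same route as the paper: couple the two processes at $q_{t_0}$, decompose the integrated drift error into a score-estimation term (controlled by Assumption~\ref{ass:score_error}) and a score-perturbation term (controlled by Corollary~\ref{c:sp}), and identify the hypothesis $T-(t_0+h)\ge (T-t_0)/2$ with the condition $s\ge t/2$ needed for case~2 of that corollary. The only difference is cosmetic: you bound the one-step ODE error via Duhamel's formula followed by Cauchy--Schwarz, treating the linear drift exactly, whereas the paper differentiates $\norm{x_t-\Xs_t}^2$, applies Young's inequality, and invokes Gr\"onwall; both yield the same $\lesssim h\int_{t_0}^{t_0+h}\E\norm{\nabla\ln q_u(x_u)-s_{t_0}(x_{t_0})}^2\,\D u$ estimate and hence the same final bound.
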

\begin{proof}
    We have the ODEs 
    \begin{align*}
        \dot x_t
        &= x_t + \nabla \ln q_t(x_t)\,, \\
        \dot \Xs_t
        &= \Xs_t + s_{t_0}(\Xs_{t_0})\,,
    \end{align*}
    for $t_0 \le t \le t_0+h$,
    with $x_{t_0} = \Xs_{t_0}\sim q$, $x_{t_0+h}\sim q\Pode$, and $\Xs_{t_0+h}\sim q\Podes$. Then,
    \begin{align*}
        \partial_t \norm{x_t - \Xs_t}^2 
        &= 2\,\langle x_t - \Xs_t, \dot x_t - \dot \Xs_t\rangle \\
        &= 2\pa{\norm{x_t - \Xs_t}^2 + \an{x_t - \Xs_t,\nb \ln q_t(x_t) + s_{t_0}(\Xs_{t_0})}}\\
        &\le \bigl(2+\rc h\bigr)\,\norm{x_t - \Xs_t}^2 + h\,\norm{\nb \ln q_t(x_t) - s_{t_0}(\Xs_{t_0})}^2\,.
\end{align*}
By Gr\"onwall's inequality, noting that $h=O(1)$,
\begin{align}
    \nonumber
    \E[\norm{x_{t_0+h} - \Xs_{t_0+h}}^2]
    &\le \exp\bigl(\bigl(2+\frac{1}{h}\bigr)\,h\bigr) \int_{t_0}^{t_0+h} h\E[\norm{\nb\ln q_t(x_t) - s_{t_0}(\Xs_{t_0})}^2] \,\D t\\
    &\lesssim h \int_{t_0}^{t_0+h} \E[\norm{\nb\ln q_t(x_t) - s_{t_0}(\Xs_{t_0})}^2] \,\D t\,.
    \label{e:pred-gronwall}
\end{align}
We split up the error term as
\begin{align*}
\norm{\nb\ln q_t(x_t) - s_{t_0}(\Xs_{t_0})}^2 
&\lesssim \norm{\nb \ln q_t(x_t) - \nb \ln q_{t_0}(x_{t_0})}^2 + \norm{\nb \ln q_{t_0}(x_{t_0}) - s_{t_0}(\Xs_{t_0})}^2\,.
\end{align*}
By Corollary~\ref{c:sp}, the expectation of the first term is bounded by
\begin{align*}
\E[\norm{\nb\ln q_t(x_t) - \nb \ln q_{t_0}(x_{t_0})}^2]
&\lesssim L^2dh^2 \,\bigl(L \vee \frac{1}{T-t_0}\bigr)\,.
\end{align*}
The second term is bounded in expectation by $\esc^2$. Plugging back into~\eqref{e:pred-gronwall} gives
\begin{align*}
\E[\norm{x_{t_0+h} - \Xs_{t_0+h}}^2]
&\lesssim h^2 \, \Bigl(L^2dh^2 \,\bigl(L \vee \frac{1}{T -t_0}\bigr) + \esc^2\Bigr)\,.
\end{align*}
The Wasserstein distance is bounded by the square root of this quantity, and the lemma follows.
\end{proof}

Lemma~\ref{l:predictor-1step} suggests that focusing on the dependence on $d$, we will be able to take $h \asymp d^{-1/2}$ (we need to keep one factor of $h$ in the bound, as we need to sum up the bound over $1/h$ iterations). 

\begin{rmk}
\label{r:sp}
Our improved score perturbation lemma is necessary to obtain this $d^{1/2}$ dependence. The original score perturbation lemma \cite[Lemma C.11--12]{leelutan22sgmpoly} combined with a space discretization bound gives a bound of 
\[
\E\bigl[\norm{\nb \ln q_t^{\rightarrow}(x_t) - \nb \ln q_s^{\rightarrow}(x_s)}^2\bigr] \lesssim 
L^2 dh
\]
in place of Corollary~\ref{c:sp}. Note this is a $\frac12$-H\"older continuity bound rather than a Lipschitz bound.
The bound in Lemma~\ref{l:predictor-1step} then becomes 
\[
W_2(q\Podeth{t_0}{h}, q\Podesth{t_0}{h}) \lesssim Ld^{1/2} h^{3/2} + h\esc\,,
\]
and we would only be able to take $h\asymp d^{-1}$. We also note that our bound has an extra factor of $\max\{L^{1/2}, (T-t_0)^{-1/2}\}$; we do not know if this extra factor is necessary.
\end{rmk}


We now iterate Lemma~\ref{l:predictor-1step} to obtain the following result. Note that we now need to also assume that the score estimate is $L$-Lipschitz.
\begin{lem}\label{lem:pred}
Suppose that both Assumptions~\ref{ass:lip_score} and~\ref{ass:lip_estimate} hold.
Let $h_1,\ldots, h_N > 0$ be a sequence such that 
letting $t_N = h_1+\cdots + h_N$, we have $t_N \le 1/L$. 
Let $h_{\max} = \max_{1\le n\le N} h_n$. 
\begin{enumerate}
\item
If $T-(t_0 + t_N) \gtrsim 1/L$, then 
\begin{align*}
W_2(q\Podeth{t_0}{h_1,\ldots, h_N}, 
q\Podesth{t_0}{h_1,\ldots, h_N}) \lesssim 
 L^{3/2} d^{1/2} h_{\max} t_N + \esc t_N
 \le L^{1/2} d^{1/2}h_{\max} +\fc{\esc}{L}\,.
\end{align*}
\item
If $T-t_0\lesssim 1/L$ and $h_{n+1}\le \fc{T-t_0-t_n}2$ for each $n$, then 
\begin{align*}
W_2(q\Podeth{t_0}{h_1,\ldots, h_N}, 
q\Podesth{t_0}{h_1,\ldots, h_N}) \lesssim 
 L^{1/2} d^{1/2} h_{\max} + \esc t_N
 \le  L^{1/2} d^{1/2} h_{\max} + \fc{\esc}{L}\,.
\end{align*}
\end{enumerate}
\end{lem}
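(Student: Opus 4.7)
The plan is to iterate the one-step bound of Lemma~\ref{l:predictor-1step} via a telescoping argument. For $k = 0, 1, \ldots, N$, introduce the interpolating laws
\begin{align*}
    \mu_k \deq q\,\Podeth{t_0}{h_1, \ldots, h_k}\,\Podesth{t_0 + t_k}{h_{k+1}, \ldots, h_N},
\end{align*}
where $t_k \deq h_1 + \cdots + h_k$, so that $\mu_0 = q\,\Podesth{t_0}{h_1,\ldots,h_N}$ and $\mu_N = q\,\Podeth{t_0}{h_1,\ldots,h_N}$. By the triangle inequality, $W_2(\mu_0,\mu_N) \le \sum_{k=0}^{N-1} W_2(\mu_k, \mu_{k+1})$. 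The key observation is that $\mu_k$ and $\mu_{k+1}$ are both obtained by first transporting $q$ by the exact flow to the common measure $q\,\Podeth{t_0}{h_1,\ldots,h_k} = q_{t_0 + t_k}$ (since the ideal probability flow ODE preserves the reverse-process marginals), then applying either $\Podeth{t_0+t_k}{h_{k+1}}$ or $\Podesth{t_0+t_k}{h_{k+1}}$, and finally applying the \emph{same} discretized tail $\Podesth{t_0 + t_{k+1}}{h_{k+2}, \ldots, h_N}$.

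The next ingredient is that this tail acts as an $O(1)$-Lipschitz map on $W_2$. Under Assumption~\ref{ass:lip_estimate}, the single-step update~\eqref{eq:prob_ode_discrete} is the deterministic function $x \mapsto e^{h_n} x + (e^{h_n} - 1)\,s_{\cdot}(x)$, whose Lipschitz constant is at most $e^{h_n} + (e^{h_n} - 1) L \le e^{O(h_n L)}$ for $h_n \lesssim 1/L$. The composition over the tail is thus at most $\exp(O(L t_N)) = O(1)$, since $L t_N \le 1$. Combined with an optimal coupling at step $k+1$, this yields
\begin{align*}
    W_2(\mu_k, \mu_{k+1}) \lesssim W_2\bigl(q_{t_0 + t_k}\,\Podeth{t_0 + t_k}{h_{k+1}}, q_{t_0 + t_k}\,\Podesth{t_0 + t_k}{h_{k+1}}\bigr),
\end{align*}
to which I apply Lemma~\ref{l:predictor-1step} with time parameter $t_0 + t_k$; its precondition $h_{k+1} \le \tau_k/2$, where $\tau_k \deq T - t_0 - t_k$, holds in both cases (directly in case~(2), and via the lower bound on $\tau_N$ together with $h_{k+1} \le t_N \le 1/L$ in case~(1), after tuning the constants hidden in $\gtrsim$). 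Summing gives
\begin{align*}
    W_2(\mu_0, \mu_N) \lesssim L d^{1/2} \sum_{k=0}^{N-1} h_{k+1}^2 \bigl(L^{1/2} \vee \tau_k^{-1/2}\bigr) + \esc\,t_N.
\end{align*}

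In case~(1), $\tau_k \ge \tau_N \gtrsim 1/L$ uniformly, so $\tau_k^{-1/2} \le L^{1/2}$ and the sum is at most $L^{1/2} h_{\max} t_N$; combined with $t_N \le 1/L$, this delivers the first claim. Case~(2) is more delicate because $\tau_k^{-1/2}$ grows as $k$ approaches $N$. The trick is the telescoping identity
\begin{align*}
    \frac{h_{k+1}}{\sqrt{\tau_k}} = \frac{\tau_k - \tau_{k+1}}{\sqrt{\tau_k}} \le 2\bigl(\sqrt{\tau_k} - \sqrt{\tau_{k+1}}\bigr),
\end{align*}
obtained from the mean value theorem applied to $\tau \mapsto 2\sqrt\tau$. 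Summing telescopes to $\sum_k h_{k+1}/\sqrt{\tau_k} \le 2\sqrt{\tau_0} = 2\sqrt{T-t_0} \lesssim 1/\sqrt L$, so pulling out $h_{\max}$ yields $\sum_k h_{k+1}^2/\sqrt{\tau_k} \lesssim h_{\max}/\sqrt L$, and multiplying through by $L d^{1/2}$ gives $L^{1/2} d^{1/2} h_{\max}$, as required.

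The main obstacle is case~(2): a direct bound using only $h_{k+1} \le \tau_k/2$ yields at best $\sum h_{k+1}^2/\sqrt{\tau_k} \lesssim 1/L^{3/2}$, which is too weak in the relevant regime $h_{\max} \ll 1/L$. The Riemann-sum viewpoint above---effectively realizing the sum as approximating $\int \tau^{-1/2}\,d\tau = 2\sqrt{\tau}$---is exactly what recovers the correct $h_{\max}/\sqrt L$ scaling and, downstream, the improved $\sqrt d$ dimension dependence in the end-to-end complexity.
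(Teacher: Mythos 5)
Your proof is correct and takes essentially the same approach as the paper's: reduce via the triangle inequality to a sum of one-step errors controlled by Lemma~\ref{l:predictor-1step}, use that a single discretized step with $L$-Lipschitz score estimate is $\exp(O(Lh))$-Lipschitz (hence the tail is $O(1)$-Lipschitz in $W_2$ since $L t_N \lesssim 1$), and in case~(2) control $\sum_k h_{k+1}^2/\sqrt{\tau_k}$ by a Riemann-sum/telescoping bound. The paper organizes the same computation as a recursion peeling off the last step (yielding a sum with accumulated Lipschitz factors $\exp(O(L(h_{n+1}+\cdots+h_N)))$), whereas you telescope over interpolating laws $\mu_k$; these are equivalent, and your explicit mean-value-theorem telescoping for $\sum_k h_{k+1}/\sqrt{\tau_k} \le 2\sqrt{\tau_0}$ makes precise the step the paper only sketches as ``interpreting the summation as a Riemann sum.''
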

\begin{proof}
We abbreviate $\Pode^N  \deq  \Podeth{t_0}{h_1,\ldots, h_N}$ and $\Podes^N  \deq  \Podesth{t_0}{h_1,\ldots, h_N}$. Using the triangle inequality, 
\begin{align*}
        W_2(q\Pode^N, q\Podes^N)
        &\le W_2(q\Pode^N, q\Pode^{N-1} \Podes) + W_2(q\Pode^{N-1}\Podes, q\Podes^N) \\
        &\le O(Ld^{1/2} h_N^{2}\max\{L^{1/2}, {(T-t_0 - t_N)}^{-1/2}\} + h_N \esc) \\
        &\qquad{} +\exp(O(
        L h_N)) \, W_2(q\Pode^{N-1}, q\Podes^{N-1})
    \end{align*}
    where the bound on the first term is by Lemma~\ref{l:predictor-1step}. By induction,
    \begin{align*}
W_2(q\Pode^N, q\Podes^N)
        &\lesssim \sum_{n=1}^N \bigl(L d^{1/2} h_n^2\max\{L^{1/2}, (T-t_0-t_n)^{-1/2}\} + h_n \esc\bigr) \\
        &\qquad\qquad\qquad{} \times \exp(O(L\,(h_{n+1}+\cdots + h_N)))\,.
        \end{align*}
       By assumption, $h_{n+1}+\cdots + h_N\le t_N \le 1/L$. In the first case, we get 
        \begin{align*}
        W_2(q\Pode^N, q\Podes^N)
        &\lesssim L^{3/2} d^{1/2} h_{\max} t_N + \esc t_N\,. 
\end{align*}
In the second case we get
\begin{align*}
W_2(q\Pode^N, q\Podes^N)
        &\lesssim Ld^{1/2} h_{\max} \sum_{n=1}^N \fc{h_n}{{(T-t_0 - t_n)}^{1/2}} + \esc t_N \lesssim L^{1/2} d^{1/2} h_{\max} + \esc t_N
\end{align*}
by interpreting the summation as a Riemann sum, and noting that the condition $h_{n+1}\le \fc{T-t_0 - t_n}{2}$ implies that this is a constant-factor approximation of the integral $\int_{T-t_0-t_N}^{T-t_0} \rc{t^{1/2}} \,\D t \lesssim \sqrt{T-t_0}$.
\end{proof}



\paragraph{Choice of step sizes.} In the first case, we can take all the step sizes to be equal, but in the second case, we may need to take decreasing step sizes. Given a target time $T - t_0 - t_N=\delta$, by taking 
$h_1=h_{\max}$ and then 
\begin{align}
\label{e:stepsize}
h_n=\min\Bigl\{\delta,\,h_{\max},\, \fc{T-t_0-t_{n-1}}2\Bigr\}\,,
\end{align}
we can reach the target time in
\begin{align}\label{e:N-round}
    N = O\Bigl(\frac{1}{Lh_{\max}} + \ln \frac{h_{\max}}{\delta}\Bigr)\quad\text{steps}\,.\end{align}

\newcommand{\window}{W}

\section{Corrector step}
\label{sec:corrector}


In \S\ref{sec:over} (resp.\ \S\ref{sec:under}), we will show that if $p$, $q$ are close in Wasserstein distance, then running the corrector step based on the overdamped (resp.\ underdamped) Langevin diffusion starting from $p$ and from $q$ for some amount of time results in distributions which are close in \emph{total variation} distance.
In the end-to-end analysis in \S\ref{sec:end_to_end}, we combine this ``total variation to Wasserstein'' regularization with the Wasserstein discretization analysis of the predictor step in \S\ref{sec:predictor} in order to establish our final results.


\subsection{Corrector via overdamped Langevin}\label{sec:over}

We will take the potential and score estimate defining the Markov kernels $\Plan$ and $\Plans$ from \S\ref{sec:diffusion} to be $U$ and $s$ respectively. Recall that these correspond respectively to running the overdamped Langevin diffusion with stationary distribution $q \propto \exp(-U)$ and running the discretized diffusion with score estimate $s$, both for time $h$.

The main result of this section is to show that $p\Plans^N$ and $q$ are close in total variation if $p$ and $q$ are close in Wasserstein.

\begin{thm}[Overdamped corrector]\label{thm:main_overdamped}
For any $\Tcorr\deq Nh \lesssim 1/L$,
\begin{equation}
    \TV(p\Plans^N,q) \lesssim W_2(p,q)/\sqrt{\Tcorr} + \esc\sqrt{\Tcorr} + L\sqrt{dh\Tcorr} \,.
\end{equation}
In particular, for $\Tcorr \asymp 1/L$,
\begin{equation}
    \TV(p\Plans^N,q) \lesssim \sqrt L \,W_2(p,q) + \esc/\sqrt{L} + \sqrt{Ldh} \,.
\end{equation}
\end{thm}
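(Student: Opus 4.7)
The plan is to use the triangle inequality to split
\[
\TV(p\Plans^N, q) \le \TV(p\Plans^N, p\Plan^N) + \TV(p\Plan^N, q)\,,
\]
and then, using that $q$ is stationary under $\Plan$, rewrite the second summand as $\TV(p\Plan^N, q\Plan^N)$. This decomposition matches the target bound cleanly: the second summand is a ``Wasserstein-to-TV'' regularization statement for the exact continuous overdamped Langevin semigroup and yields the $W_2(p,q)/\sqrt{\Tcorr}$ term; the first summand compares the discretized estimated-score process to its continuous true-score counterpart (both started from $p$) and, via Girsanov, yields the discretization and score-estimation terms.

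For the regularization piece, I would first establish the pointwise estimate $\TV(\delta_x \Plan^N, \delta_y \Plan^N) \lesssim \|x-y\|/\sqrt{\Tcorr}$ for $\Tcorr \lesssim 1/L$ by a time-inhomogeneous drift-shift coupling. Let $X$ solve the continuous Langevin SDE from $x$, and define the interpolating process $Y_t \deq X_t + (y-x)(1 - t/\Tcorr)$; then $Y_0 = y$, $Y_{\Tcorr} = X_{\Tcorr}$, and under its own filtration $Y$ obeys an SDE whose drift differs from the standard Langevin drift started at $y$ by an additive term of size $O(\|y-x\|/\Tcorr) + O(L\|y-x\|)$, the Lipschitz contribution coming from comparing $\nabla U$ at an $O(\|y-x\|)$-shifted location. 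Girsanov then gives $\KL \lesssim \|x-y\|^2/\Tcorr + L^2 \|x-y\|^2 \Tcorr \lesssim \|x-y\|^2/\Tcorr$, where the Lipschitz contribution is absorbed since $\Tcorr \lesssim 1/L$; Pinsker yields the pointwise TV bound. Integrating over the optimal $W_2$-coupling of $(p,q)$ and applying Cauchy-Schwarz gives $\TV(p\Plan^N, q\Plan^N) \lesssim W_2(p,q)/\sqrt{\Tcorr}$.

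For the Girsanov piece, both processes start from the same initial distribution $p$, so there is no initial KL term and
\[
\KL(p\Plan^N \| p\Plans^N) \le \tfrac{1}{2}\, \E \int_0^{\Tcorr} \|s(\wh X_{\lfloor t/h\rfloor h}) + \nabla U(X_t)\|^2 \,dt\,.
\]
Splitting the integrand by triangle as $\|s + \nabla U\|^2 \lesssim \|(s + \nabla U)(\wh X_{\lfloor t/h\rfloor h})\|^2 + L^2 \|X_t - \wh X_{\lfloor t/h\rfloor h}\|^2$ isolates the score-estimation and spatial-discretization contributions. The second is controlled by a standard one-step SDE moment bound $\E\|X_t - \wh X_{\lfloor t/h\rfloor h}\|^2 \lesssim dh$, which uses $\E_q[\|\nabla U\|^2] \le Ld$ (integration by parts for $L$-smooth $U$), integrating to $L^2 d h \Tcorr$; the first is handled via Assumption~\ref{ass:score_error} after transferring the expectation from the relevant path marginal to $q$. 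Pinsker then converts KL to TV, delivering $\esc\sqrt{\Tcorr} + L\sqrt{dh\Tcorr}$ and completing the proof.

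The main obstacle I anticipate is the Wasserstein-to-TV regularization: the drift-shift coupling itself is standard in the log-concave sampling literature, but here there is no log-concavity of $U$, so the argument depends crucially on restricting to the short time window $\Tcorr \lesssim 1/L$ in order to absorb the Lipschitz-error contribution. A secondary subtlety is transferring the $L^2$ score-error bound from $q$ to the path marginal appearing in the Girsanov integrand, for which I would rely on standard change-of-measure considerations exploiting that this marginal remains close to $q$ throughout the short corrector epoch.
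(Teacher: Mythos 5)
Your high-level split $\TV(p\Plans^N, q) \le \TV(p\Plan^N, q\Plan^N) + \TV(p\Plans^N, p\Plan^N)$ is exactly the paper's (the paper proves these as Lemma~\ref{lem:overdamp_reg} and Lemma~\ref{lem:mut_nut_over}). Your drift-shift interpolation argument for the first piece is a genuine alternative to the paper's route, which simply invokes~\cite[Lemma 4.2]{bobkov2001hypercontractivity}; your self-contained coupling/Girsanov derivation of $\KL(\delta_x\Plan^N \mmid \delta_y\Plan^N)\lesssim\|x-y\|^2/\Tcorr$ for $\Tcorr\lesssim 1/L$ is correct (this is essentially the shift Harnack inequality argument), and it is fine to then average over the $W_2$-optimal coupling of $(p,q)$ and use Jensen/Pinsker.

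However, there is a genuine gap in your Girsanov bound for $\TV(p\Plans^N, p\Plan^N)$. Both expectations in the Girsanov integrand are over the process $x_t$ with law $p\Plan^t$, \emph{not} $q$, yet Assumption~\ref{ass:score_error} only bounds $\E_q[\|s-\nabla U\|^2]$, and similarly the inequality $\E[\|\nabla U\|^2]\le Ld$ (by integration by parts) holds under $q$, not under $p\Plan^t$. Your plan to transfer these expectations by ``standard change-of-measure considerations'' does not go through: $W_2$-closeness of $p$ to $q$ does not imply any absolute-continuity, let alone a bounded density ratio, so there is no change-of-measure inequality available to move the $L^2$ bound from $q$ to $p\Plan^t$ (and even a small $\KL(p\Plan^N\mmid q)$ would only control bounded test functions, not $\|s-\nabla U\|^2$). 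The paper circumvents this cleanly by introducing a \emph{third} process $\statx_t$: the overdamped Langevin started at $\statx_0\sim q$ and driven by the same Brownian motion, with $x_0$ and $\statx_0$ coupled $W_2$-optimally. One then decomposes
$\|s(x_{kh})-\nabla U(x_u)\|$ via the triangle inequality through $s(\statx_{kh})$, $\nabla U(\statx_{kh})$, and $\nabla U(\statx_u)$: the pieces $\|s(x_{kh})-s(\statx_{kh})\|$ and $\|\nabla U(\statx_u)-\nabla U(x_u)\|$ are controlled by Lipschitzness together with $\E\|x_t-\statx_t\|^2\lesssim W_2^2(p,q)$ (a short-time Gr\"onwall estimate, Lemma~\ref{lem:x_vs_stat}); the piece $\|s(\statx_{kh})-\nabla U(\statx_{kh})\|$ has expectation $\le\esc^2$ \emph{because} $\statx_{kh}\sim q$ exactly; and $\E\|\statx_{kh}-\statx_u\|^2\lesssim dh$ follows cleanly since $\statx_s\sim q$ for all $s$. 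Without this coupled stationary process (or some explicit substitute), the steps where you invoke $\esc^2$ and the $Ld$ moment bound are unjustified, so the proof as written does not close.
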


We will bound $\TV(p\Plan^N, q)$ and $\TV(p\Plans^N, p\Plan^N)$ separately. For the former, we use the following short-time regularization result:

\begin{lem}[{\cite[Lemma 4.2]{bobkov2001hypercontractivity}}]\label{lem:overdamp_reg}
    If $\Tcorr \lesssim 1/L$, then
    \begin{equation}
        \TV(p\Plan^N, q) \lesssim \sqrt{\KL(p\Plan^N \mmid q)} \lesssim W_2(p,q)/\sqrt{\Tcorr}\,.
    \end{equation}
\end{lem}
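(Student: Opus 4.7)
The lemma bundles two inequalities: (i) Pinsker's inequality $\TV(p\Plan^N, q) \lesssim \sqrt{\KL(p\Plan^N \mmid q)}$, which is standard, and (ii) the short-time regularization bound $\KL(p\Plan^N \mmid q) \lesssim W_2^2(p,q)/\Tcorr$ for the continuous-time Langevin semigroup with $L$-log-smooth target $q$. The plan is to establish (ii) via a Girsanov argument combined with a synchronous linear-interpolation coupling between the start and end of the diffusion; since $\Plan^N$ corresponds to running the continuous Langevin diffusion for time $\Tcorr = Nh$, I can work directly at the continuous level.

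The setup: take $(X_0, Y_0)$ to be an optimal $W_2$-coupling of $p$ and $q$, so $\E[\lVert X_0 - Y_0\rVert^2] = W_2^2(p, q)$. Let $X_t$ solve the Langevin SDE $\dee X_t = -\nabla U(X_t)\,\dt + \sqrt{2}\,\dee B_t$ from $X_0$, so that $X_{\Tcorr}$ has law $p\Plan^N$. Define the interpolated process $Y_t \deq X_t + (1 - t/\Tcorr)(Y_0 - X_0)$, which is engineered so that $Y_0 \sim q$ and $Y_{\Tcorr} = X_{\Tcorr}$ almost surely. A direct computation shows that $Y_t$ is a diffusion with the same Brownian motion, satisfying $\dee Y_t = (-\nabla U(Y_t) + e_t)\,\dt + \sqrt{2}\,\dee B_t$ where the drift discrepancy from the ideal Langevin dynamics is
\[
e_t = \nabla U(Y_t) - \nabla U(X_t) - \frac{Y_0 - X_0}{\Tcorr}\,.
\]
Using $L$-Lipschitzness of $\nabla U$ together with the identity $\lVert Y_t - X_t\rVert = (1 - t/\Tcorr)\,\lVert Y_0 - X_0\rVert$, I get $\lVert e_t\rVert^2 \lesssim L^2\,\lVert Y_0 - X_0\rVert^2 + \lVert Y_0 - X_0\rVert^2/\Tcorr^2$, so integrating and taking expectation yields $\E[\int_0^{\Tcorr} \lVert e_t\rVert^2\,\dt] \lesssim W_2^2(p,q)\,(L^2 \Tcorr + 1/\Tcorr) \lesssim W_2^2(p,q)/\Tcorr$, where the last step uses $\Tcorr \lesssim 1/L$.

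Now compare with the ideal Langevin $\tilde Y_t$ started from $\tilde Y_0 = Y_0$ and driven by the same Brownian motion, so that $\tilde Y_{\Tcorr} \sim q$. Girsanov's theorem gives $\KL\bigl(\mathrm{Law}(Y_{[0,\Tcorr]})\bigm\Vert \mathrm{Law}(\tilde Y_{[0,\Tcorr]})\bigr) \le \tfrac{1}{4}\,\E[\int_0^{\Tcorr}\lVert e_t\rVert^2\,\dt]$, and the data-processing inequality evaluated at time $\Tcorr$ transfers this to $\KL(p\Plan^N \mmid q) \lesssim W_2^2(p,q)/\Tcorr$. Applying Pinsker then finishes the total variation bound.

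The main technical obstacle is the rigorous application of Girsanov's theorem: one needs a Novikov-type condition on the drift discrepancy $(e_t)$ to ensure that the Radon--Nikodym derivative between the two path laws has expectation one. Since $\E[\int_0^{\Tcorr}\lVert e_t\rVert^2\,\dt] < \infty$ almost surely and $\nabla U$ is Lipschitz, this is routine, for instance via a standard localization argument. A secondary point to check is that $Y_t$ is a genuine diffusion in the right filtration, which holds because the deterministic time-dependent shift $(1 - t/\Tcorr)(Y_0 - X_0)$ is $\mathcal{F}_0$-measurable and hence adapted to the natural filtration generated by the initial coupling and the Brownian motion.
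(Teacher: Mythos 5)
Your proof is correct, and it takes a genuinely different route from the paper. The paper disposes of the second inequality by directly citing \cite[Lemma 4.2]{bobkov2001hypercontractivity}, which gives $\KL(p\Plan^N \mmid q) \lesssim L\,(1+1/(e^{2L\Tcorr}-1))\,W_2^2(p,q)$ and hence the stated bound after simplifying with $\Tcorr \lesssim 1/L$; that lemma is a hypercontractivity-type short-time regularization estimate proved via the Hamilton--Jacobi (Hopf--Lax) semigroup and needs only semi-convexity, i.e.\ a one-sided bound $\nabla^2 U \succeq -LI$. You instead give a self-contained argument: the linear interpolation $Y_t = X_t + (1-t/\Tcorr)\,(Y_0-X_0)$ forces $Y_0 \sim q$ and $Y_{\Tcorr} = X_{\Tcorr}$, and Girsanov plus data processing converts the drift discrepancy $e_t$ into a KL bound. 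This is essentially Wang's coupling-by-change-of-measure in KL form; it recovers the same $W_2^2(p,q)/\Tcorr$ rate at the price of using the two-sided Lipschitz bound on $\nabla U$ (available here from Assumption~\ref{ass:lip_score}), and it has the virtue of reusing exactly the Girsanov-plus-data-processing machinery already deployed in Lemma~\ref{lem:mut_nut_over}. One point to state carefully: since $e_t$ depends on $X_0$, which is not a functional of the path of $Y$, the displayed Girsanov bound should be obtained by first conditioning on $(X_0,Y_0)$ (where the perturbed drift is an honest adapted functional) and then marginalizing out $X_0$ via the chain rule and convexity of KL; your ``$\le$'' is therefore the right direction, and the remaining computation $\E\int_0^{\Tcorr}\norm{e_t}^2\,\D t \lesssim (L^2\Tcorr + 1/\Tcorr)\,W_2^2(p,q) \lesssim W_2^2(p,q)/\Tcorr$ checks out.
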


\begin{proof}
    The first inequality is Pinsker's inequality.
    The second inequality is a consequence of \cite[Lemma 4.2]{bobkov2001hypercontractivity}, which gives a bound of $\KL(p\Plan^{N}\mmid q) \lesssim L\,(1+1/(e^{2L\Tcorr}-1))\,W_2^2(p,q)$. The claim then follows from simplifying by using $\Tcorr \lesssim 1/L$.
\end{proof}

For the latter term, we introduce notation for two stochastic processes.
\begin{align*}
    \D \statx_t
    &= -\nabla U(\statx_t) \, \D t + \sqrt 2 \, \D B_t\,, & \statx_0 \sim q\,, \\
    \D x_t
    &= -\nabla U(x_t) \, \D t + \sqrt 2 \,\D B_t\,, & x_0 \sim p\,.
\end{align*}
Note that for any integer $k \ge 0$,
\begin{align*}
    \statx_{kh} \sim q\Plan^k\,, \qquad x_{kh} \sim p\Plan^k\,. 
\end{align*}
Observe that marginally, $q\Plan^k = q$ for any $k\ge 0$ because $q$ is the stationary distribution of the Langevin diffusion. The three processes are coupled by using the same Brownian motion and by coupling $x_0 = \wh x_0 \sim p$ and $\statx_0 \sim q$ optimally.

Before we proceed to bound $\TV(p\Plan^N, p\Plans^N)$, we need the following simple lemma.


\begin{lem}\label{lem:x_vs_stat}
    If $\Tcorr \lesssim 1/L$, then 
    \begin{equation}
        \E[\norm{x_t - \statx_t}^2] \lesssim W_2^2(p,q)
    \end{equation}
    for all $0 \le t \le \Tcorr$.
\end{lem}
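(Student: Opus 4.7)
The plan is to use a \emph{synchronous coupling} of the two diffusions: since $x_t$ and $\statx_t$ are driven by the same Brownian motion with potentials equal to $U$, the stochastic integrals cancel pathwise and the difference process $x_t - \statx_t$ evolves deterministically, governed only by the drift. This reduces the problem to a Grönwall-type argument on $\norm{x_t - \statx_t}^2$, controlled via the Lipschitz bound on $\nabla U$.

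Concretely, I would first invoke Assumption~\ref{ass:lip_score} to note that $\nabla U = -\nabla \ln q$ is $L$-Lipschitz (here $q$ is one of the scores $\nabla \ln q_{t_0}$ from the reverse process, which is the setting in which the corrector is applied). Then I would use that the synchronous coupling makes $t \mapsto x_t - \statx_t$ an absolutely continuous path with
\begin{equation*}
    \frac{\D}{\D t}\,(x_t - \statx_t) = -\bigl(\nabla U(x_t) - \nabla U(\statx_t)\bigr)\,.
\end{equation*}
Differentiating the squared norm gives
\begin{equation*}
    \partial_t \norm{x_t - \statx_t}^2 = -2\,\langle x_t - \statx_t,\,\nabla U(x_t) - \nabla U(\statx_t)\rangle \le 2L\,\norm{x_t - \statx_t}^2\,,
\end{equation*}
where the inequality uses Cauchy--Schwarz and the $L$-Lipschitz bound on $\nabla U$ (no convexity needed). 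Grönwall's inequality then yields $\norm{x_t - \statx_t}^2 \le e^{2Lt}\,\norm{x_0 - \statx_0}^2$ almost surely. Taking expectations and using that $(x_0,\statx_0)$ is the $W_2$-optimal coupling of $p$ and $q$, we get $\E[\norm{x_t - \statx_t}^2] \le e^{2Lt}\,W_2^2(p,q)$.

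Finally, the assumption $\Tcorr \lesssim 1/L$ implies $e^{2Lt} \le e^{2L\Tcorr} = O(1)$ for all $t \in [0,\Tcorr]$, which absorbs the exponential prefactor into the hidden constant and gives the claim. There is no real obstacle in this argument; the one subtle point worth double-checking is simply that the Lipschitz constant of $\nabla U$ matches the one in the hypothesis of the theorem, which holds because $U$ is the potential whose negative gradient is the (true) score being regularized against in this section.
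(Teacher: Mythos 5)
Your proof is correct and is essentially identical to the paper's: both use the synchronous coupling so that the Brownian increments cancel, bound the drift difference via the $L$-Lipschitz property of $\nabla U$ (no convexity), apply Grönwall to get the $e^{2Lt}$ factor, take the optimal coupling at time $0$, and absorb the exponential using $t \le \Tcorr \lesssim 1/L$. The paper phrases the first step as an application of Itô's formula, but the content—the cancellation of the stochastic integrals and the resulting deterministic differential inequality—is the same.
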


\begin{proof}
    By It\^o's formula,
    \begin{equation}
        \D (\norm{x_t - \statx_t}^2) 
        = - 2\,\langle x_t - \statx_t, \nabla U(x_t) - \nabla U(\statx_t)\rangle \le 2L\,\norm{x_t - \statx_t}^2\,. \label{eq:timederiv_0}
    \end{equation}
    By Gr\"{o}nwall's inequality, 
    \begin{equation}
        \norm{x_t - \statx_t}^2 \le e^{2Lt}\,\norm{x_0 - \statx_0}^2\,,
    \end{equation}
    so that if we couple the two processes by coupling $x_0$ and $\statx_0$ optimally, we conclude that
    \begin{equation}
        \E[\norm{x_t - \statx_t}^2] \le e^{2Lt} \E[\norm{x_0 - \statx_0}^2] = e^{2Lt}\,W_2^2(p,q) \lesssim W_2^2(p,q)\,,
    \end{equation}
    recalling that $t\le \Tcorr \lesssim 1/L$ by hypothesis.
\end{proof}

\noindent It remains to bound $\TV(p\Plans^N, p\Plan^N)$.

\begin{lem}\label{lem:mut_nut_over}
    If $\Tcorr \lesssim 1/L$, then
    \begin{align*}
        \TV(p\Plans^N,p\Plan^N)
        \lesssim  \sqrt{\KL(p\Plan^N \mmid p\Plans^N)}
        \lesssim L\sqrt{\Tcorr}\,W_2(p,q) + \esc\sqrt{\Tcorr} + L\sqrt{dh\Tcorr}\,.
    \end{align*}
\end{lem}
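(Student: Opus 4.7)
The plan is to first apply Pinsker's inequality to the final iterates, then pass to the path measures via the data-processing inequality and invoke Girsanov's theorem. Let $(x_t)_{0\le t\le \Tcorr}$ solve $\D x_t = -\nabla U(x_t)\,\D t + \sqrt 2\,\D B_t$ with $x_0\sim p$, and let $(\wh x_t)_{0\le t\le \Tcorr}$ be the analogous discretized process with drift $s(\wh x_{\lfloor t/h\rfloor h})$, also started from $p$ and driven by the same Brownian motion. Because these SDEs share the same diffusion coefficient, Girsanov's theorem gives
\[
\KL(p\Plan^N \mmid p\Plans^N) \le \KL(\law(x_{[0,\Tcorr]}) \mmid \law(\wh x_{[0,\Tcorr]})) = \tfrac{1}{4} \int_0^\Tcorr \E\bigl\lVert \nabla U(x_t) + s(x_{\lfloor t/h\rfloor h})\bigr\rVert^2 \,\D t\,.
\]

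For $t\in [kh,(k+1)h]$, I split the integrand into a \emph{space-discretization} term $\norm{\nabla U(x_t) - \nabla U(x_{kh})}^2$ and a \emph{score-error} term $\norm{\nabla U(x_{kh}) + s(x_{kh})}^2$. Assumption~\ref{ass:lip_score} bounds the first by $L^2\,\E\norm{x_t - x_{kh}}^2$; applying It\^o's isometry to the SDE for $x_t$ yields $\E\norm{x_t - x_{kh}}^2 \lesssim h^2 \sup_{s\le t}\E\norm{\nabla U(x_s)}^2 + dh$. To control $\E\norm{\nabla U(x_s)}^2$, I compare to $\statx_s\sim q$ via Lemma~\ref{lem:x_vs_stat}, use Lipschitzness of $\nabla U$, and recall the integration-by-parts identity $\E_q\norm{\nabla \log q}^2 \le Ld$, obtaining $\E\norm{\nabla U(x_s)}^2 \lesssim Ld + L^2 W_2^2(p,q)$. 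Since $h\lesssim 1/L$, this collapses the space-discretization term to $L^2\,(dh + W_2^2(p,q))$.

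For the score-error term, Assumption~\ref{ass:score_error} only controls $\E_q\norm{\nabla U + s}^2\le \esc^2$, whereas $x_{kh}\sim p\Plan^k$. This is precisely the step where Assumption~\ref{ass:lip_estimate} is needed: since both $\nabla U$ and $s$ are $L$-Lipschitz, for any $x,y$
\[
\norm{\nabla U(x) + s(x)}^2 \lesssim \norm{\nabla U(y) + s(y)}^2 + L^2 \,\norm{x-y}^2\,,
\]
so coupling $x_{kh}$ to $\statx_{kh}\sim q$ as in Lemma~\ref{lem:x_vs_stat} yields $\E\norm{\nabla U(x_{kh}) + s(x_{kh})}^2 \lesssim \esc^2 + L^2 W_2^2(p,q)$. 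Substituting both estimates into the Girsanov bound, integrating over $[0,\Tcorr]$, and taking square roots produces $\sqrt{\KL} \lesssim L\sqrt{\Tcorr d h} + L\sqrt{\Tcorr}\,W_2(p,q) + \sqrt{\Tcorr}\,\esc$, which is exactly the claimed bound after combining with Pinsker.

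The main obstacle is the score-error step: transferring the $L^2(q)$ guarantee on $\nabla U + s$ to an analogous bound under the iterate law $p\Plan^k$ is what forces Assumption~\ref{ass:lip_estimate} into the hypotheses of the lemma, whereas the SDE-based analyses of \cite{chen2022improved, Chenetal23diffmodels, leelutan23sgmgeneral} avoid this because they can absorb the score-error contribution directly into a Girsanov bound against $q$. Everything else is a routine combination of It\^o's formula, Gr\"{o}nwall, and Lemma~\ref{lem:x_vs_stat}.
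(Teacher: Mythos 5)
Your proof is correct and uses essentially the same ingredients as the paper's: Girsanov on the path measures together with Pinsker, Lipschitzness of $\nabla U$ and $s$, coupling to the stationary process $\statx$ via Lemma~\ref{lem:x_vs_stat}, and the integration-by-parts fact $\E_q\norm{\nabla U}^2\le Ld$. The only difference is organizational: you split the integrand into two terms and bound the trajectory increment $\E\norm{x_t-x_{kh}}^2$ along the non-stationary $x$-process (which requires an extra step of transferring $\E\norm{\nabla U(x_s)}^2$ to $\statx_s$), whereas the paper telescopes through $\statx$ at both times $kh$ and $u$, so that $\E\norm{\statx_{kh}-\statx_u}^2\lesssim dh$ follows directly from stationarity; both routes assemble the same three contributions into the same final bound.
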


\begin{proof}
    As $x$ and $\wh x$ are driven by the same Brownian motion, by Girsanov's theorem\footnote{Although the validity of Girsanov's theorem typically requires Novikov's condition to be satisfied, this can be avoided via an approximation argument as in~\cite{Chenetal23diffmodels}.} and the data processing inequality we have
    \begin{equation}
        \KL(p\Plan^N\mmid p\Plans^N) \lesssim \sum^{N-1}_{k=0} \int^{(k+1)h}_{kh} \E[\norm{s(x_{kh}) - \nabla U(x_u)}^2] \, \D u\,.
    \end{equation}
    We can decompose the integrand as follows:
    \begin{align}
        \E[\norm{s(x_{kh}) - \nabla U(x_u)}^2]
        &\lesssim \E\bigl[\norm{s(x_{kh}) - s(\statx_{kh})}^2 + \norm{s(\statx_{kh}) - \nabla U(\statx_{kh})}^2 \\
        &\qquad\qquad{} + \norm{\nabla U(\statx_{kh}) - \nabla U(\statx_u)}^2 + \norm{\nabla U(\statx_u) - \nabla U(x_u)}^2\bigr] \\
        &\le L^2\E[\norm{x_{kh} - \statx_{kh}}^2] + \esc^2 + L^2\E[\norm{\statx_{kh} - \statx_u}^2] + L^2\E[\norm{\statx_u - x_u}^2] \\
        &\lesssim L^2 \,W_2^2(p,q) + \esc^2 + L^2 \E[\norm{\statx_{kh} - \statx_u}^2]\label{eq:girsanov_over}
    \end{align}
    where we used Lemma~\ref{lem:x_vs_stat} to bound $\E[\norm{\statx_u - x_u}^2]$.

    It remains to bound $\E[\norm{\statx_{kh} - \statx_u}^2]$. Note that
    \begin{align*}
        \E[\norm{\statx_{kh} - \statx_u}^2]
        &= \E\Bigl[\Bigl\|\int^u_{kh} -\nabla U(\statx_s)\, \D s + \sqrt 2\,(B_u - B_{kh})\Bigr\|^2\Bigr]
        \lesssim h\int^u_{kh} \E[\norm{\nabla U(\statx_s)}^2]\, \D s + dh\\
        &\le Ldh^2 + dh \lesssim dh\,,
    \end{align*}
    where in the last step we used that $\E[\norm{\nabla U(\statx_u)}^2] \le Ld$. Substituting this into \eqref{eq:girsanov_over}, we obtain
    \begin{align}
        \KL(p\Plan^N\mmid p\Plans^N)
        &\lesssim L^2 \Tcorr\,W_2^2(p,q) + \esc^2 \Tcorr + L^2dh\Tcorr\,.
    \end{align}
    The claimed bound on $\TV(p\Plan^N,p\Plans^N)$ follows by Pinsker's inequality.
\end{proof}

\begin{proof}[Proof of Theorem~\ref{thm:main_overdamped}]
    This is immediate from Lemma~\ref{lem:overdamp_reg} and Lemma~\ref{lem:mut_nut_over}, recalling that $\Tcorr \lesssim 1/L$ so that the bound in Lemma~\ref{lem:overdamp_reg} dominates the $W_2(p,q)$ term in Lemma~\ref{lem:mut_nut_over}.
\end{proof}

\subsection{Corrector via underdamped Langevin}\label{sec:under}


Throughout, we set the friction parameter to
\begin{equation}
    \fric \asymp \sqrt{L} \,.
\end{equation}
We will take the potential and score estimate defining the Markov kernels $\Puld$ and $\Pulds$ from \S\ref{sec:diffusion} to be $U$ and $s$ respectively. Recall that these correspond respectively to running the underdamped Langevin diffusion with stationary distribution $q$ and running the discretized diffusion with score estimate $s$, both for time $h$. 

Given probability measures $p$ and $q$, we write $\bs p \deq p \otimes \gamma_d$ and $\bs q \deq q \otimes \gamma_d$, where $\gamma_d$ is the standard Gaussian measure in $\R^d$.

The main result of this section is to show that $\bs p\Pulds^N$ and $\bs q$ are close in total variation if $p$ and $q$ are close in Wasserstein.
Compared to \S\ref{sec:over}, the discretization error for the underdamped Langevin diffusion is smaller.

\begin{thm}[Underdamped corrector]\label{thm:main_underdamped}
    For $\Tcorr \lesssim 1/\sqrt{L}$,
    \begin{align*}
        \TV(\bs p \Pulds^N, \bs q)
        &\lesssim \frac{W_2(p,q)}{L^{1/4} \Tcorr^{3/2}} + \frac{\esc \Tcorr^{1/2}}{L^{1/4}} + L^{3/4} \Tcorr^{1/2} d^{1/2} h\,.
    \end{align*}
    In particular, if we take $\Tcorr\asymp 1/\sqrt L$, then
    \begin{align*}
        \TV(\bs p \Pulds^N, \bs q)
        &\lesssim \sqrt L \,W_2(p,q) + \esc/\sqrt L + \sqrt{Ld}\, h\,.
    \end{align*}
\end{thm}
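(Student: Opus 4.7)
The plan is to follow the same two-step template as the proof of Theorem~\ref{thm:main_overdamped}: bound $\TV(\bs p \Pulds^N, \bs q)$ by the triangle inequality as the sum of a short-time regularization term $\TV(\bs p \Puld^N, \bs q)$ and a Girsanov term $\TV(\bs p \Puld^N, \bs p \Pulds^N)$, but with the overdamped building blocks (Lemma~\ref{lem:overdamp_reg} and Lemma~\ref{lem:mut_nut_over}) replaced by their underdamped analogues. The key gain comes from two places: (i) the underdamped semigroup regularizes $W_2$ into $\KL$ at the sharper rate $\Tcorr^{-3}$ rather than $\Tcorr^{-1}$, because the noise acts on velocity and is transferred to position through one integration; and (ii) the position trajectory of the underdamped diffusion is $\mathcal{C}^1$ in time, so the per-step time-discretization error scales as $h$ rather than $\sqrt h$.

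For the regularization step, I would invoke a hypocoercive short-time smoothing inequality for the kinetic Fokker--Planck semigroup with $L$-smooth potential, of the form
\begin{align*}
\KL(\bs p \Puld^N \mmid \bs q) \lesssim \frac{W_2^2(p,q)}{\fric\,\Tcorr^3}
\end{align*}
valid whenever $\Tcorr \lesssim 1/\sqrt L$. With $\fric \asymp \sqrt L$ this is the scaling $W_2^2/(\sqrt L \,\Tcorr^3)$, and Pinsker yields the first term $W_2(p,q)/(L^{1/4}\Tcorr^{3/2})$ in the theorem. Results of this exact form appear in the analysis of kinetic Langevin samplers (in the spirit of Ma--Chatterji--Cheng--Bartlett and subsequent works); the $\fric\,\Tcorr^3$ denominator is precisely the variance scale at which Gaussian noise injected into velocity shows up in the position marginal after time $\Tcorr$. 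This is the step I expect to be the main technical obstacle, and verifying the sharp constants requires careful hypocoercive computation.

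For the Girsanov step, couple three underdamped trajectories driven by the same Brownian motion: the true process $(z_t,v_t)$ started at $\bs p$, the discretized process with estimated score $(\wh z_t,\wh v_t)$ started at the same phase-space point, and a stationary process $(\widetilde z_t,\widetilde v_t)$ started at $\bs q$ and optimally $W_2$-coupled to $(z_0,v_0)$. Since only the velocity SDE carries noise $\sqrt{2\fric}\,\D B_t$, Girsanov applied to the velocity component gives
\begin{align*}
\KL(\bs p \Puld^N \mmid \bs p \Pulds^N) \lesssim \frac{1}{\fric}\sum_{k=0}^{N-1}\int_{kh}^{(k+1)h}\E\bigl[\norm{\nabla U(z_t) + s(\wh z_{kh})}^2\bigr]\,\D t\,.
\end{align*}
Decompose the drift difference into four pieces using Lipschitzness of $\nabla U$ and $s$: $\norm{\nabla U(z_t)-\nabla U(\widetilde z_t)}^2$, $\norm{\nabla U(\widetilde z_t)-\nabla U(\widetilde z_{kh})}^2$, $\norm{\nabla U(\widetilde z_{kh})-s(\widetilde z_{kh})}^2$, and $\norm{s(\widetilde z_{kh})-s(\wh z_{kh})}^2$.

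To bound the first and last of these, I would prove an underdamped analogue of Lemma~\ref{lem:x_vs_stat}: for coupled trajectories on the horizon $\Tcorr \lesssim 1/\sqrt L$, Gr\"onwall's inequality applied to $\E[\norm{z_t-\widetilde z_t}^2 + \norm{v_t-\widetilde v_t}^2]$ (together with the inequality $\norm{\nabla U(z_t)-\nabla U(\widetilde z_t)}\le L\,\norm{z_t-\widetilde z_t}$) keeps this quantity at $O(W_2^2(p,q))$. The third piece contributes $\esc^2$ in expectation by Assumption~\ref{ass:score_error}. The second piece is where the $\mathcal{C}^1$ regularity enters: since $\widetilde z_t-\widetilde z_{kh} = \int_{kh}^t \widetilde v_s\,\D s$ and $\widetilde v_s \sim \gamma^d$ at stationarity, Cauchy--Schwarz gives $\E[\norm{\widetilde z_t-\widetilde z_{kh}}^2]\lesssim dh^2$, improving on the $\sqrt{dh}$ scaling that would arise for the overdamped diffusion. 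Substituting $\fric\asymp\sqrt L$ and summing yields
\begin{align*}
\KL(\bs p\Puld^N \mmid \bs p\Pulds^N) \lesssim L^{3/2}\Tcorr\,W_2^2(p,q) + \frac{\esc^2\Tcorr}{\sqrt L} + L^{3/2} d h^2\Tcorr\,.
\end{align*}
Applying Pinsker, and noting that for $\Tcorr\lesssim 1/\sqrt L$ the $W_2$-piece $L^{3/4}\Tcorr^{1/2}W_2(p,q)$ is absorbed into the regularization term $W_2(p,q)/(L^{1/4}\Tcorr^{3/2})$, combines with the first step to give the stated bound.
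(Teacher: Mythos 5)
Your overall structure matches the paper's: triangle inequality into a short-time regularization term $\TV(\bs p\Puld^N,\bs q)$ and a Girsanov term $\TV(\bs p\Puld^N, \bs p\Pulds^N)$, with the former handled by a hypocoercive $W_2$-to-$\KL$ smoothing bound (the paper uses \cite[Corollary 4.7(1)]{guillin2012degenerate}) and the latter by Girsanov plus a four-term drift decomposition, a short-time Wasserstein coupling bound, Assumption~\ref{ass:score_error}, and $\E[\norm{\statz_{kh}-\statz_u}^2]\lesssim dh^2$. Two small things: the Girsanov integrand should read $s(z_{kh})$ evaluated on the \emph{continuous} trajectory $z$, not on $\wh z$ (Girsanov compares drift functionals evaluated along a single path drawn from the base law), which is precisely why the coupling lemma only needs the continuous process against the stationary one; and the score-error piece should carry a plus sign, $\norm{\nabla U(\widetilde z_{kh})+s(\widetilde z_{kh})}$, since $s$ estimates $-\nabla U$.

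The genuine gap is in your proposed analogue of Lemma~\ref{lem:x_vs_stat}. You suggest applying Gr\"onwall to the untwisted quantity $\E[\norm{z_t-\statz_t}^2 + \norm{v_t-\statv_t}^2]$. Differentiating this quantity, the position--velocity cross term $\langle z_t-\statz_t, v_t-\statv_t\rangle$ and the term $-\langle v_t-\statv_t, \nabla U(z_t)-\nabla U(\statz_t)\rangle$ only give a growth rate of order $L$ (the friction $-\fric\norm{v_t-\statv_t}^2$ is one-sided and cannot kill the $L\,\norm{z-\statz}\,\norm{v-\statv}$ cross term for free). Gr\"onwall then yields a factor $e^{O(L\Tcorr)}=e^{O(\sqrt L)}$ over the horizon $\Tcorr\asymp 1/\sqrt L$, which is \emph{not} $O(1)$, so the naive coupling does not keep $\E[\norm{z_t-\statz_t}^2]\lesssim W_2^2(p,q)$. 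The paper's Lemma~\ref{lem:nu_vs_stat} fixes this by Gr\"onwalling a twisted Lyapunov function $\norm{\delta_t}^2 + \norm{\eta_t}^2$ with $\eta_t=z_t-\statz_t$ and $\delta_t = (z_t-\statz_t) + \alpha\,(v_t-\statv_t)$ for $\alpha = 2/\fric$: in these coordinates the cross and friction contributions reorganize so that the remaining growth comes from the Hessian term at rate $L/\fric \asymp \sqrt L$, which is what makes the $e^{O(\sqrt L \Tcorr)}=O(1)$ bound hold over the length-$1/\sqrt L$ window. This twist is the nontrivial ingredient your outline misses; without it, the claimed $W_2$-preservation over the corrector window does not follow.

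Aside from this, the rest of your outline goes through and the final arithmetic ($\fric\asymp\sqrt L$, $\E[\norm{\statv_s}^2]=d$ at stationarity, so $\E[\norm{\statz_{kh}-\statz_u}^2]\lesssim dh^2$, and Pinsker) is exactly the paper's.
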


\noindent We will bound $\TV(\bs p\Puld^N, \bs q)$ and $\TV(\bs p\Pulds^N,\bs p\Puld^N)$ separately. For the former, we use the short-time regularization result of~\cite{guillin2012degenerate}:

\begin{lem}\label{lem:underdamp_reg}
    If $\Tcorr\lesssim 1/\sqrt{L}$, then
    \begin{equation}
        \TV(\bs p\Puld^N, \bs q) \lesssim \sqrt{\KL(\bs p \Puld^N \mmid \bs q)} \lesssim \frac{W_2(p, q)}{L^{1/4} \Tcorr^{3/2}}\,.
    \end{equation}
\end{lem}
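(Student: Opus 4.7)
The first inequality is Pinsker's, so the task reduces to the KL bound. By stationarity of the exact underdamped Langevin diffusion, $\bs q\Puld^N = \bs q$, so I am really asking for a bound on $\KL(\bs p\Puld^N \mmid \bs q\Puld^N)$---a short-time regularization of an initial KL that may well be infinite (since $\bs p$ and $\bs q$ need not be mutually absolutely continuous).

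I plan to invoke the short-time hypoelliptic regularization estimate for the kinetic Fokker{--}Planck equation of Guillin{--}Wang \cite{guillin2012degenerate}, which is the hypoelliptic analogue of the Bobkov{--}Gentil{--}Ledoux bound used in Lemma~\ref{lem:overdamp_reg}. Schematically, it takes the form
\[
\KL(\bs p \Puld^N \mmid \bs q) \lesssim \frac{W_2^2(\bs p, \bs q)}{\fric\, \Tcorr^3}\,.
\]
Two features distinguish this from the overdamped bound: (i) the denominator scales as $\Tcorr^3$ rather than $\Tcorr$, reflecting the delay required for noise injected into the velocity variable to diffuse into the position variable (hypoelliptic smoothing); and (ii) the factor $1/\fric$ originates from the velocity noise coefficient $\sqrt{2\fric}$. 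Combining this with the identity coupling on the Gaussian velocity marginals (so that $W_2(\bs p, \bs q) \le W_2(p, q)$) and the choice $\fric \asymp \sqrt L$ yields $\KL \lesssim W_2^2(p, q)/(\sqrt L \,\Tcorr^3)$, hence the claimed TV bound after Pinsker.

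If I wanted a self-contained derivation, the cleanest route is Girsanov combined with controllability of the double integrator. Starting from a $W_2$-optimal coupling of $(\bs p, \bs q)$, I would run two copies of the underdamped Langevin diffusion driven by the \emph{same} Brownian motion, and add a control drift $u_t$ to the velocity equation of the first copy so that the two processes agree exactly at time $\Tcorr$. For the noiseless double integrator $\dot z = v$, $\dot v = u$, classical controllability computations give $\int_0^{\Tcorr} \norm{u_t}^2 \, \D t \lesssim W_2^2(\bs p, \bs q)/\Tcorr^3$; Girsanov then converts this steering cost into a KL bound of $(4\fric)^{-1} \int_0^{\Tcorr} \norm{u_t}^2 \, \D t$. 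The main obstacle will be that the true velocity dynamics also carry the terms $-\nabla U(z_t) - \fric v_t$, which perturb the noiseless controllability argument, but under $\Tcorr \lesssim 1/\sqrt L$ and the $L$-smoothness of $U$ from Assumption~\ref{ass:lip_score} (so that $\fric\Tcorr \lesssim 1$ and $L\Tcorr^2 \lesssim 1$), these terms contribute only constant-factor distortions that can be absorbed into the $\lesssim$.
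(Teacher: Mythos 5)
Your primary route is exactly the paper's: invoke the hypoelliptic short-time regularization bound of Guillin--Wang (\cite[Corollary 4.7(1)]{guillin2012degenerate}), couple the velocity marginals identically so only the position cost contributes, and simplify the resulting cost functional under $\Tcorr \lesssim 1/\sqrt L$ and $\fric \asymp \sqrt L$ to obtain $\KL(\bs p\Puld^N\mmid \bs q) \lesssim W_2^2(p,q)/(\sqrt L\,\Tcorr^3)$, then apply Pinsker. The paper additionally writes out the explicit cost $c_{\Tcorr}$ and verifies the hypothesis (their Eq.\ (3.6)) with $K_1 = L$ and $K_2 = \fric$, details you leave implicit, but the structure and final bound match; your Girsanov-plus-controllability sketch is a legitimate self-contained alternative to citing the same estimate.
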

\begin{proof}
    This is a consequence of \cite[Corollary 4.7 (1)]{guillin2012degenerate}. The condition to check therein is their Eq.\ (3.6), which in our setting is satisfied by the constants $K_1 = L$ and $K_2 = \fric$. The Corollary then states that for the cost function 
    \begin{equation}
        c_{\Tcorr}((z,v),(z',v')) \triangleq \inf_{t\in(0,\Tcorr]} \frac{t}{2\fric}\,\Bigl\{ \Bigl(\frac{6}{t^2} + L + \frac{3\fric}{2t}\Bigr)\,\norm{z-z'} + \Bigl(\frac{4}{t} + \frac{4Lt}{27} + \fric\Bigr)\,\norm{v-v'}\Bigr\}^2\,,
    \end{equation}
    we have $\KL(\bs p\Puld^N\mmid \bs q) \le W_{c_{\Tcorr}}(p\otimes \gamma_d,q\otimes\gamma_d)$. For $v = v'$ and $\Tcorr\lesssim 1/\sqrt{L}$, note that
    \begin{equation}
        c_{\Tcorr}((z,v),(z',v)) \lesssim \frac{1}{L^{1/2} \Tcorr^3}\,\norm{z-z'}^2\,,
    \end{equation}
    so the claim follows by Pinsker's inequality.
\end{proof}

Next, we define the following processes: $\D \statz_t = \statv_t \, \D t$, $\D z_t = v_t \, \D t$,
\begin{align*}
    \D \statv_t
    &= -\fric \statv_t \, \D t -\nabla U(\statz_t) \, \D t + \sqrt{2\fric}\,\D B_t\,, & (\statz_0, \statv_0) \sim \bs q\,, \\
    \D v_t
    &= -\fric v_t \, \D t - \nabla U(z_t) \, \D t + \sqrt{2\fric}\,\D B_t\,, & (z_0, v_0) \sim \bs p\,.
\end{align*}
It follows that for any integer $k\ge 0$,
\begin{align*}
    (\statz_{kh}, \statv_{kh}) \sim \bs q \Puld^k = \bs q\,, \qquad (z_{kh}, v_{kh}) \sim \bs p \Puld^k\,.
\end{align*}
We couple these processes by using the same Brownian motion and coupling $q\otimes\gamma_d$ and $p\otimes\gamma_d$ optimally (in particular, $v_0 = \statv_0$).





\noindent Before we proceed to bound $\TV(p\Puld^N,p\Pulds^N)$, we start with the following lemma.

\begin{lem}\label{lem:nu_vs_stat}
    If $\Tcorr \lesssim 1/\sqrt{L}$, then for all $0 \le t \le \Tcorr$,
    \begin{equation}
        \E[\norm{z_t - \statz_t}^2] \lesssim W_2^2(p,q)\,.
    \end{equation}
\end{lem}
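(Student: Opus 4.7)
The key structural observation is that because both processes are driven by the same Brownian motion, the stochastic terms cancel when we subtract. Concretely, setting $a_t \deq z_t - \statz_t$ and $b_t \deq v_t - \statv_t$, we get the deterministic (pathwise) ODE system
\begin{align*}
    \dot a_t &= b_t\,, \\
    \dot b_t &= -\fric b_t - (\nabla U(z_t) - \nabla U(\statz_t))\,.
\end{align*}
Since the optimal coupling of $\bs p = p \otimes \gamma_d$ and $\bs q = q \otimes \gamma_d$ (using the product structure and $\gamma_d = \gamma_d$) can be taken to couple the velocities by $v_0 = \statv_0$ and couple $(z_0, \statz_0)$ optimally as a coupling of $(p,q)$, we have $b_0 = 0$ and $\E[\norm{a_0}^2] = W_2^2(p,q)$.

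Next, I would integrate the velocity equation explicitly. Using variation of parameters with $b_0 = 0$,
\begin{align*}
    b_t = -\int_0^t e^{-\fric(t-s)}\,(\nabla U(z_s) - \nabla U(\statz_s))\, \D s\,,
\end{align*}
and then integrating once more and swapping the order of integration gives
\begin{align*}
    a_t = a_0 - \int_0^t \frac{1 - e^{-\fric(t-r)}}{\fric}\,(\nabla U(z_r) - \nabla U(\statz_r))\, \D r\,.
\end{align*}
Using the elementary bound $(1 - e^{-\fric(t-r)})/\fric \le t - r$ and the $L$-Lipschitzness of $\nabla U$ (Assumption~\ref{ass:lip_score}, which transfers to $U$ as the potential of the stationary distribution), I obtain the pathwise inequality
\begin{align*}
    \norm{a_t} \le \norm{a_0} + L \int_0^t (t-r)\,\norm{a_r}\, \D r\,.
\end{align*}

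Finally, let $M_t \deq \sup_{0\le s\le t}\norm{a_s}$. Since the right-hand side above is monotone in $t$, this yields $M_t \le \norm{a_0} + \tfrac{1}{2} L t^2 M_t$. For $t \le \Tcorr \lesssim 1/\sqrt L$ we have $L t^2 \lesssim 1$, so choosing the implicit constant small enough gives $M_t \le 2\,\norm{a_0}$ pathwise. Taking expectations yields $\E[\norm{z_t - \statz_t}^2] \le 4\,\E[\norm{a_0}^2] = 4\, W_2^2(p,q)$, which is the desired bound.

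The main (modest) subtlety is choosing the right coupling: because both marginals contain the same Gaussian factor in the velocity, it is legitimate to set $v_0 = \statv_0$ and still attain $\E[\norm{a_0}^2] = W_2^2(p,q)$; this produces $b_0 = 0$, which is what drives the whole argument (otherwise the ODE bound would have an extra $\norm{b_0}$ term that does not vanish). After that, the analysis is a deterministic Gr\"onwall-type estimate, with the scale $\Tcorr \lesssim 1/\sqrt L$ arising precisely so that the ``double integration'' factor $L t^2$ remains bounded; this is analogous to (but better than) the factor $Lt$ governing the overdamped case in Lemma~\ref{lem:x_vs_stat}, and it reflects the smoother trajectories of the underdamped diffusion.
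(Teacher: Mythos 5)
Your proof is correct, and it takes a genuinely different route from the paper. The paper introduces a hypocoercive (twisted) Lyapunov functional: with $\alpha = 2/\fric$ and $\delta_t = (z_t + \alpha v_t) - (\statz_t + \alpha \statv_t)$, $\eta_t = z_t - \statz_t$, it shows $\frac{1}{2}\D(\norm{\delta_t}^2 + \norm{\eta_t}^2) \lesssim \sqrt{L}\,(\norm{\delta_t}^2 + \norm{\eta_t}^2)$ and then applies Gr\"onwall; the $\sqrt{L}$ rate is built into the chosen norm. You instead exploit the same pathwise cancellation of the Brownian terms but integrate the velocity equation explicitly via variation of parameters, swap the order of integration, use $(1-e^{-\fric(t-r)})/\fric \le t-r$ to reduce to a Volterra-type integral inequality $\norm{a_t} \le \norm{a_0} + L\int_0^t(t-r)\norm{a_r}\,\D r$, and close it with a bootstrap on $M_t = \sup_{s\le t}\norm{a_s}$, which works since $L\Tcorr^2 \lesssim 1$. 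Your method is more elementary (no need to discover the twist $\alpha$ or verify the matrix inequality) and makes the origin of the $1/\sqrt{L}$ timescale transparent through the double-integration factor $Lt^2/2$. The paper's Lyapunov approach, on the other hand, makes the dissipation of the friction term explicit and is the style that generalizes to convergence estimates rather than mere stability estimates, but that extra structure is not needed here. Both proofs correctly use the product structure of $\bs p$ and $\bs q$ to set $v_0 = \statv_0$ and hence $b_0 = 0$ while still attaining $\E[\norm{a_0}^2] = W_2^2(p,q)$.
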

\begin{proof}
    We have
    \begin{equation}
        \nabla U(z_t) - \nabla U(\statz_t) = \Bigl(\int^1_0 \nabla^2 U(z_t + u\,(\statz_t - z_t))\, \D u\Bigr)\, (z_t - \statz_t) \triangleq \calH_t (z_t - \statz_t)\,,
    \end{equation}
    and the operator $\calH_t$ satisfies
    \begin{equation}
        \norm{\calH_t}_{\op} \le L\,. \label{eq:Hbound}
    \end{equation}
    Let $\alpha \triangleq 2/\fric$. For the vectors $\delta_t \triangleq (z_t + \alpha v_t) - (\statz_t + \alpha \statv_t)$ and $\eta_t \triangleq z_t - \statz_t$, we have
    \begin{align}
        \frac{1}{2}\,\D(\norm{\delta_t}^2 + \norm{\eta_t}^2) &= - (\delta_t, \eta_t)^\top \begin{bmatrix}
            (\fric - \frac{1}{\alpha})\,I_d & \frac{1}{2}\,(\alpha\calH_t - \fric\,I_d) \\
            \frac{1}{2}\,(\alpha\calH_t - \fric\,I_d) & \frac{1}{\alpha}\,I_d
        \end{bmatrix}(\delta_t,\eta_t) \\
        &\lesssim \sqrt{L}\,(\norm{\delta_t}^2 + \norm{\eta_t}^2) \,. \label{eq:timederiv} 
    \end{align}
    By Gr\"onwall's inequality, 
    \begin{equation}
        \norm{\delta_t}^2 + \norm{\eta_t}^2 \le e^{O(\sqrt{L} t)}\,(\norm{\delta_0}^2 + \norm{\eta_0}^2)\,,    
    \end{equation}
    so if we couple the two processes by coupling $z_0$ and $\statz_0$ optimally and taking $v_0 = \statv_0$, we obtain
    \begin{equation}
        \E[\norm{z_t - \statz_t}^2]
        \lesssim \E[\norm{\delta_t}^2 + \norm{\eta_t}^2] \le e^{O(\sqrt{L} t)} \E[\norm{\delta_0}^2 + \norm{\eta_0}^2] \lesssim e^{O(\sqrt{L} t)}\, W_2^2(p,q) \lesssim W_2^2(p,q)\,, \label{eq:nu_vs_stat_W2}
    \end{equation}
    recalling that $t\le \Tcorr \lesssim 1/\sqrt{L}$ by hypothesis.
\end{proof}

\noindent It remains to bound $\TV(\bs p\Pulds^N,\bs p\Puld^N)$. 
    
\begin{lem}\label{lem:mut_nut}
If $\Tcorr\lesssim 1/\sqrt{L}$, then
\begin{align*}
    \TV(\bs p\Pulds^N, \bs p\Puld^N)
    &\lesssim \sqrt{\KL(\bs p\Puld^N \mmid\bs p\Pulds^N)} \\
    &\lesssim L^{3/4} \Tcorr^{1/2} \, W_2(p,q) + L^{-1/4} \Tcorr^{1/2} \esc + L^{3/4} \Tcorr^{1/2} d^{1/2} h\,.
\end{align*}
\end{lem}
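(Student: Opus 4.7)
The plan is to mirror the proof of Lemma~\ref{lem:mut_nut_over} for the overdamped corrector, with two key modifications coming from the underdamped dynamics: (i) Girsanov picks up an extra factor of $1/\fric \asymp 1/\sqrt L$ from inverting the larger diffusion coefficient $\sqrt{2\fric}$ on the velocity component, and (ii) the discretization error on the position component is smaller by a factor of $h$, because positions in the underdamped diffusion evolve via the smooth relation $\D z_t = v_t \, \D t$ rather than being driven directly by Brownian noise.

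I would first apply Pinsker's inequality to reduce to bounding $\KL(\bs p \Puld^N \mmid \bs p \Pulds^N)$. By Girsanov's theorem on path space (justified via an approximation argument as in~\cite{Chenetal23diffmodels}) combined with the data-processing inequality, and noting that the friction term $-\fric v$ cancels in the drift difference on the $v$-component, I obtain
\begin{equation*}
\KL(\bs p \Puld^N \mmid \bs p \Pulds^N) \lesssim \frac{1}{\fric}\sum_{k=0}^{N-1}\int_{kh}^{(k+1)h} \E\bigl[\norm{\nabla U(z_u) + s(z_{kh})}^2\bigr]\,\D u\,,
\end{equation*}
where the expectation is taken along the continuous trajectory $(z_t, v_t)$ started at $\bs p$. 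Inserting the coupled stationary process $\statz$ as an intermediate, I would then decompose
\begin{equation*}
\norm{\nabla U(z_u) + s(z_{kh})}^2 \lesssim \norm{\nabla U(z_u) - \nabla U(\statz_u)}^2 + \norm{\nabla U(\statz_u) - \nabla U(\statz_{kh})}^2 + \norm{\nabla U(\statz_{kh}) + s(\statz_{kh})}^2 + \norm{s(\statz_{kh}) - s(z_{kh})}^2\,.
\end{equation*}
By Lipschitzness of $\nabla U$ and $s$ (Assumptions~\ref{ass:lip_score} and~\ref{ass:lip_estimate}) combined with Lemma~\ref{lem:nu_vs_stat}, the first and fourth terms each contribute $O(L^2 W_2^2(p, q))$ in expectation, while the third term is at most $\esc^2$ by Assumption~\ref{ass:score_error} since $\statz_{kh} \sim q$.

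The main obstacle is bounding the second term, $\E[\norm{\statz_u - \statz_{kh}}^2]$, and this is precisely where the improvement over Lemma~\ref{lem:mut_nut_over} arises. In the overdamped case the analogous bound is $O(dh)$, because the position SDE is driven directly by Brownian motion. In the underdamped case, however, $\statz_u - \statz_{kh} = \int_{kh}^u \statv_s \, \D s$, so by Cauchy--Schwarz and the fact that $\statv_s \sim \gamma_d$ at stationarity (hence $\E[\norm{\statv_s}^2] = d$), I obtain
\begin{equation*}
\E[\norm{\statz_u - \statz_{kh}}^2] \le h \int_{kh}^u \E[\norm{\statv_s}^2]\,\D s \lesssim h^2 d\,,
\end{equation*}
which is smaller by a full factor of $h$ and is the source of the improved dimension dependence. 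Substituting the four bounds and summing over the $N$ intervals of total length $\Tcorr$, I get
\begin{equation*}
\KL(\bs p \Puld^N \mmid \bs p \Pulds^N) \lesssim \frac{\Tcorr}{\sqrt L}\bigl(L^2 W_2^2(p, q) + L^2 h^2 d + \esc^2\bigr) = L^{3/2}\Tcorr\,W_2^2(p, q) + L^{3/2}\Tcorr d h^2 + \frac{\Tcorr \esc^2}{\sqrt L}\,,
\end{equation*}
and Pinsker yields the three-term bound on $\TV(\bs p \Pulds^N, \bs p \Puld^N)$ claimed in the lemma.
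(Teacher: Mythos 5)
Your proof is correct and essentially identical to the paper's. You apply Girsanov with the $1/\fric$ prefactor, telescope the drift difference through the coupled stationary process $\statz$, bound the coupling terms via Lemma~\ref{lem:nu_vs_stat}, bound the score-error term by $\esc^2$, and obtain $\E[\norm{\statz_u - \statz_{kh}}^2]\lesssim dh^2$ from $\statz_u - \statz_{kh}=\int_{kh}^u\statv_s\ds$ together with Cauchy--Schwarz and $\statv_s\sim\gamma_d$ — exactly the paper's argument and exactly the source of the $h^2$ (rather than $h$) improvement over the overdamped case. The only (cosmetic) discrepancy is a sign: you write $\nabla U(z_u)+s(z_{kh})$ where the paper writes $s(z_{kh})-\nabla U(z_u)$; since $s\approx\nabla\ln q=-\nabla U$, your sign is the self-consistent one, and it matches the paper's eventual use of Assumption~\ref{ass:score_error} on the term $s(\statz_{kh})+\nabla U(\statz_{kh})$ (which the paper also writes with the opposite sign).
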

\begin{proof}
    As $(z, v)$ and $(\statz, \statv)$ are driven by the same Brownian motion, 
    by Girsanov's theorem\footnote{Again, we can avoid checking Novikov's condition using the approximation argument of~\cite{Chenetal23diffmodels}.} and the data processing inequality we have
    \begin{equation}
        \KL(\bs p\Puld^N\mmid\bs p\Pulds^N) \lesssim \frac{1}{\fric} \sum^{N-1}_{k=0} \int^{(k+1)h}_{kh} \E[\norm{s(z_{kh}) - \nabla U(z_u)}^2] \, \D u \,.
    \end{equation}
    We can decompose the integrand as follows:
    \begin{align}
        \E[\norm{s(z_{kh}) - \nabla  U(z_u)}^2] &\lesssim \E\bigl[\norm{s(z_{kh}) - s(\statz_{kh})}^2 + \norm{s(\statz_{kh}) - \nabla U(\statz_{kh})}^2 \\
        &\qquad\qquad{} + \norm{\nabla U(\statz_{kh}) - \nabla  U(\statz_u)}^2  + \norm{\nabla  U(\statz_u) - \nabla  U(z_u)}^2\bigr] \\
        &\le L^2\E[\norm{z_{kh} - \statz_{kh}}^2] + \esc^2 + L^2\E[\norm{\statz_{kh} - \statz_u}^2] + L^2\E[\norm{\statz_u - z_u}^2] \\
        &\lesssim L^2 \,W_2^2(p, q) + \esc^2 + L^2 \E[\norm{\statz_{kh} - \statz_u}^2]\,,\label{eq:girsanov_integrand}
    \end{align}
    where we applied Lemma~\ref{lem:nu_vs_stat}.
    
    It remains to bound $\E[\norm{\statz_{kh} - \statz_u}^2]$. Note that
    \begin{equation}
        \E[\norm{\statz_{kh} - \statz_u}^2] = \E\Bigl[\Bigl\|\int^u_{kh} \statv_s \, \D s\Bigr\|^2\Bigr] \le h\int^u_{kh} \E[\norm{\statv_s}^2] \, \D s \le dh^2\,,
    \end{equation}
    where in the last step we used the fact that $\statv_s \sim \gamma_d$. Substituting this into \eqref{eq:girsanov_integrand}, we conclude that 
    \begin{align}
        \KL(\bs p\Puld^N\mmid \bs p\Pulds^N)
        &\lesssim L^{3/2} \Tcorr\, W_2^2(p,q) + L^{-1/2} \Tcorr\esc^2 + L^{3/2} dh^2 \Tcorr\,.
    \end{align}
    The claimed bound on $\TV(\bs p\Puld^N,\bs p\Pulds^N)$ follows by Pinsker's inequality.
\end{proof}

\begin{proof}[Proof of Theorem~\ref{thm:main_underdamped}]
    This is immediate from Lemma~\ref{lem:underdamp_reg} and Lemma~\ref{lem:mut_nut}, recalling that $\Tcorr \lesssim 1/\sqrt{L}$ so that the bound in Lemma~\ref{lem:underdamp_reg} dominates the $W_2(p,q)$ term in Lemma~\ref{lem:mut_nut}.
\end{proof}

\begin{rmk}
    In all other sections of this paper, we abuse notation as follows.
    Given a distribution $p$ on $\R^d$, we write $p\Pulds$ to denote the projection onto the $z$-coordinate of $\bs p \Pulds$, i.e., we view $\Pulds$ as a Markov kernel on $\R^d$ rather than on $\R^d\times \R^d$ (and similarly for $\Puld$).
\end{rmk}
\section{End-to-end analysis}\label{sec:end_to_end}








\begin{lem}[TV error after one round of predictor and corrector]\label{l:pc}
    Choose predictor step sizes $h_1,\ldots, h_{\Npred}$ as in Lemma~\ref{lem:pred} with $\Tpred=h_1+\cdots+h_{\Npred}\le 1/L$.
    That is, if $T-t_0-\Tpred \lesssim 1/L$, then we ensure that $h_{n+1} \le \frac{T-t_0 - h_1 - \cdots - h_n}{2}$ for all $n$, and if $T-t_0 \gtrsim 1/L$, then we can take $h_1 = \cdots = h_N$.
    Let $\hpred \deq \max_{1\le n\le \Npred} h_n$ and abbreviate $\Pode^{(\Npred)} \deq \Podeth{t_0}{h_1,\dotsc,h_{\Npred}}$ (and similarly for $\Podes$).
    \begin{enumerate}
        \item Consider running the overdamped Langevin corrector for time $\Tcorr\asymp 1/L$, step size $\hcorr$, and stationary distribution $q_{t_0}\Pode^{(\Npred)} = q_{t_0+\Tpred}$; set $\Ncorr = \Tcorr/\hcorr$. Then,
        \begin{align*}
            \TV(p\Podes^{(\Npred)}\Plans^{\Ncorr},\, q_{t_0+\Tpred}) 
            &\le
            \TV(p,q_{t_0})
            + O\Bigl( L\sqrt d \, \hpred + \sqrt{\Lmax d\hcorr}  + \frac{\esc}{\sqrt{\Lmax}} \Bigr)\,.
        \end{align*}
        \item Consider running the underdamped Langevin corrector for time $\Tcorr\asymp 1/\sqrt L$, step size $\hcorr$, and stationary distribution $q_{t_0}\Pode^{(\Npred)} = q_{t_0+\Tpred}$; set $\Ncorr = \Tcorr/\hcorr$. Then,
        \begin{align*}
            \TV(p\Podes^{(\Npred)}\Pulds^{\Ncorr},\, q_{t_0+\Tpred}) 
            &\le
            \TV(p,q_{t_0})
            + O\Bigl( L\sqrt d \, \hpred + \sqrt{\Lmax d}\,\hcorr  + \frac{\esc}{\sqrt{\Lmax}} \Bigr)\,.
        \end{align*}
    \end{enumerate}
\end{lem}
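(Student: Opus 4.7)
The plan is to split the TV error into two pieces via the triangle inequality: the initial error $\TV(p,q_{t_0})$, which the algorithm inherits from the previous round, and a ``new'' error arising from this round's predictor and corrector steps. Concretely, insert the auxiliary distribution $q_{t_0}\Podes^{(\Npred)}\Plans^{\Ncorr}$ (resp.\ with $\Pulds^{\Ncorr}$) between $p\Podes^{(\Npred)}\Plans^{\Ncorr}$ and $q_{t_0+\Tpred}$. The first piece is then bounded by $\TV(p,q_{t_0})$ by the data processing inequality for TV (which applies to any Markov kernel, including the composition $\Podes^{(\Npred)}\Plans^{\Ncorr}$). So the only substantive work is bounding the second piece.

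For the second piece, the key observation is that $q_{t_0+\Tpred}$ is the stationary distribution of the corrector's \emph{continuous-time} Langevin kernel, so $q_{t_0+\Tpred} = q_{t_0+\Tpred}\Plan^{\Ncorr}$ (resp.\ $= q_{t_0+\Tpred}\Puld^{\Ncorr}$ after abusing notation to project onto the $z$-marginal). We may therefore rewrite
\[
\TV\bigl(q_{t_0}\Podes^{(\Npred)}\Plans^{\Ncorr},\, q_{t_0+\Tpred}\bigr)
= \TV\bigl(q_{t_0}\Podes^{(\Npred)}\Plans^{\Ncorr},\, q_{t_0+\Tpred}\Plan^{\Ncorr}\bigr)
\]
and apply the Wasserstein-to-TV regularization result of Theorem~\ref{thm:main_overdamped} (resp.\ Theorem~\ref{thm:main_underdamped}) with $p\leftarrow q_{t_0}\Podes^{(\Npred)}$ and $q\leftarrow q_{t_0+\Tpred}$. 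This reduces the task to controlling $W_2\bigl(q_{t_0}\Podes^{(\Npred)}, q_{t_0+\Tpred}\bigr) = W_2\bigl(q_{t_0}\Podes^{(\Npred)}, q_{t_0}\Pode^{(\Npred)}\bigr)$, which is exactly the quantity bounded by Lemma~\ref{lem:pred}. Regardless of which of the two cases in Lemma~\ref{lem:pred} applies (the hypothesis ensures one of them does, in view of the step-size prescription), the Wasserstein bound is at most $O(L^{1/2} d^{1/2}\hpred + \esc/L)$.

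Plugging this into Theorem~\ref{thm:main_overdamped} with $\Tcorr\asymp 1/L$ yields
\[
\sqrt L\cdot\bigl(L^{1/2} d^{1/2}\hpred + \esc/L\bigr) + \esc/\sqrt L + \sqrt{Ld\hcorr}
\lesssim L\sqrt d\,\hpred + \sqrt{Ld\hcorr} + \esc/\sqrt L\,,
\]
and plugging into Theorem~\ref{thm:main_underdamped} with $\Tcorr\asymp 1/\sqrt L$ yields the analogous bound with $\sqrt{Ld}\,\hcorr$ in place of $\sqrt{Ld\hcorr}$. Adding $\TV(p,q_{t_0})$ gives both parts of the lemma. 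The only delicate point, and likely the main obstacle to make rigorous, is verifying that the hypotheses of Lemma~\ref{lem:pred} are satisfied under the stated step-size schedule in both regimes (small versus moderate $T-t_0$), which is why the lemma's statement carries a case split on the step sizes; once that bookkeeping is handled, the triangle-inequality + stationarity + regularization chain is routine.
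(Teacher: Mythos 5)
Your proposal is correct and takes essentially the same approach as the paper: triangle inequality plus data-processing to isolate $\TV(p,q_{t_0})$, then apply the Wasserstein-to-TV regularization theorem (Theorem~\ref{thm:main_overdamped} or~\ref{thm:main_underdamped}) to the remainder with $W_2$ controlled by Lemma~\ref{lem:pred}. Your explicit invocation of stationarity ($q_{t_0+\Tpred} = q_{t_0+\Tpred}\Plan^{\Ncorr}$) is already baked into the statement of the corrector theorems and does not change the argument.
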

\begin{proof}
    By the triangle inequality and the data-processing inequality,
    \begin{align*}
        &\TV(p\Podes^{(\Npred)} \Plans^{\Ncorr}, \, q_{t_0+\Tpred}) \\
        &\qquad \le \TV(p\Podes^{(\Npred)} \Plans^{\Ncorr}, \, q_{t_0} \Podes^{(\Npred)} \Plans^{\Ncorr}) + \TV(q_{t_0}\Podes^{(\Npred)} \Plans^{\Ncorr}, \,q_{t_0+\Tpred}) \\
        &\qquad \le \TV(p, q_{t_0}) + \TV(q_{t_0}\Podes^{(\Npred)} \Plans^{\Ncorr}, \,q_{t_0+\Tpred})\,.
    \end{align*}
    For overdamped Langevin, applying Theorem~\ref{thm:main_overdamped},
    \begin{align}\label{eq:overdamped_corr}
        \TV(q_{t_0}\Podes^{(\Npred)} \Plans^{\Ncorr}, \,q_{t_0+\Tpred})
        &\lesssim \sqrt L\,W_2(q\Podes^{(\Npred)}, \, q_{t_0+\Tpred}) + \esc/\sqrt L + \sqrt{Ld\hcorr}\,.
    \end{align}
    For the Wasserstein term, Lemma~\ref{lem:pred} yields
    \begin{align*}
        W_2(q_{t_0} \Podes^{(\Npred)}, q_{t_0+\Tpred})
        &= W_2(q_{t_0} \Podes^{(\Npred)}, q_{t_0} \Pode^{(\Npred)})
        \lesssim \sqrt{Ld}\,\hpred + \frac{\esc}{\Lmax}\,.
    \end{align*}
    Combining these bounds yields the result for the overdamped corrector.
    For the underdamped corrector, we modify~\eqref{eq:overdamped_corr} by replacing the use of Theorem~\ref{thm:main_overdamped} with Theorem~\ref{thm:main_underdamped}.
\end{proof}

We also need the following lemma on the convergence of the OU process.

\begin{lem}\label{lem:ou_conv}
    Let ${(\quo_t)}_{t\ge 0}$ denote the marginal law of the OU process started at $\quo_0 = \qdata$. Then, for all $T\gtrsim 1$, it holds that
    \begin{align*}
        \TV(\quo_T, \gamma^d)
        &\lesssim (\sqrt d+\mf m_2)\exp(-T)\,.
    \end{align*}
\end{lem}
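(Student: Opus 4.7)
The plan is to use the fact that the OU process admits an explicit solution: if $x_0^\rightarrow \sim \qdata$, then conditional on $x_0^\rightarrow$, the law of $x_T^\rightarrow$ is $\mc N(e^{-T} x_0^\rightarrow, (1-e^{-2T})\,I_d)$. Hence $\quo_T$ is a mixture of Gaussians, namely $\quo_T = \int \mc N(e^{-T} x, \sigma_T^2\, I_d) \,\dee \qdata(x)$ where $\sigma_T^2 \deq 1 - e^{-2T}$. This representation is the key object of study.

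First I would pass from total variation to KL divergence via Pinsker's inequality, so it suffices to show $\KL(\quo_T \mmid \gamma^d) \lesssim (d + \mf m_2^2)\exp(-2T)$. Next, by the joint convexity of KL divergence in its arguments, applied to the mixture representation against the trivial mixture representation of $\gamma^d$, I would bound
\begin{align*}
\KL(\quo_T \mmid \gamma^d) \le \E_{x_0 \sim \qdata}\bigl[\KL\bigl(\mc N(e^{-T} x_0, \sigma_T^2\, I_d) \bigm\Vert \mc N(0, I_d)\bigr)\bigr]\,.
\end{align*}

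The Gaussian KL in the integrand has the closed form
\begin{align*}
\KL\bigl(\mc N(e^{-T} x_0, \sigma_T^2\, I_d) \bigm\Vert \mc N(0, I_d)\bigr) = \tfrac{1}{2}\,\bigl\{ e^{-2T}\,\norm{x_0}^2 + d\,(\sigma_T^2 - 1 - \ln \sigma_T^2)\bigr\}\,.
\end{align*}
Setting $u \deq e^{-2T}$, the dimension-dependent factor is $\sigma_T^2 - 1 - \ln \sigma_T^2 = -u - \ln(1-u)$, and the Taylor expansion $-\ln(1-u) = u + u^2/2 + u^3/3 + \cdots$ yields $-u - \ln(1-u) \le u^2 = e^{-4T}$ for $T \gtrsim 1$ (so that $u \le 1/2$). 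Taking the expectation over $x_0 \sim \qdata$ and using Assumption~\ref{ass:second_moment} gives
\begin{align*}
\KL(\quo_T \mmid \gamma^d) \le \tfrac{1}{2}\,\bigl(e^{-2T}\,\mf m_2^2 + d\, e^{-4T}\bigr) \lesssim (\mf m_2^2 + d)\, e^{-2T}\,.
\end{align*}
Pinsker's inequality then delivers the claimed bound $\TV(\quo_T, \gamma^d) \lesssim (\sqrt d + \mf m_2)\,\exp(-T)$.

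There is no serious obstacle here; the argument is essentially bookkeeping around the explicit OU transition kernel. The only minor care needed is in controlling the dimension-dependent term in the Gaussian KL, where the naive bound $\sigma_T^2 - 1 - \ln\sigma_T^2 \asymp e^{-2T}$ would worsen the rate; it is important to observe the cancellation that gives $e^{-4T}$ instead, which is what allows the $\sqrt{d}$ (rather than $\sqrt{d}\, e^{-T/2}$-type) scaling after Pinsker.
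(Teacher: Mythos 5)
Your proof is correct, and it takes a genuinely different route from the alternative argument the paper gives. You compute directly with the explicit Gaussian transition kernel of the OU process, use convexity of KL to reduce to the Gaussian-versus-Gaussian KL, evaluate it in closed form, and Taylor-expand the $\sigma_T^2 - 1 - \ln\sigma_T^2$ term. The paper instead goes through a more ``soft'' chain of inequalities: Pinsker, then the short-time regularization estimate of Bobkov--Gentil--Ledoux ($\KL(p P_{t_0}^{\mathsf{LD}} \mmid q) \lesssim W_2^2(p,q)/t_0$ for $t_0 \lesssim 1/L$, applied here with $L=1$), then the exponential Wasserstein contraction of the OU flow, and finally a triangle inequality to bound $W_2(\qdata,\gamma^d)$. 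The paper's route is more modular and readily generalizes to any semigroup with a Wasserstein contraction plus short-time hypercontractivity; yours is more elementary and self-contained, exploiting the exact OU kernel, and is essentially the strategy of the reference \cite[Lemma 9]{chen2022improved} that the paper cites as its primary justification.

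One small inaccuracy in your closing remark: the ``naive'' bound $\sigma_T^2 - 1 - \ln\sigma_T^2 \lesssim e^{-2T}$ (obtained by bounding each term separately) would \emph{not} worsen the final rate. It would give $\KL \lesssim (\mf m_2^2 + d)\,e^{-2T}$, which after Pinsker still yields $\TV \lesssim (\mf m_2 + \sqrt d)\,e^{-T}$. The sharper bound $e^{-4T}$ from the cancellation is harmless but not needed; the claimed catastrophe of a $\sqrt{d}\,e^{-T/2}$-type rate does not arise. This does not affect the validity of your proof.
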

\begin{proof}
    This follows from~\cite[Lemma 9]{chen2022improved}. Alternatively, using the short-time regularization result of~\cite[Lemma 4.2]{bobkov2001hypercontractivity} for time $t_0 \asymp 1$ and the Wasserstein contraction of the OU process,
    \begin{align*}
        \TV(\quo_T, \gamma^d)
        &\lesssim \sqrt{\KL(\quo_T \mmid \gamma^d)}
        \lesssim \frac{W_2(\quo_{T-t_0}, \gamma^d)}{\sqrt{t_0}}
        \le \exp(-(T-t_0)) \,W_2(\qdata, \gamma^d)\,.
    \end{align*}
    The result follows from $W_2(\qdata,\gamma^d) \le W_2(\qdata, \delta_0) + W_2(\delta_0, \gamma^d) \le \mf m_2 +\sqrt d$.
\end{proof}

We now prove our main theorems.

\begin{proof}[Proof of Theorems~\ref{thm:pc_over} and~\ref{thm:pc_under}]
For $t\in [0,T]$, let $p_t \deq \law(\widehat x_t)$.
From Lemma~\ref{lem:ou_conv},
\begin{align*}
    \TV(p_0, q_0)
    &= \TV(\quo_T, \gamma^d)
    \lesssim (\sqrt d + \mf m_2) \exp(-T)\,.
\end{align*}

We divide our analysis according to the two stages of the algorithm.
In the first stage, after iterating Lemma~\ref{l:pc} for $N_0\asymp LT$ steps,
\begin{align*}
    \TV(p_{T-\hpred}, q_{T-\hpred})
    &\le \TV(p_0, q_0) + O\Bigl(L\sqrt d\,\hpred + \sqrt{Ld}\,\hcorr^\pow + \frac{\esc}{\sqrt L}\Bigr) \times N_0 \\
    &\lesssim (\sqrt d + \mf m_2) \exp(-T) + L^2 T d^{1/2} \hpred + L^{3/2} T d^{1/2} \hcorr^\pow + L^{1/2}T \esc
\end{align*}
where $\pow = \frac{1}{2}$ if we use the overdamped corrector and $\pow = 1$ if we use the underdamped corrector.
Applying the second part of Lemma~\ref{l:pc} for the second stage of the algorithm, we then conclude that
\begin{align*}
    \TV(p_{T-\delta}, q_{T-\delta})
    &\lesssim (\sqrt d + \mf m_2) \exp(-T) + L^2 T d^{1/2} \hpred + L^{3/2} T d^{1/2} \hcorr^\pow + L^{1/2}T \esc\,.
\end{align*}
Finally, we note that if we take $\de \asymp \fc{\ep^2}{L^2\,(d\vee \mf m_2^2)}$, then by~\cite[Lemma 6.4]{leelutan23sgmgeneral}, $\TV(q_{T-\de},q_T)\le \ep$; a triangle inequality thus finishes the proof.
\end{proof}

\begin{rmk}
    Alternatively, instead of taking geometrically decreasing step sizes and employing early stopping, we could split the algorithm into two stages: for time $t < T-\hpred$, we take constant step size $\hpred$, and for time $t > T-\hpred$, we use a smaller constant step size $h'$ as required if working with the original score perturbation lemma (see Remark~\ref{r:sp}).
\end{rmk}
\section{Numerical experiments}
\label{sec:exp}

In this section, we provide preliminary numerical experiments to illustrate our theory. We implement {\DPUM} on a toy model that is not log-concave (mixture of Gaussians). 

\textbf{Setup.} The target distribution is a mixture of five Gaussians in dimension 5. On Figures~\ref{fig:667} and~\ref{fig:42}, the red stars represent the means of the Gaussians and the red ellipses around the stars represent the variances of the Gaussians. We start by sampling 500 independent points (in blue) from a standard Gaussian. Then, we run {\DPUM} from the blue dots over 300 iterations and plot the two first coordinates of the dots at iterations 0, 100, 200 and 300.  

\textbf{Parameters.} We use a closed form formula for the score along the forward process. The step size of the predictor is $0.01$ and the step size of the corrector is $0.001$. The corrector consists in 3 steps of the underdamped Langevin algorithm. In the latter algorithm, we initialize the velocity as a centered Gaussian random variable with standard deviation $0.001$ and set the parameter $\gamma$ to $0.01$.

\textbf{Observations.} We observe the expected behavior: although the target distribution is highly non-log-concave, {\DPUM} is able to provide samples (approximately) from the target distribution. In particular, the initial Gaussian distribution splits in clusters that will fit each component of the target mixture of Gaussians. Recall that we experiment without score error but with discretization error: our numerical results illustrate the common wisdom that score knowledge along the forward process can replace convexity assumptions. In particular, we observe that even isolated, low probability components of the Gaussian mixture, are recovered by {\DPUM}.

\begin{figure}[ht!]
 \includegraphics[width=0.48\linewidth]{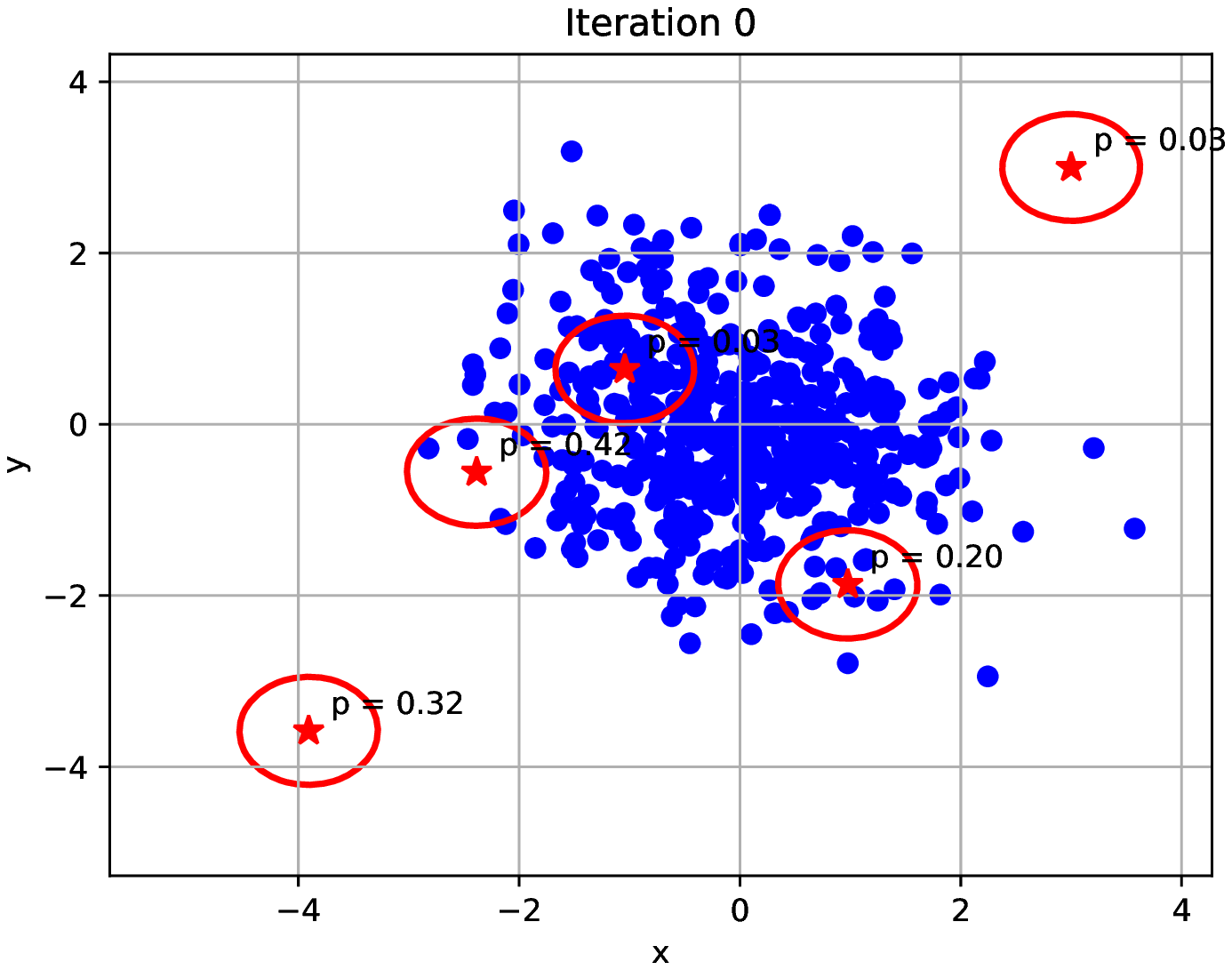} \hfill
  \includegraphics[width=0.48\linewidth]{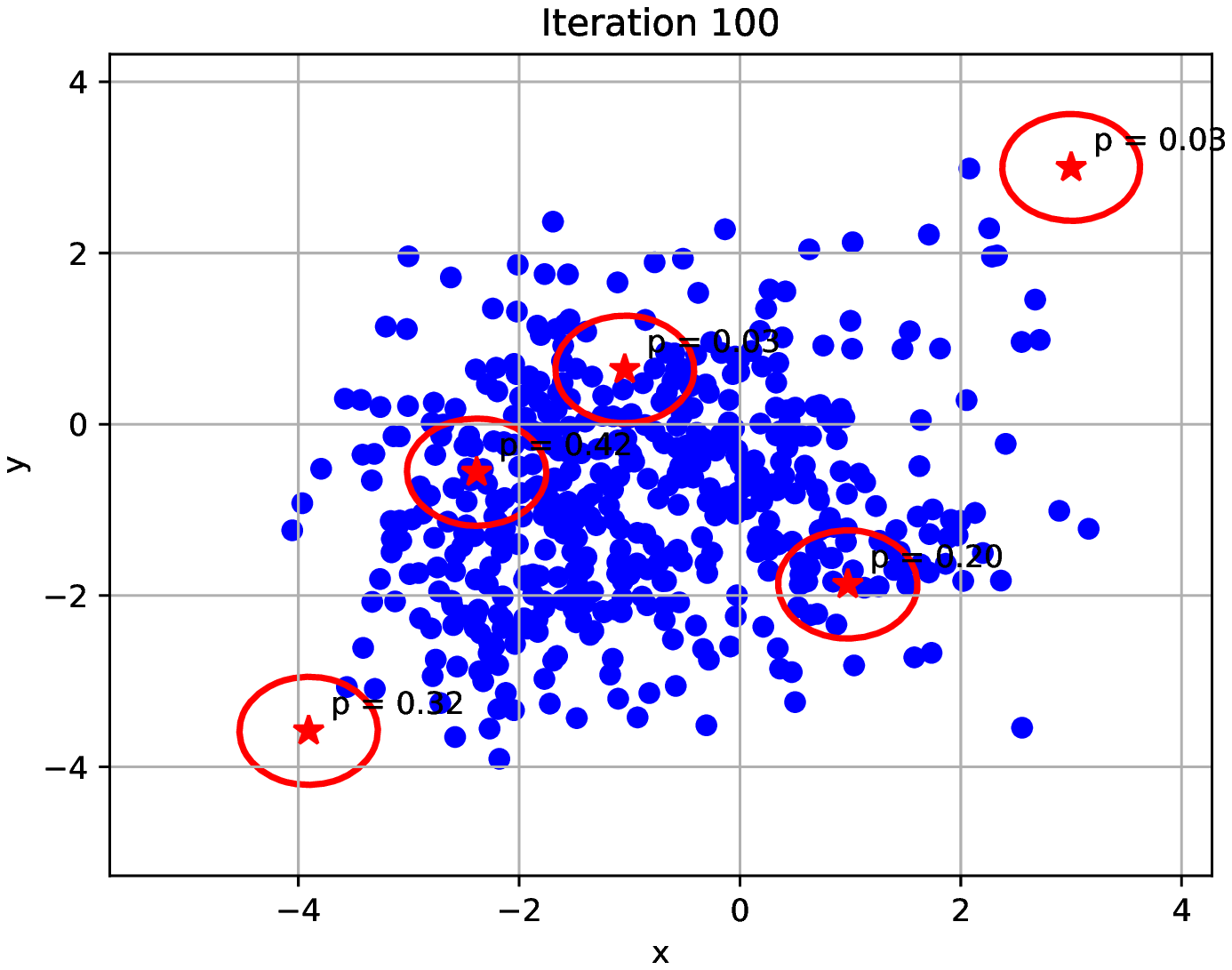}

    \includegraphics[width=0.48\linewidth]{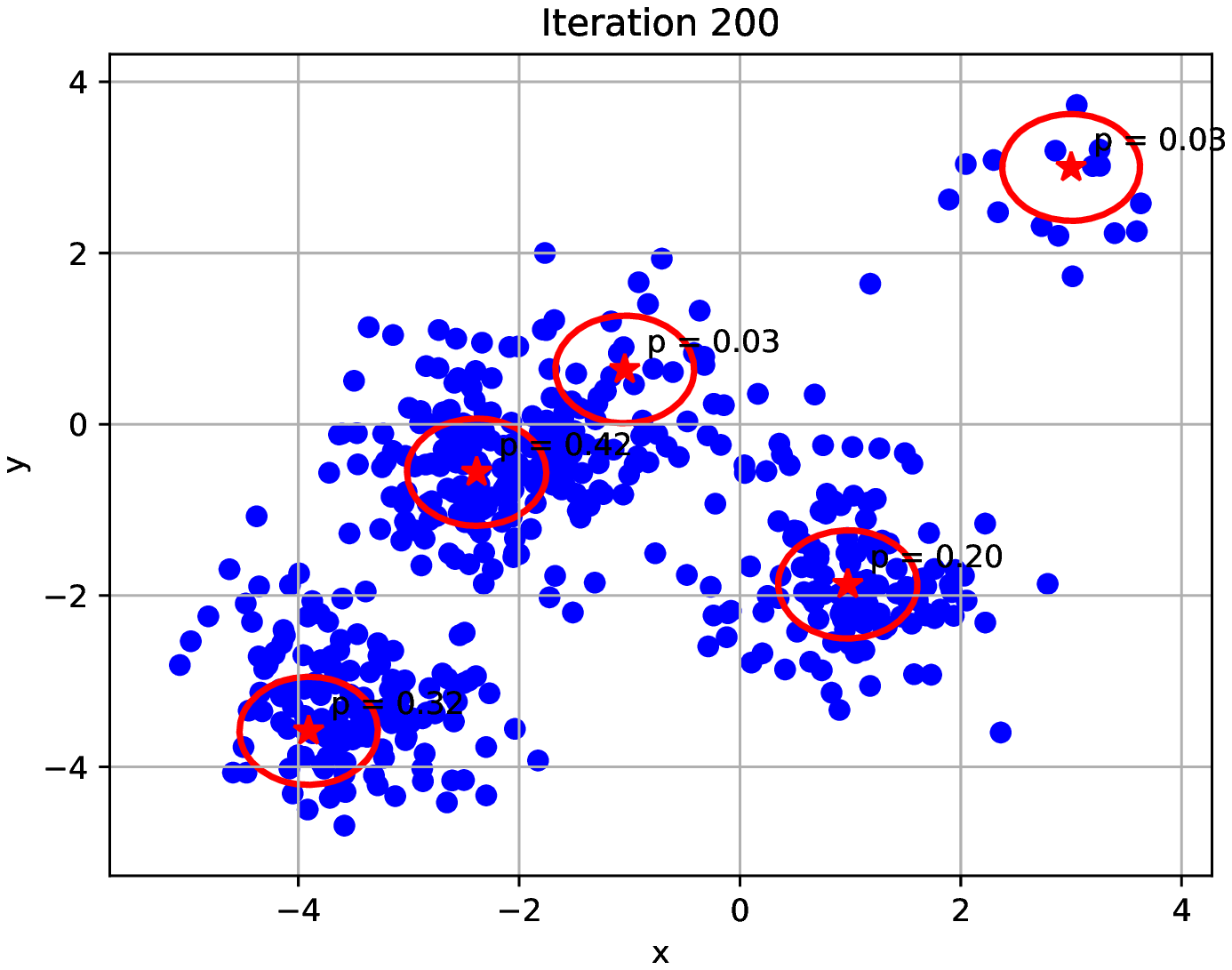} \hfill
  \includegraphics[width=0.48\linewidth]{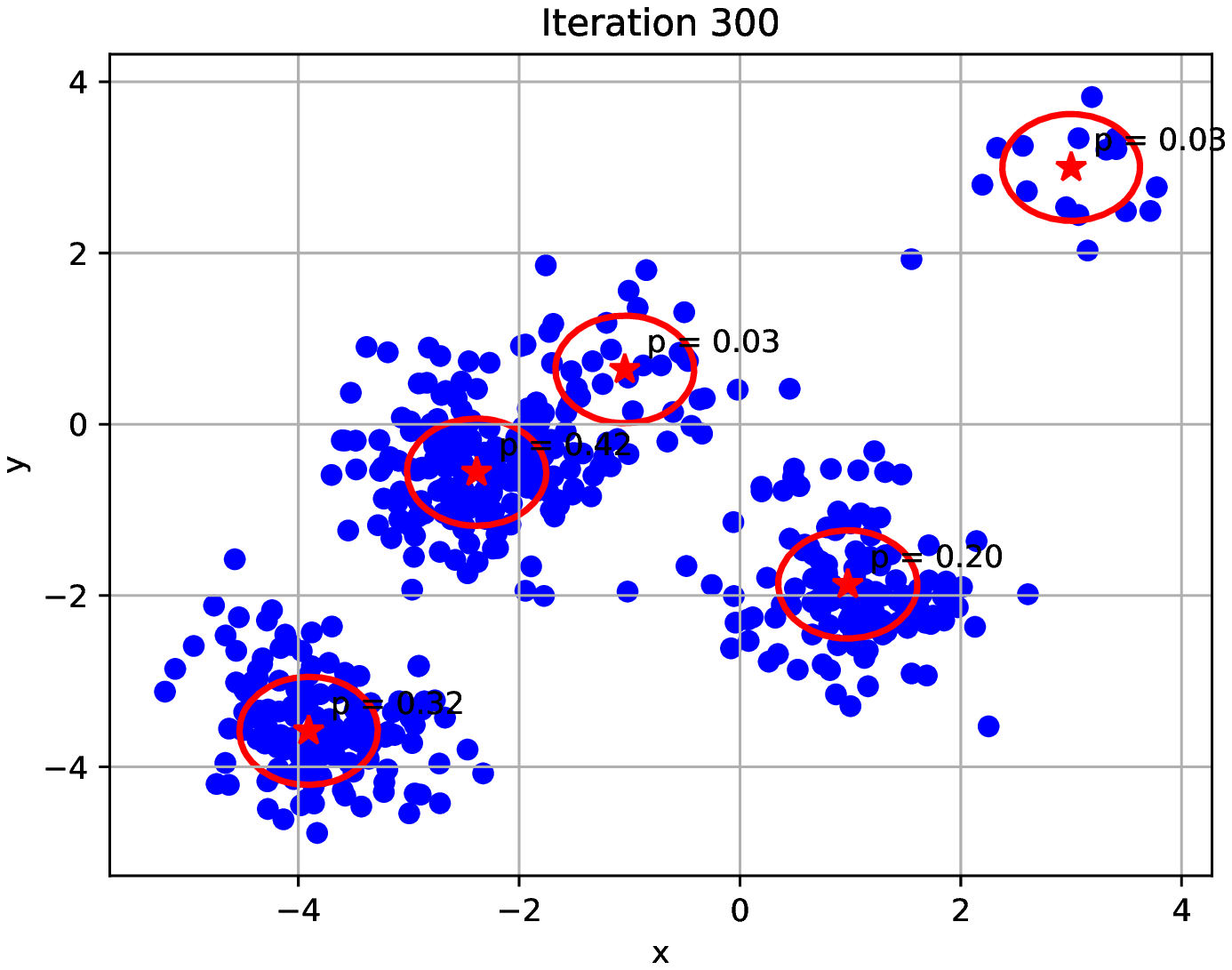}
    \caption{A realization of {\DPUM} for a mixture of Gaussians.}
     \label{fig:667}
\end{figure}

\begin{figure}[ht!]
 \includegraphics[width=0.48\linewidth]{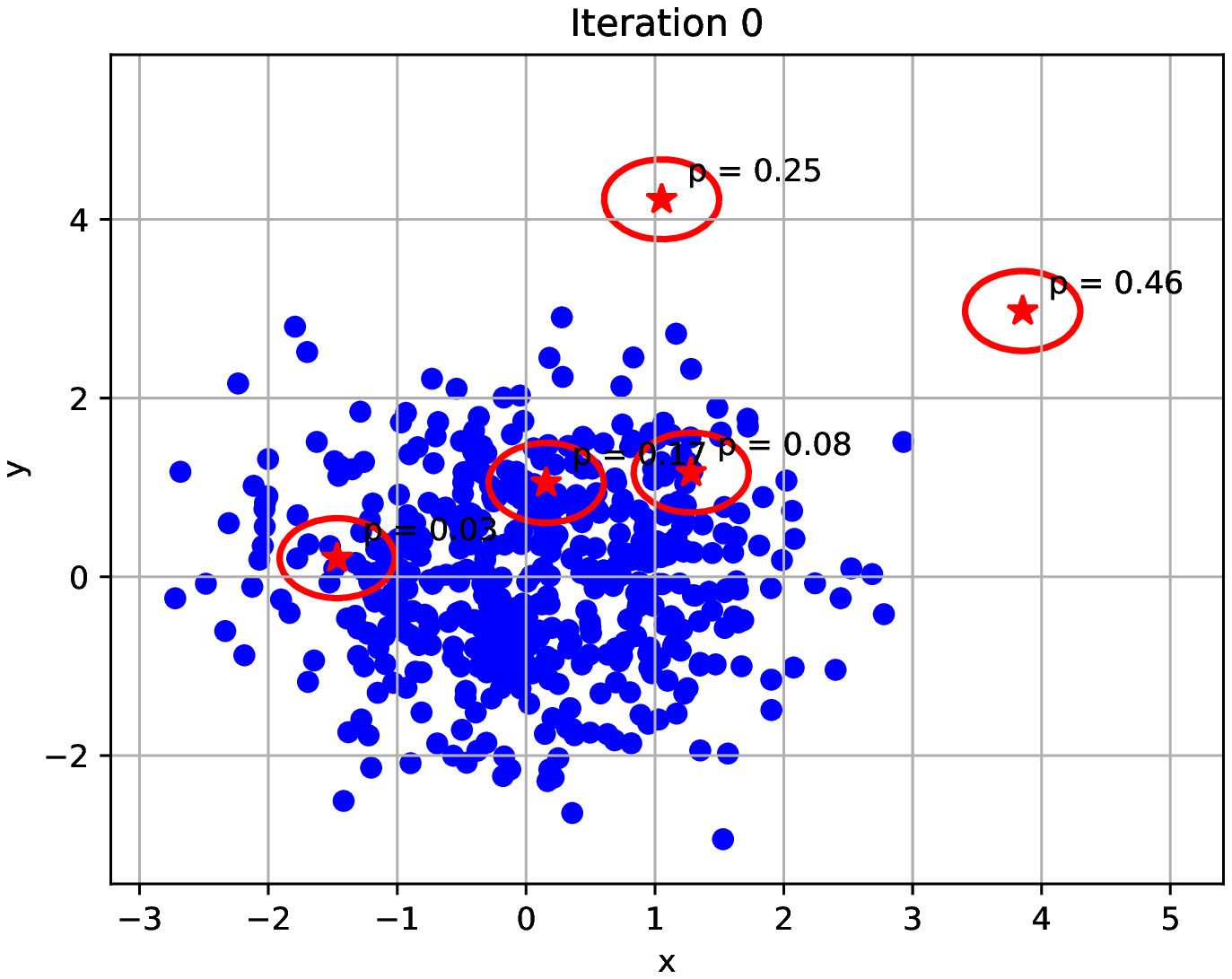} \hfill
  \includegraphics[width=0.48\linewidth]{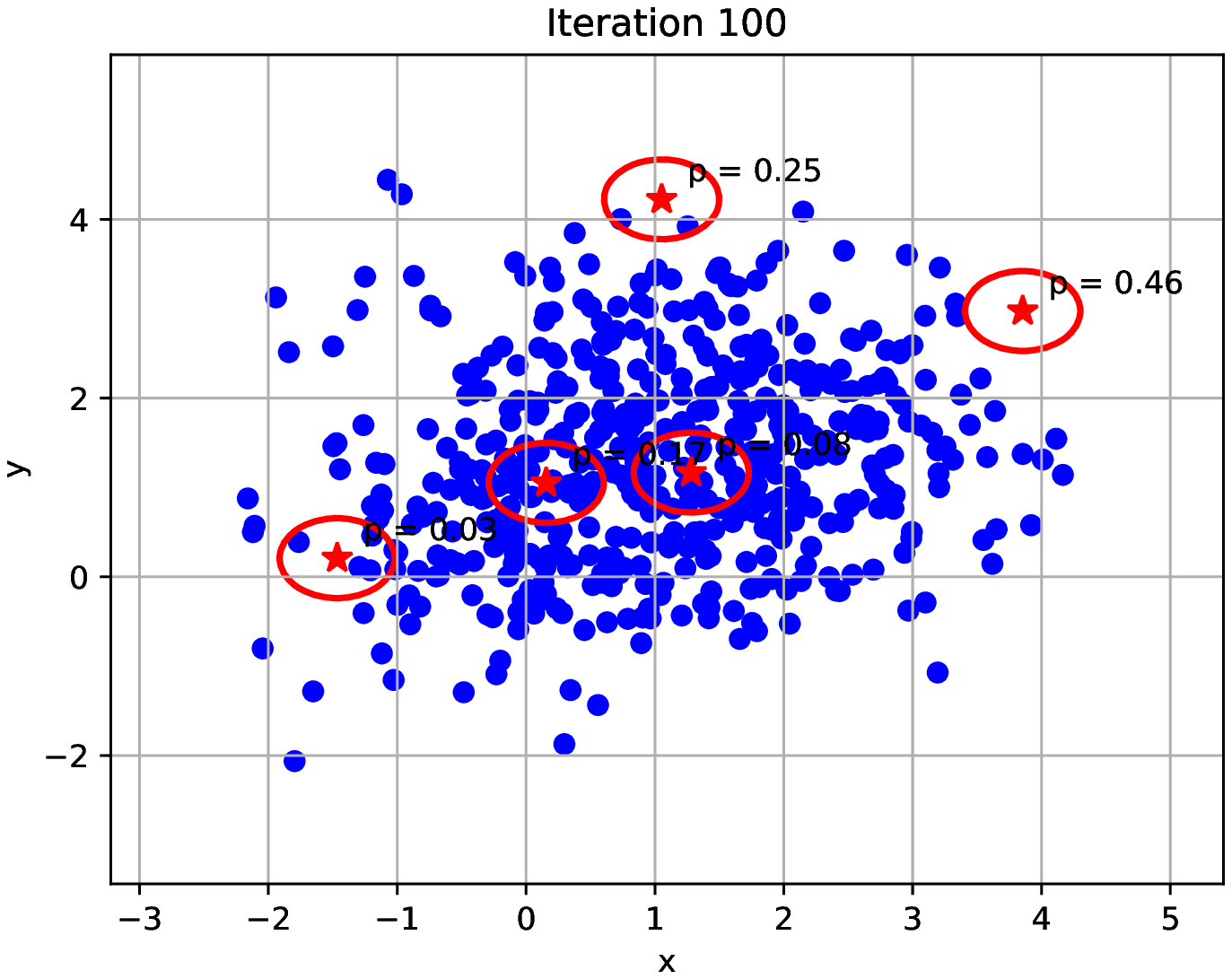}
  
    \includegraphics[width=0.48\linewidth]{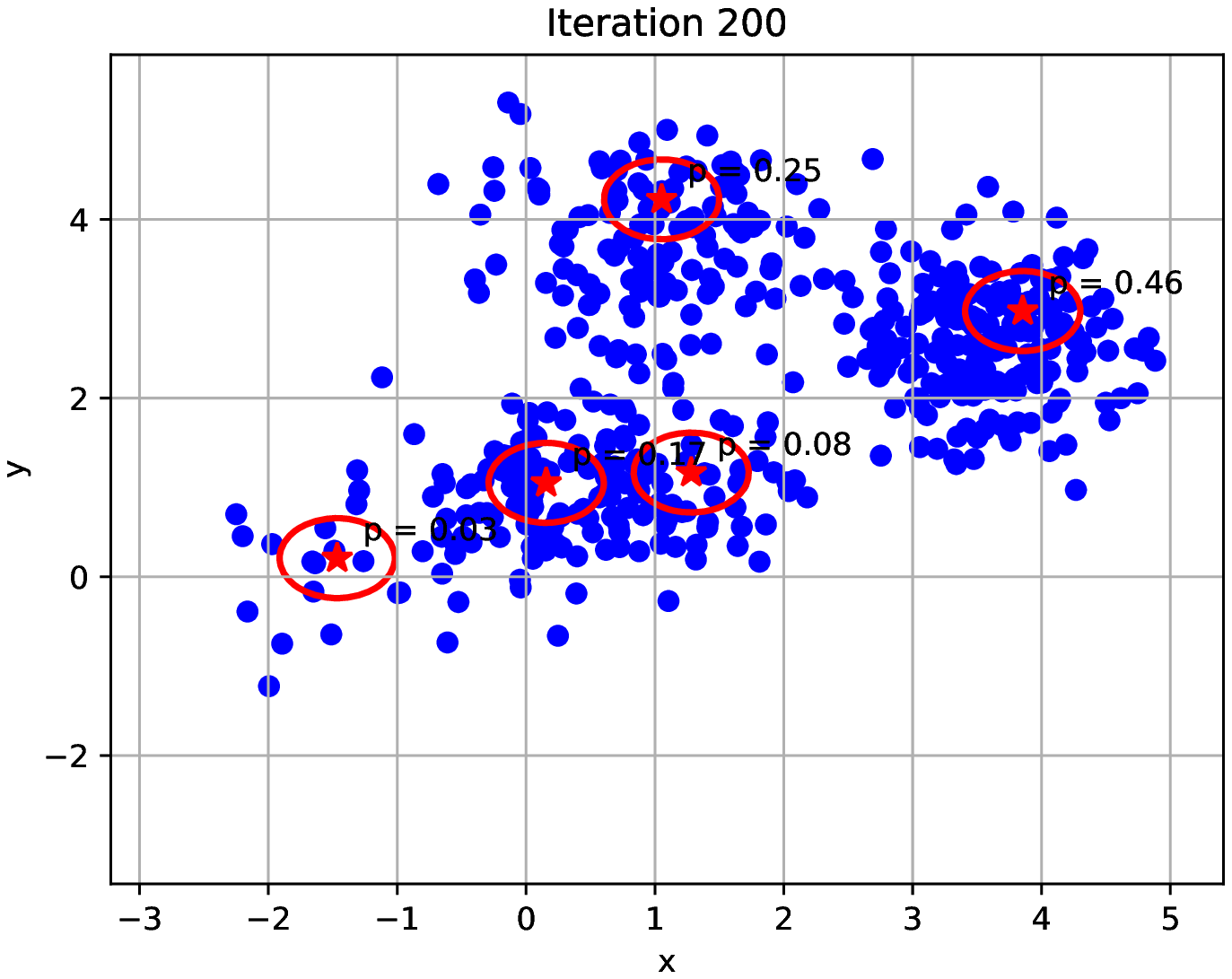} \hfill
  \includegraphics[width=0.48\linewidth]{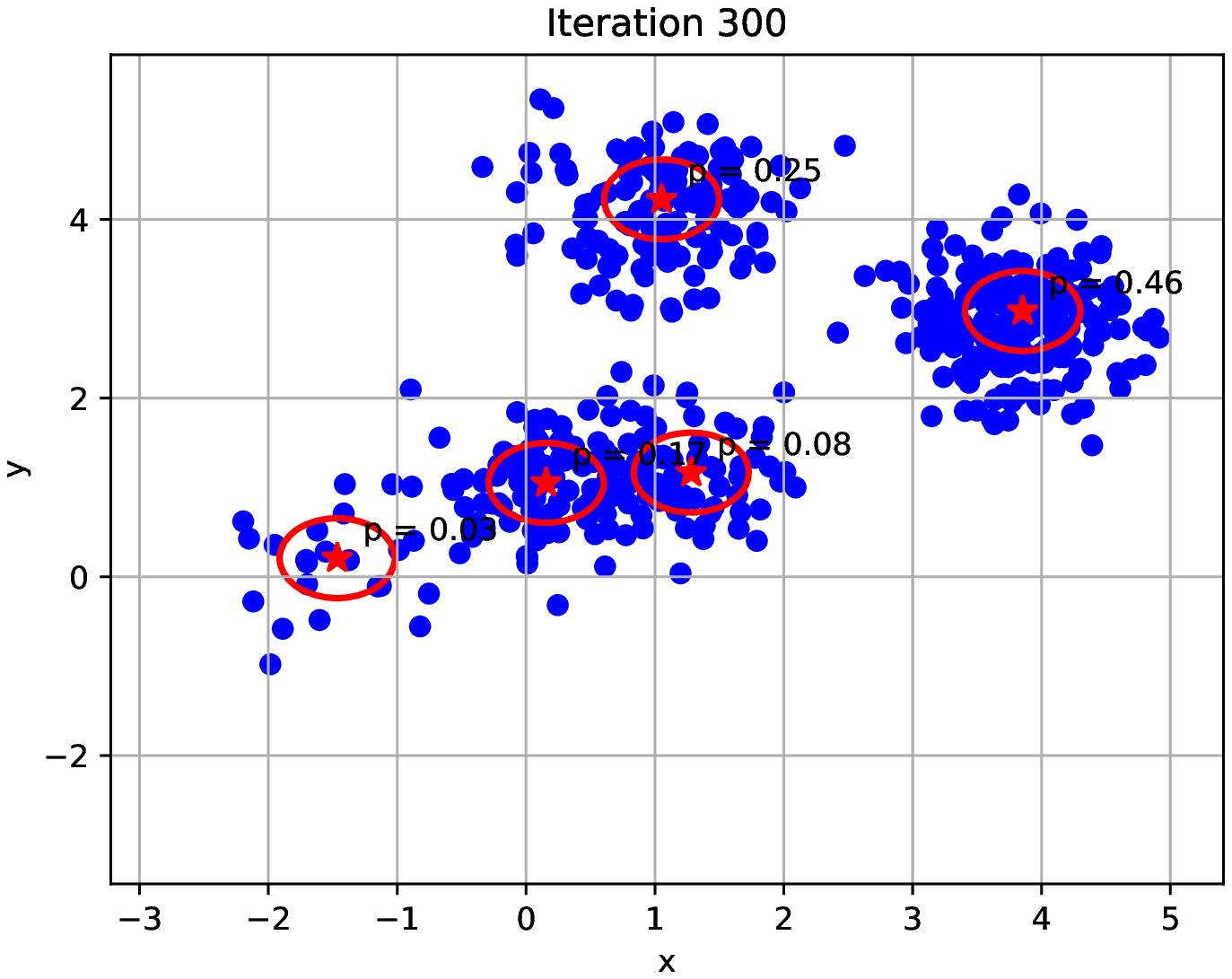}
    \caption{A realization of {\DPUM} for another mixture of Gaussians.}
        \label{fig:42}
\end{figure}

\printbibliography

\end{document}